\documentclass{article}

\PassOptionsToPackage{numbers, compress}{natbib}

\usepackage[main, preprint]{neurips_2026}

\usepackage[utf8]{inputenc}
\usepackage[T1]{fontenc}

\usepackage{float}

\usepackage{xcolor}
\usepackage[
    colorlinks=true,
    linkcolor=blue!55!black,
    citecolor=green!45!black,
    urlcolor=purple!60!black,
    filecolor=magenta!60!black,
    bookmarks=true,
    bookmarksopen=true,
    bookmarksnumbered=true,
    unicode=true
]{hyperref}

\usepackage{url}
\usepackage{booktabs}     
\usepackage{amsmath,amssymb,amsfonts}
\usepackage{dsfont}       
\usepackage{amsthm}       
\usepackage{mathtools}    
\usepackage{nicefrac}     
\usepackage{microtype}    

\theoremstyle{plain}
\newtheorem{theorem}{Theorem}[section]
\newtheorem{lemma}[theorem]{Lemma}

\newtheorem{corollary}[theorem]{Corollary}

\theoremstyle{definition}
\newtheorem{definition}[theorem]{Definition}

\theoremstyle{remark}

\usepackage{graphicx}

\usepackage{tabularx}
\usepackage{multirow}

\usepackage[ruled,vlined]{algorithm2e}

\usepackage[acronym, nonumberlist]{glossaries}
\glsdisablehyper

\makeglossaries

\newacronym{cp}{CP}{conformal prediction}
\newacronym{scp}{SCP}{split conformal prediction}

\newacronym{som}{SOM}{Self-Organizing Map}
\newacronym{bmu}{BMU}{best-matching unit}
\newacronym{cdf}{CDF}{cumulative distribution function}
\newacronym{ks}{KS}{Kolmogorov-Smirnov}

\newacronym{gbr}{GBR}{Gradient Boosting Regressor}
\newacronym{pca}{PCA}{principal component analysis}
\newacronym{knn}{KNN}{k-nearest neighbors}
\newacronym{sgd}{SGD}{stochastic gradient descent}

\newacronym{lac}{LAC}{least ambiguous set-valued classifier}
\newacronym{aps}{APS}{adaptive prediction sets}
\newacronym{raps}{RAPS}{regularized adaptive prediction sets}
\newacronym{saps}{SAPS}{sorted adaptive prediction sets}

\newacronym{cqr}{CQR}{conformalized quantile regression}
\newacronym{chr}{CHR}{conformal histogram regression}
\newacronym{nfcp}{NFCP}{normalizing-flow conformal prediction}
\newacronym{rcp}{RCP}{rectified conformal prediction}

\newacronym{lcp}{LCP}{localized conformal prediction}
\newacronym{rlcp}{RLCP}{randomly localized conformal prediction}
\newacronym{socp}{SOCP}{Self-Organized Conformal Prediction}

\newacronym{wsc}{WSC}{worst-slab coverage}
\newacronym{ssc}{SSC}{size-stratified coverage}
\newacronym{ert}{ERT}{excess risk of the target}

\title{Self-Organized Conformal Prediction:\\Reducing Regional Coverage Gaps with Unsupervised Group Discovery}

\author{%
  \textbf{Louis Berthier}$^{1,2}$\quad
  \textbf{Ahmed Shokry}$^{1}$\quad
  \textbf{Maxime Moreaud}$^{2}$\quad
  \textbf{Guillaume Ramelet}$^{2}$\quad\and
  \textbf{Aymeric Dieuleveut}$^{1}$ \\[6pt]
  $^{1}$Centre de Mathématiques Appliquées, Ecole Polytechnique, Palaiseau, France\quad \\
  $^{2}$Manufacture Française des Pneumatiques Michelin, Clermont-Ferrand, France \\[2pt]
  \texttt{louis.berthier@polytechnique.edu}\\
}

\begin{document}

\maketitle

\begin{abstract}
  Conformal prediction guarantees marginal coverage, but a pooled calibration quantile can hide systematic undercoverage across heterogeneous regions of the feature space.
  We introduce \glsdisp{socp}{Self-Organized Conformal Prediction (SOCP)}, a calibration scheme that discovers input-space groups with an unsupervised \glsdisp{som}{Self-Organizing Map (SOM)} trained without calibration labels.
  At prediction time, the query's \glsdisp{bmu}{best-matching unit (BMU)} draws a calibration buffer from one cell, a fixed grid neighborhood, or a prototype-based enlargement.
  When fixed neighborhoods are too sparse, Regime~3 adds cells by prototype distance, using a global budget selected from training-cell occupancies and the planned calibration size before any calibration score is observed.
  The predictor and nonconformity score remain unchanged.
  Cell-only retrieval has exact cell-conditional validity, and each fixed union of cells has exact retrieved-set validity.
  Interpreting a neighborhood threshold at its central cell incurs an explicit \glsdisp{ks}{Kolmogorov-Smirnov (KS)} bias term.
  Across ten regression and classification benchmarks, \gls{socp} reduces the weighted coverage gap relative to pooled split conformal prediction on nine datasets.
  The mean relative change is $-14.3\%$, at a mean output-size change of $+3.0\%$.
  Under fixed-neighborhood retrieval, SO composition lowers the ten-seed mean WCovGap in $43$ of the $50$ dataset-score comparisons, while SO-SCP lowers it on average over paired seeds for every dataset at all three tested external partition granularities.
  These results provide a concise route to group-local calibration without supervised partitions or predictor retraining with a diagnostic toolkit, while keeping the cost and limits of locality explicit.
\end{abstract}

\section{Introduction}
\label{sec:introduction}

Marginally valid prediction sets can still be unreliable in the regions that matter.
A method may attain its target coverage on average while under-covering rare labels, low-density operating regimes, or subpopulations with larger prediction errors.
Pooled conformal calibration cannot reveal this imbalance because one global quantile averages score distributions across the feature space.
The same correction can be too small in difficult regions and unnecessarily large in easier ones.

\glsdisp{cp}{Conformal prediction (CP)} turns any fixed nonconformity score into prediction sets with distribution-free finite-sample marginal coverage under exchangeability~\citep{vovkAlgorithmicLearningRandom2005a,shaferTutorialConformalPrediction2008,angelopoulosGentleIntroductionConformal2022b}.
For a prediction set $C_\alpha$, this guarantee is
\[
    \mathbb{P}\{Y_{n+1}\in C_\alpha(X_{n+1})\}\geq 1-\alpha .
\]
It controls coverage after averaging over the feature distribution, so despite being model-agnostic, the guarantee remains only marginally valid.
The pointwise conditional target
\[
    \mathbb{P}\{Y_{n+1}\in C_\alpha(X_{n+1})\mid X_{n+1}=x\}\geq 1-\alpha
\]
is stronger, but exact distribution-free coverage of this form is impossible without essentially uninformative sets~\citep{vovkConditionalValidityInductive2012,leiDistributionfreePredictionBands2014,barberLimitsDistributionfreeConditional2020a}.
A practical method must choose a locality that is coarse enough to calibrate with finite data and fine enough to expose and reduce regional failures.

We introduce \emph{\glsdisp{socp}{Self-Organized Conformal Prediction (SOCP)}}, a conformal calibration scheme that makes that choice by learning the locality from the input geometry.
A \glsdisp{som}{Self-Organizing Map (SOM)}~\citep{kohonenSelfOrganizingMaps2001a} maps a fixed input representation to a finite grid of prototype cells.
The map is trained before calibration labels are used, and every calibration point is assigned to its \glsdisp{bmu}{best-matching unit (BMU)}.
For a test point, \gls{socp} retrieves the score calibration buffer from
(i) the query's \gls{bmu} cell,
(ii) a fixed neighborhood on the \gls{som} grid, or
(iii) a fixed prototype-based enlargement of that neighborhood.
The conformal threshold is computed from this buffer rather than from the full calibration set.
Only calibration is localized, so the predictor and the score remain unchanged across all regimes.

The three regimes expose the bias and variance of locality.
Cell-only retrieval gives exact cell-conditional validity, but a small cell can force an infinite threshold.
A fixed neighborhood pools more calibration scores and is exactly valid for each fixed retrieved union of cells.
When that threshold is interpreted at the central query cell, its coverage loss is bounded by a \gls{ks} discrepancy between the central-cell score distribution and the retrieved mixture.
When a fixed neighborhood is too sparse, Regime~3 augments it with cells ordered by prototype distance.
The method automatically selects a single global enlargement budget from the training \gls{bmu} occupancies and the planned calibration size, then freezes it before observing any calibration score.
The regime itself remains a user choice.

\textbf{Our contributions are as follows.}
\begin{enumerate}
    \item We introduce \gls{socp}, a model- and score-agnostic calibration procedure that discovers groups from unlabeled input geometry and assigns each query a fixed local score buffer, across all three regimes.
          For a fixed representation dimension, localization costs $O(K)$ in the number of \gls{som} cells, rather than $O(n_{\mathrm{cal}})$ for kernel localizers.
    \item We prove finite-sample guarantees for cell-only, fixed-neighborhood, and prototype-enlarged retrieval.
          Exact validity holds on the group used for calibration, while central-cell validity pays an explicit \gls{ks} bias.
          We also formalize the training-routed enlargement used in the experiments and an optional split-routed target-buffer extension.
    \item We evaluate five regression and five classification datasets with ten seeds.
          On external $k$-means partitions unseen during calibration, SO-SCP lowers the seed-paired weighted coverage gap on nine of ten datasets.
          The mean relative change is $-14.3\%$, at a mean output-size change of $+3.0\%$.
          Under fixed-neighborhood retrieval, SO composition lowers the ten-seed mean WCovGap in $43$ of the $50$ dataset-score comparisons.
          Averaged over paired seeds, SO-SCP also lowers WCovGap on every dataset with external partitions of $10$, $25$, or $50$ unseen groups.
    \item We provide per-cell coverage, size, occupancy, buffer, and score-distribution diagnostics.
          Together they distinguish variance from scarce calibration mass and bias from mismatched score distributions across pooled cells.
\end{enumerate}

\section{Related work}
\label{sec:related_work}

\paragraph{From marginal validity to local reliability.}
\glsdisp{cp}{Conformal prediction (CP)} provides distribution-free set-valued prediction under exchangeability~\citep{vovkAlgorithmicLearningRandom2005a,shaferTutorialConformalPrediction2008,angelopoulosGentleIntroductionConformal2022b}.
Its split form separates predictor fitting from calibration and works with modern black-box models~\citep{papadopoulosInductiveConfidenceMachines2002a,leiDistributionFreePredictiveInference2017}.
A pooled calibration quantile gives finite-sample marginal coverage, but coverage can still vary substantially across the feature space.
Exact pointwise conditional coverage would provide stronger local control, but cannot be obtained distribution-free without uninformative sets~\citep{vovkConditionalValidityInductive2012,leiDistributionfreePredictionBands2014,barberLimitsDistributionfreeConditional2020a}.
Methods between these endpoints typically modify the score, the calibration distribution, or the groups on which calibration is performed.
\gls{socp} follows the third route.
We compose it with scores from the first route and compare it against calibration baselines from the second.

\paragraph{Changing the score.}
Changing the nonconformity score can make the resulting prediction sets adaptive to the input.
For regression, \glsdisp{cqr}{conformalized quantile regression (CQR)} uses lower and upper conditional quantile estimates to adapt interval location and width~\citep{romanoConformalizedQuantileRegression2019b}, while \gls{chr} estimates the conditional response distribution with a histogram~\citep{sesiaConformalHistogramRegression2021}.
\Gls{nfcp} learns a transformation of the absolute residual~\citep{colomboNormalizingFlowsConformal2024}, while \gls{rcp} rescales a base score using an estimate of its conditional $(1-\alpha)$-quantile~\citep{plassierRectifyingConformityScores2025}.
For classification, \gls{lac} ranks labels through $1-\hat p_y(x)$, following the least-ambiguity principle~\citep{sadinleLeastAmbiguousSetvalued2019}.
\Gls{aps} accumulates ranked class probabilities~\citep{romanoClassificationValidAdaptive2020}, \gls{raps} adds a penalty beyond a chosen label rank~\citep{angelopoulosUncertaintySetsImage2021}, and \gls{saps} retains only the largest class probability together with label rank~\citep{huangConformalPredictionDeep2024}.
The rank-only ablation removes probability values and retains only randomized label rank~\citep{huangConformalPredictionDeep2024}.
These scores are complementary to \gls{socp}, which keeps the chosen score fixed and changes only the calibration buffer used to estimate its threshold.

\paragraph{Changing the calibration distribution.}
Localized conformal methods weight calibration observations according to their proximity to the query.
\citet{guanLocalizedConformalPrediction2022a} proposes \gls{lcp}, which recalibrates the level of a kernel-weighted score distribution to preserve finite-sample marginal validity and derives additional local guarantees under regularity conditions.
This flexibility requires a bandwidth and $O(n_{\mathrm{cal}})$ kernel evaluations for each query.
\citet{horeConformalPredictionLocal2023} show that centering a kernel directly at the query targets a nearby synthetic point.
Their \gls{rlcp} construction samples a random anchor around the query, which provides exact marginal validity together with relaxed local and covariate-shift guarantees.
\citet{hanSplitLocalizedConformal2023} move locality into the score through a kernel estimate of the conditional score distribution, after which one split-conformal threshold serves every query.
In contrast, \gls{socp} uses no test-time kernel weights.
It retrieves with a cost of $O(K)$ from a finite map whose cells, topological neighborhoods, and prototype order are fixed before calibration.

\paragraph{Changing the groups.}
Mondrian \gls{cp} calibrates separately within a fixed partition and gives finite-sample validity conditional on each populated group~\citep{vovkMondrianConfidenceMachine2003}.
Its main difficulty is granularity, since small groups yield noisy or infinite cutoffs.
\citet{dingClassConditionalConformalPrediction2023a} address this problem when the desired groups are classes.
They cluster classes with similar score behavior, obtain exact validity at cluster level, and control the transfer back to individual classes through within-cluster score-distribution discrepancies.
Our central-cell analysis uses the same type of \gls{ks} bridge in input space.
Other work formulates conditional coverage through covariate shifts or richer collections of groups~\citep{gibbsConformalPredictionConditional2024a,bairaktariKandinskyConformalPrediction2025,bhattacharyyaGroupweightedConformalPrediction2026}.
\gls{socp} makes a different restriction: an unsupervised map discovers a finite input-space partition, and calibration borrows only along a topology fixed by that map.

\paragraph{Evaluating conditional reliability.}
Conditional coverage diagnostics measure behavior that marginal coverage alone cannot identify.
CovGap~\citep{dingClassConditionalConformalPrediction2023a} and WCovGap~\citep{braunConditionalCoverageDiagnostics2025} summarize absolute coverage deviations over a partition, with WCovGap weighting each group by test mass.
They are diagnostics rather than validity guarantees.
Also, evaluation on a method's own partition could favor it by design.
As a result, we evaluate every calibrator on the same training-fitted external $k$-means partitions, using $K=25$ for the headline results and $K=10$ and $K=50$ to test granularity.
Additionally, we report three partition-free diagnostics: \gls{wsc} error, the \gls{ssc} gap, and \gls{ert}.

\paragraph{\gls{som} and local exchangeability.}
\glspl{som} map high-dimensional observations to a low-dimensional grid of prototype cells~\citep{kohonenSelforganizedFormationTopologically1982a,kohonenSelforganizingMap1990,kohonenSelfOrganizingMaps2001a}.
The \gls{bmu} map gives a hard partition, while the grid provides a fixed neighborhood relation for borrowing nearby scores.
This geometry resembles the motivation behind local exchangeability~\citep{campbellLocalExchangeability2022}, but our guarantees do not assume that observations are locally exchangeable.
Instead, we obtain exact validity for each fixed retrieved neighborhood.
The corresponding central-cell guarantee includes a \gls{ks} term that measures the difference between the cell's score distribution and the neighborhood mixture.

\section{Self-Organized Conformal Prediction}
\label{sec:methodology}

\glsdisp{socp}{Self-Organized Conformal Prediction (SOCP)} is a group-based calibration procedure for a pre-trained predictor and a fixed nonconformity score.
An unsupervised \gls{som} is trained without calibration labels on the fixed input representation, which partitions the feature space into prototype cells.
For a test point, its \gls{bmu} retrieves a calibration buffer from one cell, a fixed grid neighborhood, or a training-routed prototype-space enlargement.
The predictor and score remain unchanged.
The conformal threshold is computed from the retrieved buffer rather than the full calibration set.
Figure~\ref{fig:socp-pipeline} summarizes the workflow, and the rest of this section states what each retrieval regime guarantees.

\begin{figure}[ht]
  \centering
  \includegraphics[width=0.99\textwidth]{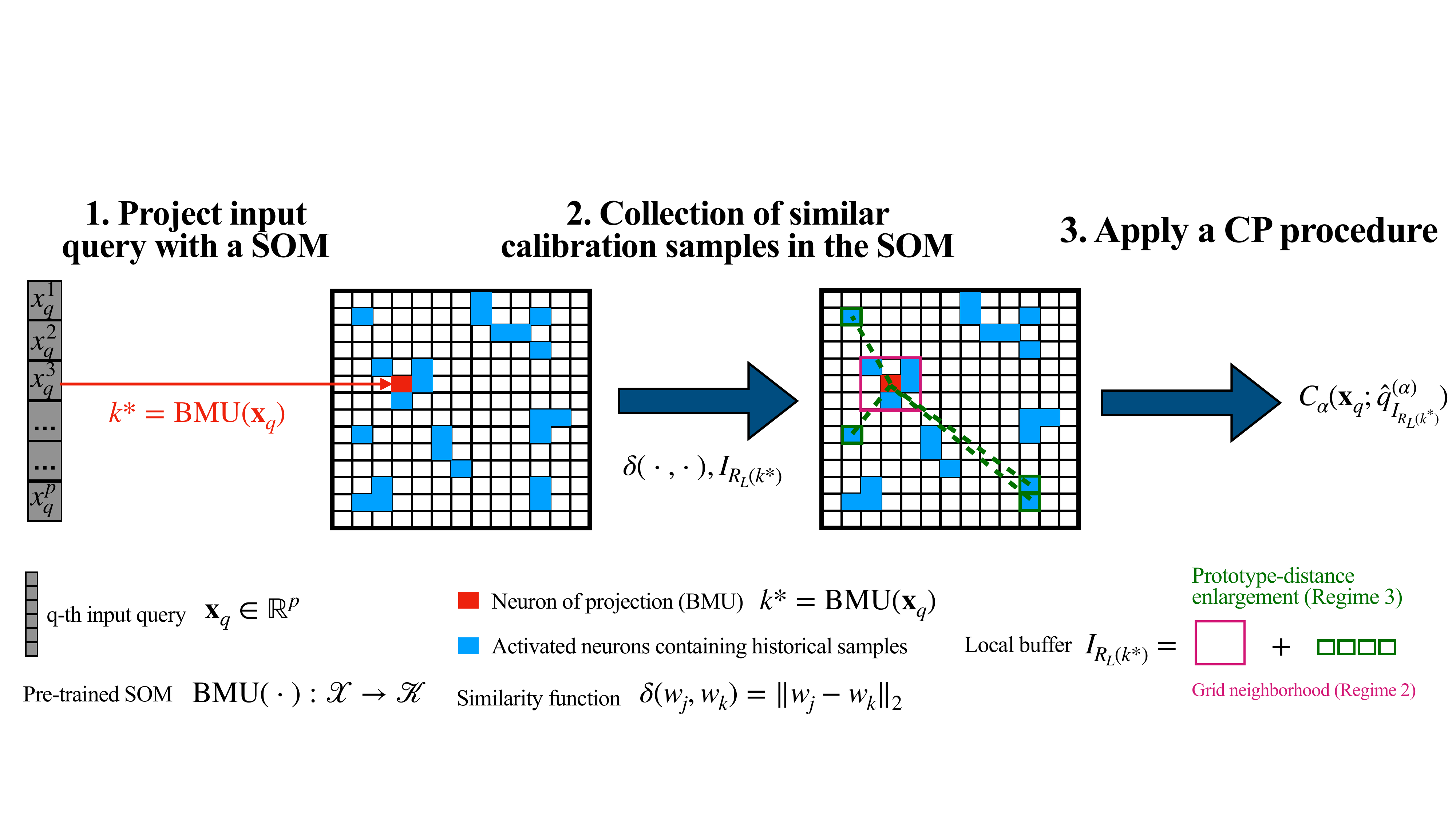}
  \caption{Overview of the \gls{socp} pipeline.
  A pre-trained \gls{som} partitions the input space into $K$ cells with prototypes $\{w_k\}_{k=1}^K$.
  At inference time, the \gls{bmu} of a query selects a calibration buffer: the \gls{bmu} cell only (Regime~1), a fixed grid neighborhood (Regime~2), or a training-routed prototype-space \gls{knn} enlargement of that neighborhood (Regime~3).
  The usual split-conformal quantile is computed from the selected buffer, yielding the prediction set.
  }
  \label{fig:socp-pipeline}
\end{figure}

\subsection{Setup and standing assumptions}
\label{sec:methodology:setup}

Let $\mathcal{X} \subseteq \mathbb{R}^p$ denote the representation space used by the score and the \gls{som}, and let $\mathcal{Y}$ be the response space.
Let $\mathcal{D}_{\mathrm{tr}}$ and $\mathcal{D}_{\mathrm{sel}}$ denote the disjoint training and selection splits, and write $\mathcal{D}_{\mathrm{pre}}\coloneqq(\mathcal{D}_{\mathrm{tr}},\mathcal{D}_{\mathrm{sel}})$.
In the experiments, $\mathcal{D}_{\mathrm{sel}}$ serves as a validation set whenever a score or baseline requires hyperparameter selection.
\gls{socp} itself does not require this split.
Before calibration, we fix the predictor, representation, nonconformity score $s:\mathcal{X}\times\mathcal{Y}\to\mathbb{R}$, and all retrieval choices.
Any required fitting or selection uses only $\mathcal{D}_{\mathrm{pre}}$.
In particular, the inputs in $\mathcal{D}_{\mathrm{tr}}$ train the \gls{som}, whose prototypes $\{w_k\}_{k=1}^K$ are indexed by $\mathcal{K}\coloneqq\{1,\dots,K\}$.
Let $\mathcal{D}_{\mathrm{cal}}\coloneqq\{(X_i,Y_i)\}_{i=1}^n$ be the calibration split, and let $(X_{n+1},Y_{n+1})$ be one test point.
Conditional on $\mathcal{D}_{\mathrm{pre}}$, we assume that the calibration points and the test point are i.i.d.\ from a common distribution $P$ on $\mathcal{X}\times\mathcal{Y}$.
All probabilities below are conditional on $\mathcal{D}_{\mathrm{pre}}$, so the predictor, score, map, and retrieval choices are fixed throughout the analysis.
Fix a miscoverage level $\alpha\in(0,1)$.

We introduce and recall the following definitions:
\begin{definition}[Split-conformal quantile]
  \label{def:conformal-q}  Following~\citep{papadopoulosInductiveConfidenceMachines2002a, leiDistributionFreePredictiveInference2017}, for a finite calibration index set $S$, let $m\coloneqq|S|$ and let $\ell_m\coloneqq\lceil(1-\alpha)(m+1)\rceil$.
  If $\ell_m\leq m$, define $\hat q^{(\alpha)}_S$ as the $\ell_m$-th order statistic of $\{s_i:i\in S\}$.
  If $\ell_m>m$, define $\hat q^{(\alpha)}_S\coloneqq+\infty$.
  Equivalently, this happens when $m<\lceil1/\alpha\rceil-1$.
  The associated prediction set is
  \[
    C_\alpha(x;q)\coloneqq\{y\in\mathcal{Y}:s(x,y)\leq q\}.
  \]
\end{definition}

\begin{definition}[BMU map and fixed neighborhoods]
  \label{def:bmu}
  As in~\citep{kohonenSelforganizingMap1990,kohonenSelfOrganizingMaps2001a}, the \emph{best-matching-unit} map is
  \[
    \mathrm{BMU}(x)\coloneqq\arg\min_{k\in\mathcal{K}}\|x-w_k\|_2.
  \]
  Ties are resolved by a deterministic rule fixed with the \gls{som}, so $\mathrm{BMU}$ is a measurable map fixed before calibration.
  Let $\gamma:\mathcal{K}\to\mathbb{Z}^2$ denote the fixed map from cell index to grid coordinate, and define the grid metric $d_{\mathrm{grid}}(k,j)\coloneqq\|\gamma(k)-\gamma(j)\|_\infty$ on a rectangular grid (Chebyshev distance).
  For a fixed integer radius $r\geq0$, we define the neighborhood of cell $k$ as
  \[
    N(k,r)\coloneqq\{j\in\mathcal{K}: d_{\mathrm{grid}}(k,j)\leq r\}.
  \]
\end{definition}

Throughout the regime statements, $k^\ast\coloneqq\mathrm{BMU}(X_{n+1})$ denotes the test point's BMU.
The BMU map is deterministic, but $k^\ast$ is random because $X_{n+1}$ is random.
Conditioning on $\{k^\ast=k\}$ or $\{k^\ast\in A\}$ restricts the query to a specific cell or cell set.
The deployed threshold uses $k^\ast$ to choose a buffer, so exact rank arguments are stated for fixed-reference thresholds and fixed retrieved cell sets.
Central-cell statements then specialize these thresholds on the event $\{k^\ast=k\}$.
Throughout, $\hat Q$ denotes a fixed-reference threshold (parameterized by a fixed cell or cell set), and $\hat q$ denotes the deployed adaptive threshold (evaluated at $k^\ast$).

With those neighborhoods being defined, we now define the cell weights:
\begin{definition}[Cell masses, conditional cell weights, and calibration buffers]
  \label{def:cell-mass}
  For each cell $k\in\mathcal{K}$, define its population mass and calibration index set by
  \[
    \pi_k\coloneqq\mathbb{P}(\mathrm{BMU}(X)=k),
    \qquad
    \mathcal{I}_k\coloneqq\{i\in\{1,\dots,n\}: \mathrm{BMU}(X_i)=k\}.
  \]
  The active-cell set is defined as
  \[
    \mathcal{K}_+\coloneqq\{k\in\mathcal{K}:\pi_k>0\}.
  \]
  Moreover, a cell is called \emph{active} when it belongs to $\mathcal{K}_+$.

  For any non-empty cell set $A\subseteq\mathcal{K}$, define
  \[
    \pi_A\coloneqq\mathbb{P}(\mathrm{BMU}(X)\in A)=\sum_{j\in A}\pi_j,
    \qquad
    \mathcal{I}_A\coloneqq\bigsqcup_{j\in A}\mathcal{I}_j.
  \]
  The union is disjoint because the cells $\{\mathcal{I}_j\}_{j\in\mathcal{K}}$ partition the calibration indices through the \gls{bmu} map, and $\mathcal{I}_A$ is the pooled calibration buffer attached to the retrieved cell set $A$.
  Whenever $\pi_A>0$, for each $j\in A$, define
  \[
    \tilde\pi_j^A\coloneqq\frac{\pi_j}{\pi_A}
    =
    \mathbb{P}(\mathrm{BMU}(X)=j\mid \mathrm{BMU}(X)\in A).
  \]
  Thus $\tilde\pi_j^A$ is the conditional mass of cell $j$ within the cell set $A$.
  We set $\tilde\pi_j^A=0$ for inactive $j\in A$.
\end{definition}

Combining Definitions~\ref{def:bmu} and \ref{def:cell-mass}, we introduce the score distribution within each cell and retrieved set:
\begin{definition}[Cell score distributions]
  \label{def:cell-cdf}
  We write the calibration and test scores as
  \[
    s_i\coloneqq s(X_i,Y_i),\quad s_{n+1}\coloneqq s(X_{n+1},Y_{n+1}).
  \]
  For each active cell $k\in\mathcal{K}_+$, we define the \textit{conditional score \gls{cdf}} as
  \[
    F^{(k)}(t)\coloneqq\mathbb{P}(s(X,Y)\leq t\mid \mathrm{BMU}(X)=k),
    \qquad t\in\mathbb{R},
  \]
  and more generally, for a cell set $A$ with $\pi_A>0$, we define the \textit{mixture \gls{cdf}} as
  \[
    F^A(t)\coloneqq\sum_{j\in A\cap\mathcal{K}_+}\tilde\pi_j^A F^{(j)}(t)
    =\mathbb{P}(s(X,Y)\leq t\mid \mathrm{BMU}(X)\in A).
  \]
\end{definition}

Finally, we introduce the following \gls{ks} pseudometric which measures the discrepancy between cell-score distributions:
\begin{definition}[KS pseudometric]
  \label{def:ks-bias}
  The cell-pair pseudometric is the Kolmogorov-Smirnov distance between the conditional score \glspl{cdf} of two active cells $k$ and $j$:
  \[
    d_{\mathrm{KS}}(k,j)\coloneqq\sup_{t\in\mathbb{R}}\left|F^{(k)}(t)-F^{(j)}(t)\right|.
  \]

  For an active cell $k\in A$, define the \gls{ks} bias as the weighted sum of the cell-pair pseudometrics over all active cells in $A$:
  \[
    \varepsilon_k(A)\coloneqq\sum_{j\in A\cap\mathcal{K}_+}\tilde\pi_j^A d_{\mathrm{KS}}(k,j).
  \]
  By abuse of notation, when no cell set $A$ is specified, $\varepsilon_k$ denotes the neighborhood bias $\varepsilon_k(N(k,r))$.
\end{definition}

The headline experiments instantiate $s$ as the absolute residual $|y-\hat\mu(x)|$ for regression and the \gls{lac} score $1-\hat p_y(x)$ for classification.
The composition study adds four scores per task, as detailed in Appendix~\ref{app:experimental_details}.
Every guarantee below applies to any measurable score fixed before calibration.

The proofs use two standard ingredients, stated in Appendix~\ref{app:deferred_methodology}: a Mondrian split-conformal rank argument per fixed-reference cell (Lemma~\ref{lem:mondrian}), and a \gls{ks} bridge from retrieved-set to central-cell coverage (Lemma~\ref{lem:bridge}).
A \emph{tower-property transfer} lemma (Lemma~\ref{lem:tower-transfer}) packages these into a single bias-bridging step used in every approximate cell-conditional proof.
Proofs of all lemmas, theorems, and corollaries are given in Appendix~\ref{app:theory_proofs}.

\subsection{Regime 1: BMU cell only}
\label{sec:methodology:regime1}

Regime~1 retrieves only calibration points in the query's \gls{bmu} cell.
Its test-point threshold is
\[
  \hat q^{(1)}\coloneqq\hat q^{(\alpha)}_{\mathcal{I}_{k^\ast}}.
\]

\begin{theorem}[Finite-sample cell-conditional validity, Regime~1]
  \label{thm:regime1}
  Under the standing assumptions, for every active cell $k\in\mathcal{K}_+$,
  \[
    \mathbb{P}\!\left(Y_{n+1}\in C_\alpha(X_{n+1};\hat q^{(1)})\mid \mathrm{BMU}(X_{n+1})=k\right)
    \geq 1-\alpha.
  \]
\end{theorem}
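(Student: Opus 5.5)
The plan is the standard Mondrian argument: condition on which calibration points fall in cell $k$, then use exchangeability within the cell. On the event $\{k^\ast=k\}$ the deployed threshold $\hat q^{(1)}$ equals the fixed-reference threshold $\hat Q_k\coloneqq\hat q^{(\alpha)}_{\mathcal{I}_k}$, which depends only on the calibration data and not on $X_{n+1}$. It therefore suffices to show
\[
\mathbb{P}\bigl(s_{n+1}\le \hat Q_k\mid \mathrm{BMU}(X_{n+1})=k\bigr)\ge 1-\alpha,
\]
since $Y_{n+1}\in C_\alpha(X_{n+1};q)$ is equivalent to $s_{n+1}\le q$. This is exactly the fixed-reference content of Lemma~\ref{lem:mondrian}. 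I would invoke that lemma with the fixed cell $k$, or reproduce its proof as follows.

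\textbf{Step 1 (conditioning on the cell membership pattern).} Fix the random index set $I=\mathcal{I}_k$ and condition on the event $\{\mathcal{I}_k=I\}\cap\{\mathrm{BMU}(X_{n+1})=k\}$ for each deterministic $I\subseteq\{1,\dots,n\}$ with $|I|=m$. This event has positive probability because $\pi_k>0$. The pairs $(X_i,Y_i)$ are i.i.d.\ given $\mathcal{D}_{\mathrm{pre}}$, and the BMU map is fixed and measurable. Hence, conditionally on this event, the $m+1$ pairs indexed by $I\cup\{n+1\}$ are i.i.d.\ from $P(\cdot\mid \mathrm{BMU}(X)=k)$, and they are independent of the pairs outside the cell. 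Consequently the scores $\{s_i\}_{i\in I}\cup\{s_{n+1}\}$ are exchangeable, each with CDF $F^{(k)}$.

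\textbf{Step 2 (rank argument).} If $m<\lceil 1/\alpha\rceil-1$, then $\hat Q_k=+\infty$ and coverage is trivially $1$. Otherwise $\ell_m\le m$, and the standard split-conformal argument applies. By exchangeability, the rank of $s_{n+1}$ among the $m+1$ scores is uniform, with ties broken uniformly or bounded from the conservative side. Therefore $\mathbb{P}(s_{n+1}\le \ell_m\text{-th order statistic of }\{s_i\}_{i\in I})\ge \ell_m/(m+1)\ge 1-\alpha$.

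The only delicate point is ties. I would handle them through the event inclusion $\{s_{n+1}> s_{(\ell_m)}\}\subseteq\{s_{n+1}\text{ is among the }m+1-\ell_m\text{ strictly largest}\}$, whose probability is at most $(m+1-\ell_m)/(m+1)\le\alpha$ by exchangeability.

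\textbf{Step 3 (averaging).} The per-$I$ conditional bound holds uniformly in $I$. Integrating it over the conditional law of $\mathcal{I}_k$ given $\{\mathrm{BMU}(X_{n+1})=k\}$ by the tower property yields the claim.

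I expect the main obstacle to be justifying Step 1 rigorously. The point is that conditioning on the random buffer size, and on which indices land in the cell, preserves the i.i.d.\ structure within the cell. This holds because the membership indicators are functions of each $X_i$ separately, so independence across indices gives a product factorization of the joint conditional law.
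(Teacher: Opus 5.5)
Your proposal is correct and matches the paper's proof. The paper applies Lemma~\ref{lem:mondrian} to the grouping $G=\mathrm{BMU}$ at $g=k$; that lemma's proof conditions on the slices $\{\mathcal{I}_k=I,\ \mathrm{BMU}(X_{n+1})=k\}$, uses a rank bound, and averages over slices, and the paper then identifies $\hat q^{(1)}=\hat q^{(\alpha)}_{\mathcal{I}_k}$ on $\{k^\ast=k\}$, exactly as you do. The only cosmetic difference is tie handling: the paper uses an independent uniform tie-breaker, while your count of scores strictly exceeding at least $\ell_m$ others gives the same bound without auxiliary randomization.
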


Regime~1 has no distributional bias because the calibration and test scores are conditioned on the same \gls{bmu} cell.
It can nevertheless be uninformative when $|\mathcal{I}_k|<\lceil1/\alpha\rceil-1$, because then $\hat q^{(1)}=+\infty$.

\subsection{Regime 2: Fixed topological neighborhood}
\label{sec:methodology:regime2}

Regime~2 retrieves all calibration points whose \gls{bmu} lies in the fixed grid neighborhood of the query cell.
For a fixed-reference cell $k$, define the neighborhood threshold
\[
  \hat Q_k^{(2)}\coloneqq\hat q^{(\alpha)}_{\mathcal{I}_{N(k,r)}}.
\]
For the test point, Regime~2 uses
\[
  \hat q^{(2)}\coloneqq\hat Q_{k^\ast}^{(2)}.
\]

\begin{theorem}[Fixed-reference neighborhood-conditional validity]
  \label{thm:regime2a}
  Under the standing assumptions, fix a cell $k$ with $\pi_{N(k,r)}>0$.
  Then
  \[
    \mathbb{P}\!\left(s_{n+1}\leq \hat Q_k^{(2)}\mid \mathrm{BMU}(X_{n+1})\in N(k,r)\right)
    \geq 1-\alpha.
  \]
\end{theorem}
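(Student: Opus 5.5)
We fix $A\coloneqq N(k,r)$, so $\pi_A>0$, and prove the bound by conditioning on which calibration points land in $A$ and then applying the usual exchangeability rank argument inside $A$. This is a Mondrian argument for the single fixed group $A$, as in Lemma~\ref{lem:mondrian}. The key point is that $A$ is fixed before calibration: it depends only on the reference cell $k$, the radius $r$ and the map, all fixed given $\mathcal{D}_{\mathrm{pre}}$. Hence the buffer $\mathcal{I}_A$ does not depend on the test point.

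First, we condition on the membership pattern $B\coloneqq(\mathds{1}\{\mathrm{BMU}(X_i)\in A\})_{i=1}^{n}$ together with the event $E\coloneqq\{\mathrm{BMU}(X_{n+1})\in A\}$. This fixes $\mathcal{I}_A=S$ with $|S|=m$. Because the $n+1$ pairs are i.i.d.\ from $P$, they remain independent given these indicator events. Every pair whose index lies in $S\cup\{n+1\}$ has the conditional law $P(\cdot\mid \mathrm{BMU}(X)\in A)$, since each conditioning event acts only on its own pair. The $m+1$ scores $\{s_i:i\in S\}\cup\{s_{n+1}\}$ are therefore i.i.d., and in particular exchangeable, conditionally on $(B,E)$. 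Their common CDF is $F^A$ from Definition~\ref{def:cell-cdf}.

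Second, we apply the standard split-conformal quantile bound to this group. If $\ell_m>m$, then $\hat Q_k^{(2)}=+\infty$ and coverage is trivial. Otherwise, exchangeability gives $\mathbb{P}(s_{n+1}\le s_{(\ell_m)}\mid B,E)\ge \ell_m/(m+1)\ge 1-\alpha$. This holds with ties too: $s_{n+1}>s_{(\ell_m)}$ forces the rank of $s_{n+1}$ among the $m+1$ values to exceed $\ell_m$, and that event has probability at most $(m+1-\ell_m)/(m+1)$. Finally, we average over $B$ given $E$ by the tower property to obtain the conditional statement given $E$ alone.

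The main subtlety is the first step. We must check that conditioning on the test point's membership event leaves the calibration points inside $A$ with the same law, and that the random size $m$ causes no trouble. Both are settled by conditioning on the full indicator vector, which is legitimate because $A$ and $\mathrm{BMU}$ are deterministic given $\mathcal{D}_{\mathrm{pre}}$. Everything else is the textbook argument. No KS term enters, because we condition on the same set $A$ that defines the buffer, not on the central cell $k$.
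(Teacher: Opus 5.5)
Your proposal is correct and follows the paper's route. The paper applies its Mondrian lemma (Lemma~\ref{lem:mondrian}) to the calibration-free binary grouping $x\mapsto\mathds{1}\{\mathrm{BMU}(x)\in N(k,r)\}$, and that lemma's proof is exactly your argument: condition on the calibration membership pattern together with the test event, use the conditional i.i.d.\ structure for the exchangeable rank bound, then average over slices. The only difference is cosmetic: you handle ties with a direct counting bound on ranks instead of the paper's independent uniform tie-break, and both routes give $\ell_m/(m+1)\geq 1-\alpha$.
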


Theorem~\ref{thm:regime2a} treats each fixed neighborhood as a single Mondrian group.
For $r>0$, the family $\{N(k,r):k\in\mathcal{K}\}$ is generally an overlapping cover.
Every coverage statement is indexed by a fixed reference cell, so overlap between buffers does not enter that statement.
At deployment, the procedure is queried at $k^\ast=\mathrm{BMU}(X_{n+1})$, and the resulting threshold is computed from the buffer $\mathcal{I}_{N(k^\ast,r)}$.
Theorem~\ref{thm:regime2a} alone is not a coverage statement for the random map $x\mapsto \hat Q_{\mathrm{BMU}(x)}^{(2)}$.
The corresponding deployed, central-cell statement is Theorem~\ref{thm:regime2b}, which pays the discrepancy between the score distribution in cell $k$ and the mixture over $N(k,r)$.

\begin{theorem}[Approximate cell-conditional validity, Regime~2]
  \label{thm:regime2b}
  Under the standing assumptions, for every active cell $k\in\mathcal{K}_+$,
  \[
    \mathbb{P}\!\left(Y_{n+1}\in C_\alpha(X_{n+1};\hat q^{(2)})\mid \mathrm{BMU}(X_{n+1})=k\right)
    \geq 1-\alpha-\varepsilon_k.
  \]
\end{theorem}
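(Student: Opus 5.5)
Fix an active cell $k\in\mathcal{K}_+$ and write $A\coloneqq N(k,r)$. Since $k\in A$, we have $\pi_A\geq\pi_k>0$. On the event $\{\mathrm{BMU}(X_{n+1})=k\}$ the deployed threshold coincides with the fixed-reference one, $\hat q^{(2)}=\hat Q_k^{(2)}=\hat q^{(\alpha)}_{\mathcal{I}_A}$. Moreover, $Y_{n+1}\in C_\alpha(X_{n+1};q)$ is equivalent to $s_{n+1}\leq q$. So it suffices to show
\[
\mathbb{P}\bigl(s_{n+1}\leq \hat Q_k^{(2)}\mid \mathrm{BMU}(X_{n+1})=k\bigr)\geq 1-\alpha-\varepsilon_k(A).
\]
The key structural point is that $\hat Q_k^{(2)}$ depends only on the calibration data. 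Those data are independent of $(X_{n+1},Y_{n+1})$, and conditioning on the test point's BMU leaves them unaffected.

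I would condition on the calibration sample, via the tower-property transfer (Lemma~\ref{lem:tower-transfer}). Given $\mathcal{D}_{\mathrm{cal}}$, $\hat Q_k^{(2)}=t$ is a constant, and independence gives
\[
\mathbb{P}\bigl(s_{n+1}\leq t\mid \mathcal{D}_{\mathrm{cal}},\,\mathrm{BMU}(X_{n+1})=k\bigr)=F^{(k)}(t)
\]
and
\[
\mathbb{P}\bigl(s_{n+1}\leq t\mid \mathcal{D}_{\mathrm{cal}},\,\mathrm{BMU}(X_{n+1})\in A\bigr)=F^{A}(t),
\]
with the convention $F(+\infty)=1$. The pointwise KS bridge (Lemma~\ref{lem:bridge}) is then
\[
F^{(k)}(t)-F^A(t)=\sum_{j\in A\cap\mathcal{K}_+}\tilde\pi_j^A\bigl(F^{(k)}(t)-F^{(j)}(t)\bigr)\geq -\sum_j \tilde\pi_j^A d_{\mathrm{KS}}(k,j)=-\varepsilon_k(A).
\]
This uses $\sum_j\tilde\pi_j^A=1$ over active $j$. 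The bound holds uniformly in $t$, including $t=+\infty$ where both sides equal $1$. Hence $F^{(k)}(\hat Q_k^{(2)})\geq F^A(\hat Q_k^{(2)})-\varepsilon_k$ almost surely.

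Taking expectations over $\mathcal{D}_{\mathrm{cal}}$ gives
\[
\mathbb{P}\bigl(s_{n+1}\leq\hat Q_k^{(2)}\mid k^\ast=k\bigr)=\mathbb{E}\bigl[F^{(k)}(\hat Q_k^{(2)})\bigr]\geq\mathbb{E}\bigl[F^{A}(\hat Q_k^{(2)})\bigr]-\varepsilon_k.
\]
By the same independence, the last expectation equals $\mathbb{P}(s_{n+1}\leq\hat Q_k^{(2)}\mid k^\ast\in A)$, which is at least $1-\alpha$ by Theorem~\ref{thm:regime2a}. This concludes the argument.

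The main obstacle is rigor in the second step: justifying that conditioning on $\{k^\ast=k\}$ versus $\{k^\ast\in A\}$ changes only the law of the test score and not the law of the calibration buffer. This needs independence of the test point from $\mathcal{D}_{\mathrm{cal}}$ given $\mathcal{D}_{\mathrm{pre}}$, together with $\pi_k>0$. It is this independence that lets the neighborhood statement, which is a rank-exchangeability statement over the random buffer $\mathcal{I}_A$, be rewritten as an expectation of $F^A$ at a data-dependent threshold. The rest of the proof is bookkeeping.
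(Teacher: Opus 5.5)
Your proposal is correct and follows the paper's proof essentially verbatim: fix $A=N(k,r)$, note $\hat q^{(2)}=\hat Q_k^{(2)}$ on $\{k^\ast=k\}$, and transfer the exact neighborhood bound of Theorem~\ref{thm:regime2a} to the central cell through the tower identity and the pathwise \gls{ks} bridge. Your conditioning-on-$\mathcal{D}_{\mathrm{cal}}$ argument is precisely the content of Lemma~\ref{lem:tower-transfer}, which the paper invokes in a single step.
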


\begin{corollary}[Marginal coverage bound for Regime~2]
  \label{cor:regime2c}
  Under the standing assumptions, it holds that
  \[
    \mathbb{P}\!\left(Y_{n+1}\in C_\alpha(X_{n+1};\hat q^{(2)})\right)
    \geq 1-\alpha-\bar\varepsilon,
    \qquad
    \bar\varepsilon\coloneqq\sum_{k\in\mathcal{K}_+}\pi_k\varepsilon_k.
  \]
\end{corollary}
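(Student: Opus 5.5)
The corollary follows from Theorem~\ref{thm:regime2b} by averaging over the query's BMU cell with the law of total probability. Inactive cells carry no probability mass, so the event $\{Y_{n+1}\in C_\alpha(X_{n+1};\hat q^{(2)})\}$ decomposes as
\[
  \mathbb{P}\!\left(Y_{n+1}\in C_\alpha(X_{n+1};\hat q^{(2)})\right)
  =\sum_{k\in\mathcal{K}_+}\pi_k\,
  \mathbb{P}\!\left(Y_{n+1}\in C_\alpha(X_{n+1};\hat q^{(2)})\mid \mathrm{BMU}(X_{n+1})=k\right).
\]
The BMU map is measurable and fixed before calibration, and the cells $\{\mathrm{BMU}(X_{n+1})=k\}_{k\in\mathcal{K}}$ partition the sample space. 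Hence this identity is just the tower property applied to the cell index, with the conditional probabilities well defined exactly for $k\in\mathcal{K}_+$.

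Next, apply Theorem~\ref{thm:regime2b} to each active cell. This bounds the $k$-th conditional probability below by $1-\alpha-\varepsilon_k$, where $\varepsilon_k=\varepsilon_k(N(k,r))$ is the neighborhood KS bias. It is important that Theorem~\ref{thm:regime2b} is stated for the \emph{deployed} threshold $\hat q^{(2)}=\hat Q^{(2)}_{k^\ast}$ and not a fixed-reference one. On the event $\{k^\ast=k\}$ the deployed threshold coincides with $\hat Q^{(2)}_k$, so no additional argument about the random choice of buffer is needed.

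Summing with weights $\pi_k$ and using $\sum_{k\in\mathcal{K}_+}\pi_k=1$ gives
\[
  \sum_{k\in\mathcal{K}_+}\pi_k(1-\alpha-\varepsilon_k)=1-\alpha-\bar\varepsilon,
\]
which is the claim. The only point needing care is bookkeeping rather than a real obstacle. Every active $k$ has $\pi_{N(k,r)}\geq\pi_k>0$ because $k\in N(k,r)$, so $\varepsilon_k$ and the hypotheses of Theorem~\ref{thm:regime2b} are well defined for every term in the sum.
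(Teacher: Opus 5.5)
Your proposal is correct and follows the paper's proof exactly. Both use total probability over the partition $\{\mathrm{BMU}(X_{n+1})=k\}_{k\in\mathcal{K}_+}$, apply Theorem~\ref{thm:regime2b} cellwise to the deployed threshold, and finish with $\sum_{k\in\mathcal{K}_+}\pi_k=1$; your bookkeeping remarks (inactive cells are null, and $\pi_{N(k,r)}\geq\pi_k>0$ since $k\in N(k,r)$) are accurate and spell out what the paper leaves implicit.
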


The shorthand $\varepsilon_k$ from Definition~\ref{def:ks-bias} is the bias paid for replacing the cell score distribution by the score mixture over neighboring cells.
It is small when adjacent \gls{som} cells have similar conditional score distributions.

\subsection{Regime 3: Training-routed prototype enlargement}
\label{sec:methodology:summary}
\label{sec:methodology:regime3}
\label{sec:methodology:target-buffer}

Regime~3 enlarges a topological neighborhood by following the fixed order of cells outside it, from nearest to farthest prototype.
For $L\in\{0,\dots,K\}$, let $R_L(k)$ contain $N(k,r)$ and the first $\min\{L,K-|N(k,r)|\}$ cells in this order.
Hence $R_L(k)=\mathcal{K}$ once $L\geq K-|N(k,r)|$.

Write $n_{\mathrm{tr}}\coloneqq|\mathcal{D}_{\mathrm{tr}}|>0$ for the number of training examples, and let $X_i^{\mathrm{tr}}$ denote their inputs.
For each cell $j\in\mathcal{K}$, let $T_j$ count the training inputs assigned to that cell,
\[
  T_j\coloneqq\sum_{i=1}^{n_{\mathrm{tr}}}
  \mathds{1}\{\mathrm{BMU}(X_i^{\mathrm{tr}})=j\},
  \qquad
  \sum_{j\in\mathcal{K}}T_j=n_{\mathrm{tr}}.
\]
Using the training proportion $T_j/n_{\mathrm{tr}}$ to estimate the mass of cell $j$, we project the size of the calibration buffer at the planned calibration size $n$ as
\[
  \widehat m_k(L)
  \coloneqq
  \frac{n}{n_{\mathrm{tr}}}\sum_{j\in R_L(k)}T_j.
\]
Given a fixed projected-size target $b\leq n$, the automatic rule selects one global budget
\begin{equation}
  L^\star
  \coloneqq
  \min\left\{L\in\{0,\dots,K\}:\min_{k\in\mathcal{K}}\widehat m_k(L)\geq b\right\}.
  \label{eq:training-routed-budget}
\end{equation}
For each reference cell, the prototype-distance order contains every cell outside the base neighborhood, including cells with $T_j=0$.
An empty cell may be one of the first $L$ additions, but it contributes zero to $\widehat m_k(L)$.

The trained \gls{som}, the prototype-distance orders, and the counts $T_j$ are determined by $\mathcal{D}_{\mathrm{tr}}$ and the fixed training procedure, while $n$ and $b$ are fixed design values.
Thus $L^\star$ is selected without using any calibration observation.
Conditional on $\mathcal{D}_{\mathrm{pre}}$, it is a fixed integer, so the fixed-$L$ guarantees in Appendix~\ref{app:deferred_methodology} apply with $L=L^\star$.
The validity result does not require the projected sizes to equal the realized buffer sizes.
Projection error can affect efficiency, but not the rank argument, because the retrieved sets remain fixed before calibration.

To clarify the choice of $b$, let $m$ denote a realized calibration-buffer size.
At $\alpha=0.1$, Definition~\ref{def:conformal-q} gives $\ell_m=\lceil0.9(m+1)\rceil$.
The smallest $m$ yielding a finite cutoff is $9$, for which $\ell_9=9$, while $m\leq8$ gives $\ell_m>m$ and hence a cutoff of $+\infty$.
The experiments set the projected-size target to $b=19$, the smallest $m$ for which $\ell_m<m$, since $\ell_{18}=18$ and $\ell_{19}=18$.
Thus a realized buffer of $19$ scores uses the eighteenth rather than the largest order statistic.
Accordingly, $b=19$ is a projected efficiency target, not a requirement for validity and not a guarantee that every realized buffer contains $19$ scores.
Concrete and Auto MPG use this training-routed Regime~3 in the headline comparison because their fixed neighborhoods enter the sparse-buffer regime.
All other headline rows use Regime~2.
The choice between regimes remains a user decision made before calibration, and only the global enlargement budget inside Regime~3 is selected automatically.

Appendix~\ref{app:deferred_methodology} gives a split-routed target-buffer variant in which an independent routing split chooses a cell-specific enlargement.
Table~\ref{tab:socp-guarantees} summarizes the guarantees.
Exact bounds condition on the cell set used to compute the quantile, while central-cell bounds pay the \gls{ks} bias introduced by pooling.

\begin{table}[ht]
  \centering
  \caption{
    Coverage guarantees of \gls{socp} procedures for a fixed-reference cell $k\in\mathcal{K}_+$.
    Retrieved-set rows evaluate a fixed-reference threshold conditional on the query's \gls{bmu} lying anywhere in the retrieved cell set.
    Central-cell rows evaluate the deployed threshold conditional on the query's \gls{bmu} being the reference cell, $k^\ast=k$.
    Retrieved-set bounds are exact.
    The central-cell bound is exact in Regime~1 and otherwise pays the stated \gls{ks} discrepancy from pooling.
    The marginal row averages the central-cell bounds over $k^\ast$.
    Appendix~\ref{app:deferred_methodology} gives the split-routed target-buffer construction and the corresponding proofs.
  }
  \label{tab:socp-guarantees}
  \resizebox{\textwidth}{!}{%
    \begin{tabular}{@{}llllll@{}}
      \toprule
      Retrieval                                              & Calibration buffer             & Threshold                   & Conditioning event                                           & Lower bound                                                        & Status               \\
      \midrule
      Regime~1, cell (Thm.~\ref{thm:regime1})                & $\mathcal{I}_k$                & $\hat q^{(1)}$              & $k^\ast=k$                                                   & $1-\alpha$                                                         & exact                \\
      Regime~2, retrieved (Thm.~\ref{thm:regime2a})          & $\mathcal{I}_{N(k,r)}$         & $\hat Q_k^{(2)}$            & $k^\ast\in N(k,r)$                                           & $1-\alpha$                                                         & exact                \\
      Regime~2, central (Thm.~\ref{thm:regime2b})            & $\mathcal{I}_{N(k,r)}$         & $\hat q^{(2)}$              & $k^\ast=k$                                                   & $1-\alpha-\varepsilon_k$                                           & approximate          \\
      Regime~2, marginal (Cor.~\ref{cor:regime2c})           & $\mathcal{I}_{N(k^\ast,r)}$    & $\hat q^{(2)}$              & none                                                         & $1-\alpha-\bar\varepsilon$                                         & approximate          \\
      Regime~3, retrieved (Cor.~\ref{cor:regime3-training})  & $\mathcal{I}_{R_{L^\star}(k)}$ & $\hat Q_k^{(3,L^\star)}$    & $k^\ast\in R_{L^\star}(k)$                                   & $1-\alpha$                                                         & exact                \\
      Regime~3, central (Cor.~\ref{cor:regime3-training})    & $\mathcal{I}_{R_{L^\star}(k)}$ & $\hat q^{(3,L^\star)}$      & $k^\ast=k$                                                   & $1-\alpha-\varepsilon_k(R_{L^\star}(k))$                           & approximate          \\
      Target buffer, retrieved (Thm.~\ref{thm:split-target}) & $\mathcal{I}_{\hat R_b(k)}$    & $\hat Q_k^{(\mathrm{tar})}$ & $k^\ast\in\hat R_b(k)$, given $\mathcal{D}_{\mathrm{route}}$ & $1-\alpha$                                                         & exact, given routing \\
      Target buffer, central (Thm.~\ref{thm:split-target})   & $\mathcal{I}_{\hat R_b(k)}$    & $\hat q^{(\mathrm{tar})}$   & $k^\ast=k$                                                   & $1-\alpha-\mathbb{E}_{\mathrm{route}}[\varepsilon_k(\hat R_b(k))]$ & approximate          \\
      \bottomrule
    \end{tabular}%
  }
\end{table}

\section{Experiments}
\label{sec:experiments}

\subsection{Experimental protocol}

We evaluate \gls{socp} at $\alpha=0.1$ over ten seeds and ten datasets.
The regression suite contains Bio (CASP protein)~\citep{ranaPhysicochemicalPropertiesProtein2015}, Bike Sharing~\citep{fanaee-tBikeSharing2013}, California Housing~\citep{kelleypaceSparseSpatialAutoregressions1997}, Concrete Compressive Strength~\citep{i-chengyehConcreteCompressiveStrength1998}, and Auto MPG~\citep{r.quinlanAutoMPG1993}.
The classification suite contains CIFAR-10 and CIFAR-100~\citep{krizhevskyLearningMultipleLayers2012}, MNIST~\citep{lecunGradientbasedLearningApplied1998}, Covertype~\citep{blackardCovertype1998}, and ImageNet-1k~\citep{russakovskyImageNetLargeScale2015}.

The headline comparison uses the absolute residual score for regression and \gls{lac}~\citep{sadinleLeastAmbiguousSetvalued2019} for classification, written as SO-SCP against pooled \gls{scp}.
It uses fixed \gls{bmu} neighborhoods, or Regime~2, except on Concrete and Auto MPG.
Their calibration splits contain only $154$ and $58$ observations, so we use training-routed Regime~3 with the fixed projected-size target $b=19$.
The broader evaluation composes SO calibration with \gls{cqr}~\citep{romanoConformalizedQuantileRegression2019b}, \gls{chr}~\citep{sesiaConformalHistogramRegression2021}, \gls{nfcp}~\citep{colomboNormalizingFlowsConformal2024}, and \gls{rcp}~\citep{plassierRectifyingConformityScores2025} for regression, and with \gls{aps}~\citep{romanoClassificationValidAdaptive2020}, \gls{raps}~\citep{angelopoulosUncertaintySetsImage2021}, \gls{saps}~\citep{huangConformalPredictionDeep2024}, and a rank-only score for classification.
We also compare pooled calibration, \gls{lcp}~\citep{guanLocalizedConformalPrediction2022a}, \gls{rlcp}~\citep{horeConformalPredictionLocal2023}, and Mondrian calibration on training-fitted $k$-means groups~\citep{vovkMondrianConfidenceMachine2003}.
Classification additionally includes Classwise and Clustered calibration~\citep{dingClassConditionalConformalPrediction2023a}.
Appendix~\ref{app:experimental_details} gives every split, predictor, map, score, and selection rule.

We measure regional coverage with weighted coverage gap (WCovGap)~\citep{braunConditionalCoverageDiagnostics2025}.
Let $G$ be the test-supported groups of a fixed audit partition, with empirical coverage $c_g$, test count $n_g$, and total test size $n_{\mathrm{test}}$.
Then
\[
  \mathrm{WCovGap}
  =
  100\sum_{g\in G}\frac{n_g}{n_{\mathrm{test}}}|c_g-(1-\alpha)|.
\]
We also report marginal coverage and output size, meaning interval width for regression and set cardinality for classification.
Auditing on the learned \gls{som} alone could favor \gls{socp} by construction, so the primary audit uses an external $k$-means partition with $K=25$, fitted on the training representation and shared by every method within a seed.
Partitions with $K\in\{10,50\}$ test granularity, while held-out \gls{wsc} target error, the \gls{ssc} gap, and $L_1$-\gls{ert} provide partition-free checks.
For each dataset, headline significance uses a two-sided Wilcoxon signed-rank test on the ten paired WCovGap differences, followed by Holm correction across the ten datasets.

\subsection{Results}

Figure~\ref{fig:cr_tradeoff} gives the primary seed-level trade-off, while Tables~\ref{tab:cr_main_regression} and~\ref{tab:cr_main_classification} report the corresponding coverage, WCovGap, and output-size levels.

\begin{figure}[ht]
  \centering
  \includegraphics[width=0.72\textwidth]{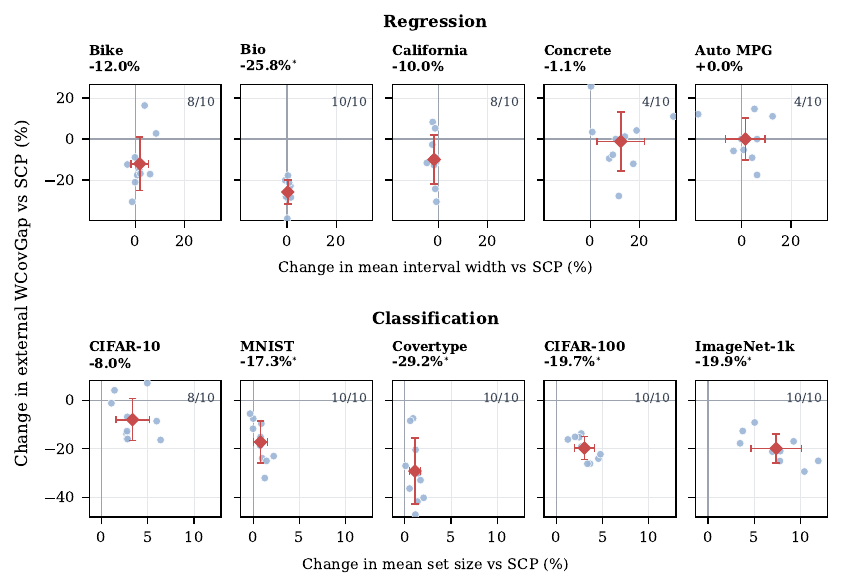}
  \caption{
    Seed-paired effect of SO-SCP against \gls{scp} on regression with the absolute residual score (top) and classification with the \gls{lac} score (bottom).
    Each dot is one seed's relative change in external WCovGap ($K=25$) and mean output size, namely interval width in the top row and set size in the bottom row.
    The diamond marks the mean paired effect, and the horizontal and vertical bars show one standard deviation across the ten seeds.
    Axes are shared within each row.
    Panel titles give the mean WCovGap change, with a star marking a two-sided paired Wilcoxon $p<0.05$ after Holm adjustment across the ten datasets.
    The top-right count gives the number of seeds with lower WCovGap under SO-SCP.
    Concrete and Auto MPG use their Regime~3 primary protocol.
  }
  \label{fig:cr_tradeoff}
\end{figure}

\paragraph{Does SO-SCP lower WCovGap without materially increasing output size?}
Across the eight datasets outside the sparse-buffer regime, the mean seed-paired WCovGap change favors SO-SCP on every dataset and averages $-17.7\%$, for a mean output-size change of $+2.0\%$.
Including the two datasets with small calibration buffers, Concrete and Auto MPG under Regime~3, SO-SCP improves nine of ten datasets and is effectively unchanged on Auto MPG at $+0.03\%$.
The overall mean changes are $-14.3\%$ in WCovGap and $+3.0\%$ in output size.
Bio, MNIST, Covertype, CIFAR-100, and ImageNet-1k remain significant after Holm correction.
On the eight Regime~2 datasets, every ten-seed mean marginal coverage for both methods lies within $0.008$ of the $0.90$ target.

The raw outputs remain close to those of \gls{scp}.
On four classification datasets, SO-SCP returns fewer than two labels on average even though their label spaces contain between $7$ and $1{,}000$ classes, and it adds at most $0.11$ label over \gls{scp}.
On CIFAR-100, the mean set size moves from $14.25$ to $14.67$ labels out of $100$.
For Bike Sharing, Bio, and California Housing, mean interval width changes by $+1.8\%$, $+0.4\%$, and $-1.8\%$.
As a result, SO-SCP reduces regional coverage gaps while preserving the output scale of pooled calibration.

\paragraph{Does SO calibration compose well with other scores?}
The headline comparison uses the base score for each task.
Now, we ask whether SO calibration remains useful with other score constructions.
For a common Regime~2 retrieval rule, SO calibration lowers mean WCovGap in $43$ of the $50$ dataset-score comparisons, $22$ of $25$ in regression and $21$ of $25$ in classification (Appendix Figures~\ref{fig:cr_composition_regression} and~\ref{fig:cr_composition_classification}).
The $43$ reductions average $0.66$ percentage points, while the seven increases average $0.09$ and occur only on Bike Sharing and MNIST, where the corresponding pooled scores already have the smallest gaps.
Averaged over the five datasets of each task, SO calibration lowers WCovGap for every score.
The base scores are even more consistent: absolute residual improves on every regression dataset, and \gls{lac} on every classification dataset.
Consequently, SO calibration composes across score families, lowering the five dataset average WCovGap for every score without changing its predictor or score definition.

\paragraph{How does SO-SCP compare with other calibrators?}
The composition study compares pooled and SO calibration across scores.
Now, we hold the base score fixed and compare the calibration rules directly in Appendix Tables~\ref{tab:cr_calibrator_tradeoff_regression} and~\ref{tab:cr_calibrator_tradeoff_classification} and Figures~\ref{fig:cr_calibrator_tradeoff_regression} and~\ref{fig:cr_calibrator_tradeoff_classification}.
On Bike Sharing, Bio, and California Housing, SO-SCP is the only non-pooled calibrator that keeps all $115{,}910$ recorded intervals finite.
It reduces WCovGap by $0.35$ to $0.81$ points there, with mean width changes between $-1.8\%$ and $+1.8\%$.
By comparison, \gls{lcp} and \gls{rlcp} leave between $0.77\%$ and $5.50\%$ of the predictions unbounded on the datasets where they return infinite intervals.
For classification, SO-SCP gains $0.39$ to $1.44$ WCovGap points, at least five times the gain of \gls{lcp} on every dataset, with set-size increases of $0.8\%$ to $7.4\%$.
On ImageNet-1k, \gls{rlcp} averages $139.33$ of the $1{,}000$ labels, against $1.48$ for \gls{scp} and $1.59$ for SO-SCP, while Classwise increases the set size by $697\%$.
On the same dataset, Clustered gains $0.90$ WCovGap points at a $22.4\%$ size increase, compared with $1.13$ points at $7.4\%$ for SO-SCP.
Mondrian $k$-means lowers WCovGap more than SO-SCP on all five classification datasets, but enlarges mean sets more on four of them, reaching $+19.2\%$ on ImageNet-1k against $+7.4\%$ for SO-SCP.
Also, the audit groups are independently fitted $k$-means partitions of the same representation, which may favor a method with matching geometry.
On the two sparse regression datasets, it also returns unbounded intervals at rates of $16.6\%$ and $28.5\%$.
On the well-supported datasets, SO-SCP combines consistent WCovGap reductions with modest output-size changes, without the extreme enlargement or unbounded outputs observed for several alternatives.

\paragraph{Do the improvements persist beyond the headline audit partition?}
The headline result uses one external $K=25$ partition.
We test whether SO-SCP continues to improve over \gls{scp} across other external granularities, the shared \gls{som} partition, and partition-free diagnostics.
The same Regime~2 predictions favor SO-SCP in all $30$ external dataset granularity comparisons with $K\in\{10,25,50\}$ (Appendix Figures~\ref{fig:cr_transfer_regression} and~\ref{fig:cr_transfer_classification}).
On the shared \gls{som} partition, SO-SCP lowers WCovGap on all eight datasets that use Regime~2 in the headline comparison, by $10.5\%$ on average, at a mean output-size change of $+2.0\%$.
The partition-free checks agree most consistently for held-out \gls{wsc} target error and $L_1$-\gls{ert}.
Their dataset level relative changes average respectively $-29.5\%$ and $-11.0\%$, improving on all nine datasets with a recorded \gls{wsc} value and on nine of ten datasets for \gls{ert}.
The \gls{ssc} gap improves on seven of ten datasets (Appendix Tables~\ref{tab:cr_conditional_regression} and~\ref{tab:cr_conditional_classification}).
These audits outline that the gain is not tied to one partition.

\paragraph{How sensitive is \gls{socp} to the map configuration and small calibration buffers?}
The preceding comparisons use one map configuration per dataset.
On California Housing, $15$ of the $18$ grid-radius configurations with no infinite cutoff have lower WCovGap than the pooled \gls{scp} value of $3.50$ points on the same external partitions, one matches it, and the remaining two exceed it by at most $0.02$ points (Appendix Figure~\ref{fig:cr_sensitivity}).
The reported configuration gives $3.15$ points rather than the sweep minimum of $2.32$, so the reduction does not rely on selecting the best configuration.
Calibration support becomes more restrictive on Concrete and Auto MPG.
Holding the map and Regime~2 radius fixed while reducing the calibration set to $20$ observations produces infinite cutoffs for $90.1\%$ and $83.2\%$ of queries.
Training-routed Regime~3 removes every infinite cutoff at all seven recorded calibration sizes, keeps marginal coverage between $0.897$ and $0.926$, and returns a finite mean width throughout (Appendix Figure~\ref{fig:cr_regime3}).
At $20$ observations it recovers pooled calibration, then restores locality as support grows and lowers WCovGap relative to pooling at every larger recorded size.
Thus, Regime~3 provides an automatic transition between pooling and local calibration while preventing the unbounded predictions caused by sparse fixed neighborhoods.

\paragraph{Where do the aggregate gains come from?}
The Covertype map shows how SO calibration redistributes output size within one fitted map (Figure~\ref{fig:cr_worked_example} and Table~\ref{tab:cr_worked_example}).
For seed $42$, WCovGap on the shared \gls{som} partition falls from $3.53$ to $3.04$ points, while marginal coverage changes from $0.902$ to $0.901$ and mean set size from $1.227$ to $1.228$ labels.
One undercovered cell moves from $0.836$ to $0.914$ coverage as its mean set size rises from $1.237$ to $1.612$ labels.
In an overcovered cell, coverage moves from $0.943$ to $0.898$ as mean set size falls from $1.193$ to $1.061$ labels.
The ten-seed Covertype result follows the same pattern: external WCovGap at $K=25$ improves by $29.2\%$ for a $1.1\%$ increase in mean set size, with every seed favoring SO-SCP.
This illustrates targeted cellwise adjustment rather than uniform set enlargement, and the ten-seed result highlights that the gain persists beyond seed $42$.

\section{Conclusion}
\label{sec:conclusion}

\glsdisp{socp}{Self-Organized Conformal Prediction} uses an unsupervised \gls{som}, fitted before calibration, to organize calibration scores into local buffers, while leaving the predictor and nonconformity score unchanged.
Its analysis separates the guarantee attached to a retrieved buffer from the approximation made when that threshold is interpreted at the buffer's central cell.
Regime~1 provides finite-sample cell-conditional validity.
Regimes~2 and~3 provide exact validity over the retrieved neighborhood used to compute the threshold.
At deployment, applying that threshold to the neighborhood's central cell gives a \gls{ks}-adjusted bound, reflecting the difference between the central-cell and neighborhood score distributions.
Training-routed Regime~3 preserves these guarantees because its enlargement budget is fixed from training occupancies, the planned calibration size, and the fixed target $b$ before calibration. 

Across the eight datasets evaluated with Regime~2 in the headline comparison, SO-SCP lowers WCovGap by $17.7\%$ on average for a $2.0\%$ increase in output size.
The benefit extends across scores and evaluation criteria.
SO calibration lowers WCovGap in $43$ of the $50$ dataset-score comparisons, while SO-SCP improves all $30$ external partition audits and all eight audits on the shared \gls{som} partition.
Held-out \gls{wsc} target error decreases on all nine datasets for which it is recorded, and $L_1$-\gls{ert} decreases on nine of ten.
The comparison with other calibrators highlights the practical value of this improvement.
SO-SCP is the only non-pooled regression method that keeps all intervals finite on Bike Sharing, Bio, and California Housing.
Across the five classification datasets, SO-SCP lowers WCovGap by $0.39$ to $1.44$ percentage points while limiting prediction-set growth to $0.8\%$ to $7.4\%$.
The alternative calibrators expose a sharper trade-off: some provide little WCovGap improvement, while others obtain larger reductions alongside substantial increases in prediction-set size.
On the two smallest calibration splits, training-routed Regime~3 removes every infinite cutoff across all seven recorded calibration-size settings.
The \gls{som} diagnostics in Appendix~\ref{app:spatial_diagnostics} connect these aggregate results to the fitted map.
Geometry and occupancy reveal calibration support, score-tail and \gls{ks} maps locate regional heterogeneity and neighborhood mismatch, and coverage, threshold, and output-size maps show that \gls{socp} corrects specific regions rather than widening predictions uniformly.

Locality still depends on the representation, map resolution, and retrieval rule.
Future work should (i) assess how reliably the cellwise and \gls{ks} diagnostics can be estimated from finite samples and (ii) test learned neighborhoods under distribution shift.
Also, assigning observations to several prototypes or combining map resolutions could soften grid boundaries and connect \gls{som} routing to the weighted localization used by \gls{lcp} and \gls{rlcp}~\citep{guanLocalizedConformalPrediction2022a,horeConformalPredictionLocal2023}.
Such extensions would require coverage guarantees tailored to these new retrieval rules.

\clearpage
\bibliographystyle{abbrvnat}
\bibliography{references}


\newpage
\appendix
\section{Appendix}
\label{sec:appendix}

\subsection{Deferred methodological details}
\label{app:deferred_methodology}

This appendix collects the proof tools (Mondrian validity, the \gls{ks} bridge, and tower-property transfer), the prototype-\gls{knn} extension, and the split-routed buffer that complement Section~\ref{sec:methodology}.

\begin{lemma}[Mondrian split-conformal validity]
    \label{lem:mondrian}
    Let $G:\mathcal{X}\to\mathcal{G}$ be a fixed measurable grouping into a countable set.
    For $g\in\mathcal{G}$, define
    \[
        S_g\coloneqq\{i:G(X_i)=g\},
        \qquad
        \hat q_g\coloneqq\hat q_{S_g}^{(\alpha)}.
    \]
    Under the standing assumptions, for every group with $\mathbb{P}(G(X_{n+1})=g)>0$,
    \[
        \mathbb{P}\!\left(Y_{n+1}\in C_\alpha(X_{n+1};\hat q_g)\mid G(X_{n+1})=g\right)
        \geq 1-\alpha.
    \]
\end{lemma}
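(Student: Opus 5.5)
The plan is the standard conditioning-on-group-size exchangeability argument. Fix $g$ with $p_g\coloneqq\mathbb{P}(G(X_{n+1})=g)>0$. Note first that $Y_{n+1}\in C_\alpha(X_{n+1};\hat q_g)$ is equivalent to $s_{n+1}\leq\hat q_g$. Let $N\coloneqq|S_g|$. Then $N$ is a function of $G(X_1),\dots,G(X_n)$ only, and is therefore independent of $(X_{n+1},Y_{n+1})$.

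I will condition on the joint event $E_{m,T}\coloneqq\{S_g=T,\ G(X_{n+1})=g\}$ for a fixed index set $T\subseteq\{1,\dots,n\}$ with $|T|=m$ and positive probability. Because the data are i.i.d., conditional on $E_{m,T}$ the pairs $\{(X_i,Y_i)\}_{i\in T\cup\{n+1\}}$ are i.i.d.\ from the conditional law $P(\cdot\mid G(X)=g)$. This follows from factorizing the product measure. These pairs are also independent of the remaining points, which do not affect $\hat q_g$.

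Hence the $m+1$ scores $\{s_i\}_{i\in T}\cup\{s_{n+1}\}$ are exchangeable conditional on $E_{m,T}$. If $\ell_m>m$, then $\hat q_g=+\infty$ and coverage holds trivially. Otherwise, the standard rank lemma gives
\[
\mathbb{P}(s_{n+1}\le \text{$\ell_m$-th order statistic of }\{s_i\}_{i\in T}\mid E_{m,T})\ge \ell_m/(m+1)\ge 1-\alpha .
\]
For the rank lemma, the event $s_{n+1}$ exceeding the $\ell_m$-th order statistic of the others implies that $s_{n+1}$ is strictly among the top $m+1-\ell_m$ of the $m+1$ values. Under exchangeability this has probability at most $(m+1-\ell_m)/(m+1)$, and ties only help. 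No tie-breaking randomization is needed, since we only need the lower bound.

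Finally, I average over $T$ by the tower property. The events $\{S_g=T\}$ partition the sample space. Conditional on $G(X_{n+1})=g$, their weights are $\mathbb{P}(S_g=T)$, by independence of the calibration data and the test point. Each conditional coverage is at least $1-\alpha$, so the mixture is too. This yields the claim.

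The main obstacle is making the conditional exchangeability step rigorous, in particular when $G(X)$ has atoms and the scores have ties. I would handle it by writing the joint density of the data restricted to $E_{m,T}$ as a product of $\mathds{1}\{G(x_i)=g\}\,P(dx_i,dy_i)$ terms. This product is visibly symmetric in the indices $T\cup\{n+1\}$, and the rank bound is stated in its tie-robust form.
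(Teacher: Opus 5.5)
Your proposal is correct and follows the paper's proof essentially step for step. You condition on the slices $\{S_g=T,\ G(X_{n+1})=g\}$, use the product structure to get i.i.d.\ in-group draws from $P(\cdot\mid G(X)=g)$, apply the rank bound with the $+\infty$ convention when $\ell_m>m$, and average the slice-conditional coverages. The only cosmetic difference is that the paper gets the rank bound by breaking ties with an auxiliary independent uniform, while you use the deterministic tie-robust counting form, which works equally well.
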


\begin{lemma}[\glsentryshort{ks} bridge]
    \label{lem:bridge}
    Let $k\in\mathcal{K}_+$ and let $A\subseteq\mathcal{K}$ satisfy $k\in A$ and $\pi_A>0$.
    For every $q\in\mathbb{R}\cup\{+\infty\}$,
    \[
        \left|F^{(k)}(q)-F^A(q)\right|\leq \varepsilon_k(A),
    \]
    with the convention $F^{(k)}(+\infty)=F^A(+\infty)=1$.
    The bound is uniform in $q$, hence pathwise valid for any $\sigma(\mathcal{D}_{\mathrm{cal}})$-measurable threshold.
\end{lemma}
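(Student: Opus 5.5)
The bound follows from writing the mixture CDF as a convex combination of cell CDFs and applying the triangle inequality pointwise in $q$. Fix $k\in\mathcal{K}_+$ and $A\ni k$ with $\pi_A>0$. By Definition~\ref{def:cell-mass}, the weights $\{\tilde\pi_j^A\}_{j\in A\cap\mathcal{K}_+}$ are nonnegative, and they sum to one because $\sum_{j\in A\cap\mathcal{K}_+}\pi_j=\pi_A$ (inactive cells have $\pi_j=0$). Hence for any finite $q$ we can write
\[
  F^{(k)}(q)-F^A(q)=\sum_{j\in A\cap\mathcal{K}_+}\tilde\pi_j^A\bigl(F^{(k)}(q)-F^{(j)}(q)\bigr).
\]

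Applying the triangle inequality gives
\[
  \bigl|F^{(k)}(q)-F^A(q)\bigr|\le\sum_{j\in A\cap\mathcal{K}_+}\tilde\pi_j^A\bigl|F^{(k)}(q)-F^{(j)}(q)\bigr|.
\]
Each absolute difference is at most $d_{\mathrm{KS}}(k,j)$ by Definition~\ref{def:ks-bias}, since $k$ and $j$ are both active. The right-hand side is therefore at most $\varepsilon_k(A)$. The $j=k$ term contributes $d_{\mathrm{KS}}(k,k)=0$, which is harmless.

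For $q=+\infty$, the stated convention gives $F^{(k)}(+\infty)=F^A(+\infty)=1$. The left-hand side is then $0\le\varepsilon_k(A)$. This convention is also consistent with taking limits of CDFs, since the mixture of limits equals the limit of the mixture.

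The pathwise claim is immediate. The bound holds for every deterministic $q\in\mathbb{R}\cup\{+\infty\}$, and $F^{(k)}$ and $F^A$ are deterministic functions, conditional on $\mathcal{D}_{\mathrm{pre}}$. So for any random threshold $\hat Q$ that is $\sigma(\mathcal{D}_{\mathrm{cal}})$-measurable, the inequality holds realization by realization when we substitute $q=\hat Q(\omega)$. No measurability or integrability subtlety arises, since no expectation is being taken here.

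The only step needing care is the bookkeeping of inactive cells. It must be checked that the mixture in Definition~\ref{def:cell-cdf} is indeed normalized over $A\cap\mathcal{K}_+$. It must also be checked that $F^{(j)}$ is defined only for active $j$, so that every KS term in the sum is well defined. Beyond this, the argument is elementary.
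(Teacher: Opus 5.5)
Your proof is correct and matches the paper's argument: the mixture identity over $A\cap\mathcal{K}_+$, the triangle inequality with each term bounded by $d_{\mathrm{KS}}(k,j)$, the trivial case $q=+\infty$, and uniformity in $q$ for the pathwise transfer. You also check explicitly that the weights $\tilde\pi_j^A$ sum to one over the active cells, a step the paper leaves implicit when it passes from the mixture identity to the difference formula.
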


\begin{lemma}[Tower-property transfer]
    \label{lem:tower-transfer}
    Let $A\subseteq\mathcal{K}$ with $\pi_A>0$, $k\in A\cap\mathcal{K}_+$, and let $\hat Q$ be any $\sigma(\mathcal{D}_{\mathrm{cal}})$-measurable threshold.
    Write $\mathbb{E}_{\mathrm{cal}}[Z]\coloneqq\mathbb{E}[Z]$ for any $\sigma(\mathcal{D}_{\mathrm{cal}})$-measurable $Z$.
    Under the standing assumptions,
    \[
        \mathbb{P}\!\left(s_{n+1}\leq \hat Q\mid k^\ast=k\right)
        =
        \mathbb{E}_{\mathrm{cal}}\!\left[F^{(k)}(\hat Q)\right],
        \qquad
        \mathbb{P}\!\left(s_{n+1}\leq \hat Q\mid k^\ast\in A\right)
        =
        \mathbb{E}_{\mathrm{cal}}\!\left[F^A(\hat Q)\right],
    \]
    and
    \[
        \mathbb{P}\!\left(s_{n+1}\leq \hat Q\mid k^\ast=k\right)
        \geq
        \mathbb{P}\!\left(s_{n+1}\leq \hat Q\mid k^\ast\in A\right)-\varepsilon_k(A).
    \]
    In particular, $\mathbb{P}(s_{n+1}\leq \hat Q\mid k^\ast\in A)\geq1-\alpha$ implies $\mathbb{P}(s_{n+1}\leq \hat Q\mid k^\ast=k)\geq1-\alpha-\varepsilon_k(A)$.
\end{lemma}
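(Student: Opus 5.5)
The plan has three steps: prove the two identities by conditioning on the calibration data, then obtain the inequality by applying Lemma~\ref{lem:bridge} pathwise, and read off the ``in particular'' clause at the end.

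\textbf{Step 1: the two identities.} Under the standing assumptions, and conditional on $\mathcal{D}_{\mathrm{pre}}$, the test pair $(X_{n+1},Y_{n+1})$ is independent of $\mathcal{D}_{\mathrm{cal}}$. The threshold $\hat Q$ is $\sigma(\mathcal{D}_{\mathrm{cal}})$-measurable, so it is independent of $(X_{n+1},Y_{n+1})$. Since $k\in\mathcal{K}_+$, we have $\mathbb{P}(k^\ast=k)=\pi_k>0$, and the conditional probability is elementary:
\[
\mathbb{P}(s_{n+1}\le\hat Q\mid k^\ast=k)=\pi_k^{-1}\,\mathbb{E}\bigl[\mathbf{1}\{s_{n+1}\le\hat Q\}\mathbf{1}\{k^\ast=k\}\bigr].
\]
Conditioning inside the expectation on $\mathcal{D}_{\mathrm{cal}}$ (tower property) freezes $\hat Q=q$. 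Independence then lets us evaluate the inner expectation as $\mathbb{P}(s(X,Y)\le q,\ \mathrm{BMU}(X)=k)=\pi_k F^{(k)}(q)$. This is the standard freezing lemma, or simply Fubini on the product measure. Dividing by $\pi_k$ gives the first identity. The second is identical with the event $\{k^\ast\in A\}$ and $\pi_A>0$, using
\[
\mathbb{P}(s(X,Y)\le q,\ \mathrm{BMU}(X)\in A)=\pi_A F^A(q),
\]
which holds by Definition~\ref{def:cell-cdf}.

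\textbf{The main obstacle} is technical rather than conceptual: $\hat Q$ may equal $+\infty$. I would handle this with the convention $F(+\infty)=1$, which is consistent because $s_{n+1}\le+\infty$ always holds. Measurability of $(q,z)\mapsto\mathbf{1}\{s(z)\le q\}$ on the extended reals is immediate. The other point to check is that conditioning on a positive-probability event is compatible with the tower step. It is, because the indicator of that event is a function of the test point alone.

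\textbf{Step 2: the inequality.} For every realization of $\mathcal{D}_{\mathrm{cal}}$, Lemma~\ref{lem:bridge} gives $F^{(k)}(\hat Q)\ge F^A(\hat Q)-\varepsilon_k(A)$, including when $\hat Q=+\infty$. Taking $\mathbb{E}_{\mathrm{cal}}$ of both sides and substituting the two identities from Step~1 yields the claim, since $\varepsilon_k(A)$ is deterministic given $\mathcal{D}_{\mathrm{pre}}$. The ``in particular'' clause then follows directly.
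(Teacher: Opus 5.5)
Your proposal is correct and follows essentially the same route as the paper: the paper proves $\mathbb{P}(s_{n+1}\leq\hat Q\mid k^\ast\in B)=\mathbb{E}_{\mathrm{cal}}[F^B(\hat Q)]$ once for a generic cell set $B$ with $\pi_B>0$ by freezing $\hat Q$ given $\mathcal{D}_{\mathrm{cal}}$ and dividing by $\pi_B$, specializes it to $B=\{k\}$ and $B=A$, and then takes $\mathbb{E}_{\mathrm{cal}}$ of the pathwise bound from Lemma~\ref{lem:bridge}. Your explicit treatment of $\hat Q=+\infty$ through the convention $F(+\infty)=1$ matches the convention already built into Lemma~\ref{lem:bridge}.
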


\subsubsection*{Fixed prototype-KNN retrieval}

For an integer $L\in\{0,\dots,K\}$, let $K_L(k)$ collect the first $\min\{L,K-|N(k,r)|\}$ cells outside $N(k,r)$ in increasing prototype distance $\|w_j-w_k\|_2$.
Ties are resolved by the deterministic rule fixed with the \gls{som}.
Define
\[
    R_L(k)\coloneqq N(k,r)\cup K_L(k),
    \qquad
    \hat Q_k^{(3,L)}\coloneqq\hat q^{(\alpha)}_{\mathcal{I}_{R_L(k)}},
    \qquad
    \hat q^{(3,L)}\coloneqq\hat Q_{k^\ast}^{(3,L)}.
\]
The set $R_L(k)$ depends only on the pre-trained \gls{som}, so it is calibration-free.
The case $L=0$ recovers Regime~2, while $L=K$ retrieves the full map for every reference cell.

\begin{theorem}[Fixed-cell KNN validity, Regime~3]
    \label{thm:regime3a}
    Fix $L\in\{0,\dots,K\}$.
    Under the standing assumptions, for every active cell $k\in\mathcal{K}_+$,
    \begin{align}
        \mathbb{P}\!\left(s_{n+1}\leq \hat Q_k^{(3,L)}\mid \mathrm{BMU}(X_{n+1})\in R_L(k)\right)
         & \geq 1-\alpha, \label{eq:r3a-nbhd}                       \\
        \mathbb{P}\!\left(Y_{n+1}\in C_\alpha(X_{n+1};\hat q^{(3,L)})\mid \mathrm{BMU}(X_{n+1})=k\right)
         & \geq 1-\alpha-\varepsilon_k(R_L(k)). \label{eq:r3a-cell}
    \end{align}
\end{theorem}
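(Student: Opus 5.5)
The proof reduces both inequalities to the lemmas already stated, using that for fixed $L$ the retrieved set $R_L(k)$ is a deterministic function of the pre-trained \gls{som} and is therefore fixed conditional on $\mathcal{D}_{\mathrm{pre}}$.

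For \eqref{eq:r3a-nbhd}, fix the reference cell $k$ and set $A\coloneqq R_L(k)$. Since $k\in N(k,r)\subseteq A$ and $k\in\mathcal{K}_+$, we have $\pi_A\geq\pi_k>0$. Define the binary grouping $G(x)\coloneqq\mathds{1}\{\mathrm{BMU}(x)\in A\}$. It is measurable because $\mathrm{BMU}$ is measurable and $A$ is fixed. Its group $\{G=1\}$ collects exactly the calibration indices $\mathcal{I}_A$, so the Mondrian threshold $\hat q_1$ of Lemma~\ref{lem:mondrian} equals $\hat q^{(\alpha)}_{\mathcal{I}_A}=\hat Q_k^{(3,L)}$. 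Lemma~\ref{lem:mondrian} then gives
\[
\mathbb{P}\bigl(s_{n+1}\leq \hat Q_k^{(3,L)}\mid \mathrm{BMU}(X_{n+1})\in A\bigr)\geq 1-\alpha,
\]
since $Y_{n+1}\in C_\alpha(X_{n+1};q)$ is equivalent to $s_{n+1}\leq q$. This is the same argument as Theorem~\ref{thm:regime2a}, with $N(k,r)$ replaced by $R_L(k)$.

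For \eqref{eq:r3a-cell}, the key observation is that on the event $\{k^\ast=k\}$ the deployed threshold satisfies $\hat q^{(3,L)}=\hat Q_{k^\ast}^{(3,L)}=\hat Q_k^{(3,L)}$. Hence
\[
\mathbb{P}\bigl(Y_{n+1}\in C_\alpha(X_{n+1};\hat q^{(3,L)})\mid k^\ast=k\bigr)=\mathbb{P}\bigl(s_{n+1}\leq \hat Q_k^{(3,L)}\mid k^\ast=k\bigr).
\]
The threshold $\hat Q_k^{(3,L)}$ is $\sigma(\mathcal{D}_{\mathrm{cal}})$-measurable, possibly equal to $+\infty$, which the convention in Lemma~\ref{lem:bridge} covers. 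We may therefore apply Lemma~\ref{lem:tower-transfer} with $A=R_L(k)$ and $k\in A\cap\mathcal{K}_+$, feeding it the bound from \eqref{eq:r3a-nbhd}. This yields the lower bound $1-\alpha-\varepsilon_k(R_L(k))$.

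The only delicate point is that replacement of the random, $k^\ast$-dependent buffer by the fixed one. The concern is whether conditioning on $k^\ast=k$ interacts with the calibration data. It does not, because the test point is independent of $\mathcal{D}_{\mathrm{cal}}$ and $R_L(\cdot)$ does not depend on the calibration data, so the substitution holds pathwise on the event. This independence is exactly what the tower-property lemma encodes.

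I also need to check two edge cases. When $L\geq K-|N(k,r)|$, we have $R_L(k)=\mathcal{K}$, and the result reduces to pooled split conformal with $\varepsilon_k(\mathcal{K})$. When $L=0$, it recovers Theorem~\ref{thm:regime2b}. In both cases the argument goes through unchanged.
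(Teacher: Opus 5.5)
Your proof is correct and follows essentially the same route as the paper. For the retrieved-set bound you apply Lemma~\ref{lem:mondrian} to the calibration-free binary grouping $\mathds{1}\{\mathrm{BMU}(x)\in R_L(k)\}$, and for the central-cell bound you use Lemma~\ref{lem:tower-transfer} with $A=R_L(k)$ together with the pathwise identity $\hat q^{(3,L)}=\hat Q_k^{(3,L)}$ on $\{k^\ast=k\}$.
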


\begin{corollary}[Marginal validity of fixed-cell Regime~3]
    \label{cor:regime3a-marg}
    Under the standing assumptions,
    \[
        \mathbb{P}\!\left(Y_{n+1}\in C_\alpha(X_{n+1};\hat q^{(3,L)})\right)
        \geq
        1-\alpha-\sum_{k\in\mathcal{K}_+}\pi_k\varepsilon_k(R_L(k)).
    \]
\end{corollary}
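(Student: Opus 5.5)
The plan is to obtain the marginal bound by averaging the central-cell bound \eqref{eq:r3a-cell} of Theorem~\ref{thm:regime3a} over the random BMU of the query, using the law of total probability.

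\textbf{Step 1: decompose by cell.} Since $\mathrm{BMU}(X_{n+1})\in\mathcal{K}$ almost surely and $\mathbb{P}(k^\ast=k)=\pi_k$, the events $\{k^\ast=k\}$ with $k\in\mathcal{K}_+$ partition the sample space up to a null set. Inactive cells have $\pi_k=0$ and carry zero probability, so they can be discarded. This gives
\[
\mathbb{P}\!\left(Y_{n+1}\in C_\alpha(X_{n+1};\hat q^{(3,L)})\right)=\sum_{k\in\mathcal{K}_+}\pi_k\,\mathbb{P}\!\left(Y_{n+1}\in C_\alpha(X_{n+1};\hat q^{(3,L)})\mid k^\ast=k\right).
\]
Each conditional probability is well defined because $\pi_k>0$.

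\textbf{Step 2: apply the central-cell bound.} For every $k\in\mathcal{K}_+$, \eqref{eq:r3a-cell} bounds the conditional term below by $1-\alpha-\varepsilon_k(R_L(k))$. Substituting, and using $\sum_{k\in\mathcal{K}_+}\pi_k=1$, gives
\[
\sum_{k\in\mathcal{K}_+}\pi_k\bigl(1-\alpha-\varepsilon_k(R_L(k))\bigr)=1-\alpha-\sum_{k\in\mathcal{K}_+}\pi_k\varepsilon_k(R_L(k)),
\]
which is the claimed bound.

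The only point needing care is that Theorem~\ref{thm:regime3a} is stated for the deployed threshold $\hat q^{(3,L)}=\hat Q^{(3,L)}_{k^\ast}$ conditional on $k^\ast=k$. On that event the deployed threshold equals the fixed-reference threshold $\hat Q^{(3,L)}_k$, and this is exactly the quantity \eqref{eq:r3a-cell} controls, so no additional measurability issue arises when summing over cells. I also need $L$ to be fixed, independent of the calibration data, which holds by hypothesis. I expect no real obstacle here; the argument mirrors Corollary~\ref{cor:regime2c}, which is the special case $L=0$.
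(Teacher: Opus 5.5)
Your proposal is correct and follows the paper's proof: you apply total probability over the events $\{\mathrm{BMU}(X_{n+1})=k\}_{k\in\mathcal{K}_+}$, use the central-cell bound \eqref{eq:r3a-cell} in each cell, and then use $\sum_{k\in\mathcal{K}_+}\pi_k=1$. Your remark that $L=0$ recovers Corollary~\ref{cor:regime2c} is also accurate, since $R_0(k)=N(k,r)$.
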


\begin{corollary}[Training-routed Regime~3]
    \label{cor:regime3-training}
    Let $L^\star$ be selected by~\eqref{eq:training-routed-budget} from the training \gls{bmu} counts, the planned calibration size, and a fixed projected-size target $b\leq n$.
    Conditional on $\mathcal{D}_{\mathrm{pre}}$, the two bounds in Theorem~\ref{thm:regime3a} and the marginal bound in Corollary~\ref{cor:regime3a-marg} hold with $L=L^\star$.
\end{corollary}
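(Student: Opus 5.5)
The argument reduces Corollary~\ref{cor:regime3-training} to Theorem~\ref{thm:regime3a} and Corollary~\ref{cor:regime3a-marg}. The only work is to show that $L^\star$ is a deterministic integer once we condition on $\mathcal{D}_{\mathrm{pre}}$. We will then apply the fixed-$L$ results pointwise in $L$.

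\textbf{Step 1: $L^\star$ is $\sigma(\mathcal{D}_{\mathrm{pre}})$-measurable.} Inspecting~\eqref{eq:training-routed-budget}, $L^\star$ depends only on four ingredients. The first is the trained prototypes $\{w_k\}$, which determine $\mathrm{BMU}$, the grid neighborhoods $N(k,r)$, and the prototype-distance orders with deterministic tie-breaking; hence they fix $R_L(k)$ for every $L$ and $k$. The second is the counts $T_j$, which are functions of the inputs of $\mathcal{D}_{\mathrm{tr}}$ through the fixed map $\mathrm{BMU}$. The third and fourth are the fixed design constants $n$ and $b$.

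The set over which the minimum is taken is nonempty. At $L=K$ we have $R_K(k)=\mathcal{K}$, so $\widehat m_k(K)=n\geq b$. Thus $L^\star\in\{0,\dots,K\}$ is well defined. No calibration pair $(X_i,Y_i)$, $i\le n$, and not $(X_{n+1},Y_{n+1})$, enters this computation.

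\textbf{Step 2: condition and plug in.} All probabilities are conditional on $\mathcal{D}_{\mathrm{pre}}$. Under this conditioning the calibration and test points remain i.i.d.\ from $P$, and $L^\star=\ell$ for some fixed integer $\ell$. The sets $R_{L^\star}(k)=R_\ell(k)$ are therefore fixed, calibration-free cell sets. So $\hat Q_k^{(3,L^\star)}=\hat Q_k^{(3,\ell)}$ and $\hat q^{(3,L^\star)}=\hat q^{(3,\ell)}$.

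Theorem~\ref{thm:regime3a}, applied with $L=\ell$, gives~\eqref{eq:r3a-nbhd} and~\eqref{eq:r3a-cell}. Corollary~\ref{cor:regime3a-marg}, also with $L=\ell$, gives the marginal bound. These bounds hold for every realization of $\mathcal{D}_{\mathrm{pre}}$, which is exactly the claimed conditional statement. If an unconditional version is desired, averaging over $\mathcal{D}_{\mathrm{pre}}$ preserves the $1-\alpha$ bounds. The bias terms then become expectations of $\varepsilon_k(R_{L^\star}(k))$.

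\textbf{Main obstacle.} The delicate point is Step 1, namely ensuring no hidden calibration dependence. In particular, $\widehat m_k(L)$ uses the \emph{planned} $n$, not realized buffer sizes $|\mathcal{I}_{R_L(k)}|$. Were realized sizes used, the retrieved set would depend on calibration data, and the Mondrian rank argument (Lemma~\ref{lem:mondrian}) behind Theorem~\ref{thm:regime3a} would fail. The proof should also stress that projection error is irrelevant to validity. Cells with $T_j=0$ are still deterministically placed in the order. If the SOM training were itself randomized, the randomness would have to be independent of the calibration data and absorbed into $\mathcal{D}_{\mathrm{pre}}$, as the standing assumptions already stipulate.
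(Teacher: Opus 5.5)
Your proposal is correct and follows the paper's proof. Condition on $\mathcal{D}_{\mathrm{pre}}$ and note that $L^\star$ depends only on the trained map, the prototype-distance orders, the training counts $T_j$, and the design constants $n$ and $b$; hence the family $\{R_{L^\star}(k)\}_{k\in\mathcal{K}}$ is fixed before calibration, and Theorem~\ref{thm:regime3a} and Corollary~\ref{cor:regime3a-marg} apply at that value. Your explicit check that the minimum in \eqref{eq:training-routed-budget} is attained, because $R_K(k)=\mathcal{K}$ gives $\widehat m_k(K)=n\geq b$, is a small well-definedness detail that the paper's proof leaves implicit.
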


\begin{proof}
    Conditional on $\mathcal{D}_{\mathrm{pre}}$, the trained \gls{som}, the prototype-distance orders, and the training counts are fixed.
    Since the planned calibration size $n$ and the target $b$ are fixed design values, $L^\star$ is also fixed.
    Consequently, the entire family $\{R_{L^\star}(k):k\in\mathcal{K}\}$ is fixed before calibration.
    At prediction time, the test point selects one member of this family through its \gls{bmu} $k^\ast$.
    Apply Theorem~\ref{thm:regime3a} and Corollary~\ref{cor:regime3a-marg} at that fixed value.
\end{proof}

\subsubsection*{Split-routed target buffer}

Reusing $\mathcal{D}_{\mathrm{cal}}$ both to enlarge the retrieved set and to compute the quantile breaks the calibration-free property of the retrieved set.
Data splitting restores it.
Let
\[
    \mathcal{D}_{\mathrm{route}}\coloneqq\{X_a^{\mathrm{route}}\}_{a=1}^{n_{\mathrm{route}}}
\]
be independent of $\mathcal{D}_{\mathrm{cal}}$ and of the test point.
For each cell $k$, set
\[
    A_h(k)\coloneqq N(k,r)\cup K_h(k),
    \qquad h=0,\dots,K-|N(k,r)|,
\]
where $K_h(k)$ collects the $h$ closest cells outside $N(k,r)$ in prototype distance $\|w_j-w_k\|_2$.
For a target routing count $b\in\{0,\dots,n_{\mathrm{route}}\}$,
\[
    \hat L_b(k)
    \coloneqq
    \min\!\left\{h:\sum_{a=1}^{n_{\mathrm{route}}}\mathds{1}\{\mathrm{BMU}(X_a^{\mathrm{route}})\in A_h(k)\}\geq b\right\}.
\]
The minimum is well-defined: at the largest admissible $h=K-|N(k,r)|$, $K_h(k)=\mathcal{K}\setminus N(k,r)$ and hence $A_h(k)=\mathcal{K}$, so every route sample's \gls{bmu} lies in $A_h(k)$ and the indicator sum equals $n_{\mathrm{route}}\geq b$.
We then set
\[
    \hat R_b(k)\coloneqq A_{\hat L_b(k)}(k),
    \qquad
    \hat Q_k^{(\mathrm{tar})}\coloneqq\hat q^{(\alpha)}_{\mathcal{I}_{\hat R_b(k)}},
    \qquad
    \hat q^{(\mathrm{tar})}\coloneqq\hat Q_{k^\ast}^{(\mathrm{tar})}.
\]

\emph{Conditioning.} Conditional on $\mathcal{D}_{\mathrm{route}}$, $\hat R_b(k)$ is a deterministic union of \gls{som} cells, and $(\mathcal{D}_{\mathrm{cal}},X_{n+1},Y_{n+1})$ remain i.i.d.\ from $P$ by the independence of the routing split.

\begin{theorem}[Split-routed target-buffer validity]
    \label{thm:split-target}
    Under the standing assumptions and the independent routing split above, for every active cell $k\in\mathcal{K}_+$, $\mathcal{D}_{\mathrm{route}}$-almost surely,
    \begin{align}
        \mathbb{P}\!\left(s_{n+1}\leq \hat Q_k^{(\mathrm{tar})}\mid \mathrm{BMU}(X_{n+1})\in \hat R_b(k),\mathcal{D}_{\mathrm{route}}\right)
         & \geq 1-\alpha, \label{eq:target-retrieved}                       \\
        \mathbb{P}\!\left(Y_{n+1}\in C_\alpha(X_{n+1};\hat q^{(\mathrm{tar})})\mid \mathrm{BMU}(X_{n+1})=k,\mathcal{D}_{\mathrm{route}}\right)
         & \geq 1-\alpha-\varepsilon_k(\hat R_b(k)). \label{eq:target-cell}
    \end{align}
    Marginalizing over $\mathcal{D}_{\mathrm{route}}$,
    \[
        \mathbb{P}\!\left(Y_{n+1}\in C_\alpha(X_{n+1};\hat q^{(\mathrm{tar})})\mid \mathrm{BMU}(X_{n+1})=k\right)
        \geq
        1-\alpha-\mathbb{E}_{\mathrm{route}}\!\left[\varepsilon_k(\hat R_b(k))\right].
    \]
\end{theorem}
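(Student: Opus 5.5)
The plan is to condition on $\mathcal{D}_{\mathrm{route}}$ and reduce to the fixed-set arguments already available. By the Conditioning remark, once we condition on $\mathcal{D}_{\mathrm{route}}$ the family $\{\hat R_b(k):k\in\mathcal{K}\}$ is a deterministic collection of cell unions. The triple $(\mathcal{D}_{\mathrm{cal}},X_{n+1},Y_{n+1})$ is still i.i.d.\ from $P$, because the routing split is independent of it. So every standing assumption holds in the conditional probability space $\mathbb{P}(\cdot\mid\mathcal{D}_{\mathrm{route}})$, and the cell masses $\pi_j$, the CDFs $F^{(j)}$, $F^A$, and hence $\varepsilon_k(A)$ are unchanged by this conditioning.

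For \eqref{eq:target-retrieved}, fix $k\in\mathcal{K}_+$ and set $A\coloneqq\hat R_b(k)$, which is now a fixed set. Since $k\in N(k,r)\subseteq A$, we have $\pi_A\geq\pi_k>0$. Apply Lemma~\ref{lem:mondrian} with the binary grouping $G(x)\coloneqq\mathds{1}\{\mathrm{BMU}(x)\in A\}$, taking the group $g=1$. Then $S_1=\mathcal{I}_A$ and $\hat q_1=\hat Q_k^{(\mathrm{tar})}$, and coverage is the event $s_{n+1}\leq\hat Q_k^{(\mathrm{tar})}$. This gives \eqref{eq:target-retrieved}, exactly as in Theorem~\ref{thm:regime2a}.

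For \eqref{eq:target-cell}, observe that on the event $\{k^\ast=k\}$ the deployed threshold satisfies $\hat q^{(\mathrm{tar})}=\hat Q_k^{(\mathrm{tar})}$. The latter is $\sigma(\mathcal{D}_{\mathrm{cal}})$-measurable given $\mathcal{D}_{\mathrm{route}}$. We can therefore replace the adaptive threshold by the fixed-reference one inside the conditional probability. Applying Lemma~\ref{lem:tower-transfer} under $\mathbb{P}(\cdot\mid\mathcal{D}_{\mathrm{route}})$ with $A=\hat R_b(k)$ and $\hat Q=\hat Q_k^{(\mathrm{tar})}$ gives
\[
\mathbb{P}(s_{n+1}\leq\hat Q\mid k^\ast=k,\mathcal{D}_{\mathrm{route}})\geq\mathbb{P}(s_{n+1}\leq\hat Q\mid k^\ast\in A,\mathcal{D}_{\mathrm{route}})-\varepsilon_k(A).
\]
Combining this with \eqref{eq:target-retrieved} yields \eqref{eq:target-cell}.

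The step needing the most care is justifying these conditional applications. Both lemmas are stated under the standing assumptions, which condition only on $\mathcal{D}_{\mathrm{pre}}$. I would argue that their proofs use only the i.i.d.\ structure of calibration and test points together with the fixedness of the sets, and both properties hold under $\mathbb{P}(\cdot\mid\mathcal{D}_{\mathrm{route}})$ almost surely by independence. A regular conditional distribution, or simply a finite sum over the possible realizations of the finitely many cell sets, makes this rigorous. The latter works because $\hat R_b(k)$ takes only finitely many values.

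For the marginal statement, the event $\{k^\ast=k\}$ depends only on $X_{n+1}$, which is independent of $\mathcal{D}_{\mathrm{route}}$. Hence its conditional probability given $\mathcal{D}_{\mathrm{route}}$ equals $\pi_k>0$, and by the tower property
\[
\mathbb{P}(\cdot\mid k^\ast=k)=\mathbb{E}_{\mathrm{route}}[\mathbb{P}(\cdot\mid k^\ast=k,\mathcal{D}_{\mathrm{route}})].
\]
Taking expectations in \eqref{eq:target-cell} then gives the bound $1-\alpha-\mathbb{E}_{\mathrm{route}}[\varepsilon_k(\hat R_b(k))]$.
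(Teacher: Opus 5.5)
Your proposal is correct and follows the paper's proof essentially step for step: you condition on $\mathcal{D}_{\mathrm{route}}$, apply Lemma~\ref{lem:mondrian} to the binary grouping $\mathds{1}\{\mathrm{BMU}(x)\in\hat R_b(k)\}$ at $g=1$, transfer to the central cell with Lemma~\ref{lem:tower-transfer}, and marginalize using independence of the test point and the routing split. Your remark that $\mathbb{P}(k^\ast=k\mid\mathcal{D}_{\mathrm{route}})=\pi_k$ is constant is exactly the fact that justifies the marginalization identity, which the paper states without spelling out.
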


\subsection{Technical proofs}
\label{app:theory_proofs}

\paragraph{Proof of Lemma~\ref{lem:mondrian}.}
\begin{proof}
    Fix $g$ with $\mathbb{P}(G(X_{n+1})=g)>0$.

    \emph{Step~1 (slice).}
    For deterministic $A\subseteq\{1,\dots,n\}$, set $E_A\coloneqq\{S_g=A,\ G(X_{n+1})=g\}$.
    By i.i.d.\ data and measurability of $G$, $E_A$ is a product of independent events, and conditioning on $E_A$ leaves the in-group coordinates $\{(X_i,Y_i):i\in A\}\cup\{(X_{n+1},Y_{n+1})\}$ i.i.d.\ from $P(\cdot\mid G(X)=g)$.

    \emph{Step~2 (rank uniformity).}
    Set $m=|A|$.
    Step~1 makes the score sample exchangeable under $\mathbb{P}(\cdot\mid E_A)$.
    Tie-break by an independent uniform variable.
    The rank $R$ of $s_{n+1}$ among the $m+1$ tie-broken scores satisfies $\mathbb{P}(R\leq\ell\mid E_A)=\ell/(m+1)$ for $\ell\in\{1,\dots,m+1\}$.
    Tie-break refinement gives $\{R\leq\ell\}\subseteq\{s_{n+1}\leq s_{(\ell:m)}\}$.

    \emph{Step~3 (rank-quantile bound).}
    Set $\ell=\lceil(1-\alpha)(m+1)\rceil$.
    If $\ell>m$, $\hat q_g=+\infty$ gives $\mathbb{P}(s_{n+1}\leq\hat q_g\mid E_A)=1$.
    Otherwise $\hat q_g=s_{(\ell:m)}$ and
    \[
        \mathbb{P}(s_{n+1}\leq\hat q_g\mid E_A)
        \geq
        \mathbb{P}(R\leq\ell\mid E_A)
        =
        \frac{\ell}{m+1}
        \geq 1-\alpha.
    \]

    \emph{Step~4 (slice averaging).}
    The slices $\{E_A\}$ partition $\{G(X_{n+1})=g\}$, so $\mathbb{P}(s_{n+1}\leq\hat q_g\mid G(X_{n+1})=g)$ is a convex combination of slice-conditional probabilities, each at least $1-\alpha$.
    By Definition~\ref{def:conformal-q}, $\{s_{n+1}\leq\hat q_g\}=\{Y_{n+1}\in C_\alpha(X_{n+1};\hat q_g)\}$.
\end{proof}

\paragraph{Proof of Lemma~\ref{lem:bridge}.}
\begin{proof}
    At $q=+\infty$ both \glspl{cdf} equal one, so the bound holds trivially.
    Fix $q\in\mathbb{R}$.
    Definition~\ref{def:cell-cdf} gives the mixture identity $F^A(q)=\sum_{j\in A\cap\mathcal{K}_+}\tilde\pi_j^A F^{(j)}(q)$, hence
    \[
        F^{(k)}(q)-F^A(q)
        =
        \sum_{j\in A\cap\mathcal{K}_+}\tilde\pi_j^A\bigl(F^{(k)}(q)-F^{(j)}(q)\bigr).
    \]
    The triangle inequality combined with $|F^{(k)}(q)-F^{(j)}(q)|\leq d_{\mathrm{KS}}(k,j)$ uniformly in $q$ yields
    \[
        \bigl|F^{(k)}(q)-F^A(q)\bigr|
        \leq
        \sum_{j\in A\cap\mathcal{K}_+}\tilde\pi_j^A d_{\mathrm{KS}}(k,j)
        =
        \varepsilon_k(A).
    \]
    The right-hand side is independent of $q$, so the bound transfers pathwise to any calibration-measurable threshold.
\end{proof}

\paragraph{Proof of Lemma~\ref{lem:tower-transfer}.}
\begin{proof}
    \emph{Step~1 (fixed-set tower identity).}
    Let $B\subseteq\mathcal{K}$ with $\pi_B>0$, $E_B=\{k^\ast\in B\}$.
    Conditional on $\mathcal{D}_{\mathrm{cal}}$, $\hat Q$ is a deterministic scalar and $(X_{n+1},Y_{n+1})$ is an independent draw from $P$.
    Integrating against $P$ and applying Definition~\ref{def:cell-cdf},
    \[
        \mathbb{P}(s_{n+1}\leq\hat Q,\,E_B\mid\mathcal{D}_{\mathrm{cal}})
        =
        \pi_B\,F^B(\hat Q).
    \]
    The tower property and division by $\pi_B$ give
    \begin{equation}\label{eq:tower-fixed-set}
        \mathbb{P}(s_{n+1}\leq\hat Q\mid k^\ast\in B)
        =
        \mathbb{E}_{\mathrm{cal}}\!\left[F^B(\hat Q)\right].
    \end{equation}

    \emph{Step~2 (specialize).}
    Equation~\eqref{eq:tower-fixed-set} with $B=\{k\}$ and $B=A$ yields the two displayed identities of the lemma.

    \emph{Step~3 (bridge).}
    Lemma~\ref{lem:bridge} is uniform in $q$ and $\hat Q$ is calibration-measurable, so pathwise $F^{(k)}(\hat Q)\geq F^A(\hat Q)-\varepsilon_k(A)$.
    Taking $\mathbb{E}_{\mathrm{cal}}$ of both sides and using Step~2 gives the transfer bound.
\end{proof}

\paragraph{Proof of Theorem~\ref{thm:regime1}.}
\begin{proof}
    Apply Lemma~\ref{lem:mondrian} to the calibration-free grouping $G=\mathrm{BMU}$ at $g=k$:
    \[
        \mathbb{P}\!\left(s_{n+1}\leq\hat q^{(\alpha)}_{\mathcal{I}_k}\mid \mathrm{BMU}(X_{n+1})=k\right)
        \geq 1-\alpha.
    \]
    On $\{\mathrm{BMU}(X_{n+1})=k\}$, $\hat q^{(1)}=\hat q^{(\alpha)}_{\mathcal{I}_k}$, and the score event coincides with the coverage event.
\end{proof}

\paragraph{Proof of Theorem~\ref{thm:regime2a}.}
\begin{proof}
    The binary grouping $G_k(x)=\mathds{1}\{\mathrm{BMU}(x)\in N(k,r)\}$ is calibration-free, with group-$1$ buffer $\mathcal{I}_{N(k,r)}$.
    Lemma~\ref{lem:mondrian} at $g=1$ gives
    \[
        \mathbb{P}\!\left(s_{n+1}\leq\hat q^{(\alpha)}_{\mathcal{I}_{N(k,r)}}\mid \mathrm{BMU}(X_{n+1})\in N(k,r)\right)
        \geq 1-\alpha,
    \]
    which is the claim, since $\hat Q_k^{(2)}=\hat q^{(\alpha)}_{\mathcal{I}_{N(k,r)}}$.
\end{proof}

\paragraph{Proof of Theorem~\ref{thm:regime2b}.}
\begin{proof}
    Set $A=N(k,r)$, so $\hat Q_k^{(2)}=\hat q^{(\alpha)}_{\mathcal{I}_A}$ and $\pi_A>0$ since $k\in A\cap\mathcal{K}_+$.
    Theorem~\ref{thm:regime2a} gives $\mathbb{P}(s_{n+1}\leq \hat Q_k^{(2)}\mid k^\ast\in A)\geq 1-\alpha$, which Lemma~\ref{lem:tower-transfer} transfers to
    \[
        \mathbb{P}(s_{n+1}\leq \hat Q_k^{(2)}\mid k^\ast=k)\geq 1-\alpha-\varepsilon_k(A).
    \]
    On $\{k^\ast=k\}$, $\hat q^{(2)}=\hat Q_k^{(2)}$ and $\varepsilon_k(A)=\varepsilon_k$.
\end{proof}

\paragraph{Proof of Corollary~\ref{cor:regime2c}.}
\begin{proof}
    Let $E=\{Y_{n+1}\in C_\alpha(X_{n+1};\hat q^{(2)})\}$.
    Total probability over the partition $\{\mathrm{BMU}(X_{n+1})=k\}_{k\in\mathcal{K}_+}$ combined with Theorem~\ref{thm:regime2b} gives
    \[
        \mathbb{P}(E)
        \geq
        \sum_{k\in\mathcal{K}_+}\pi_k(1-\alpha-\varepsilon_k)
        =
        1-\alpha-\bar\varepsilon,
    \]
    using $\sum_{k\in\mathcal{K}_+}\pi_k=1$ and the definition $\bar\varepsilon=\sum_{k\in\mathcal{K}_+}\pi_k\varepsilon_k$.
\end{proof}

\paragraph{Proof of Theorem~\ref{thm:regime3a}.}
\begin{proof}
    $R_L(k)$ depends only on the pre-trained \gls{som}, so it is calibration-free, and $k\in R_L(k)\cap\mathcal{K}_+$ gives $\pi_{R_L(k)}>0$.

    \emph{Retrieved-set part.}
    The binary grouping $G_{k,L}(x)=\mathds{1}\{\mathrm{BMU}(x)\in R_L(k)\}$ is calibration-free with buffer $\mathcal{I}_{R_L(k)}$.
    Lemma~\ref{lem:mondrian} at $g=1$ yields~\eqref{eq:r3a-nbhd}.

    \emph{Cell-conditional part.}
    Set $A=R_L(k)$ and $\hat Q=\hat Q_k^{(3,L)}$.
    Lemma~\ref{lem:tower-transfer} transfers~\eqref{eq:r3a-nbhd} to
    \[
        \mathbb{P}\!\left(s_{n+1}\leq \hat Q_k^{(3,L)}\mid k^\ast=k\right)
        \geq
        1-\alpha-\varepsilon_k(R_L(k)),
    \]
    and on $\{k^\ast=k\}$, $\hat q^{(3,L)}=\hat Q_k^{(3,L)}$, giving~\eqref{eq:r3a-cell}.
\end{proof}

\paragraph{Proof of Corollary~\ref{cor:regime3a-marg}.}
\begin{proof}
    Let $E=\{Y_{n+1}\in C_\alpha(X_{n+1};\hat q^{(3,L)})\}$.
    Total probability over $\{\mathrm{BMU}(X_{n+1})=k\}_{k\in\mathcal{K}_+}$ combined with~\eqref{eq:r3a-cell} gives
    \[
        \mathbb{P}(E)
        \geq
        \sum_{k\in\mathcal{K}_+}\pi_k\bigl(1-\alpha-\varepsilon_k(R_L(k))\bigr)
        =
        1-\alpha-\sum_{k\in\mathcal{K}_+}\pi_k\varepsilon_k(R_L(k)),
    \]
    using $\sum_{k\in\mathcal{K}_+}\pi_k=1$.
\end{proof}

\paragraph{Proof of Theorem~\ref{thm:split-target}.}
\begin{proof}
    Condition throughout on $\mathcal{D}_{\mathrm{route}}$.
    Then $\hat R_b(k)$ is a deterministic union of \gls{som} cells, $(\mathcal{D}_{\mathrm{cal}},X_{n+1},Y_{n+1})$ remain i.i.d.\ from $P$, and $\pi_{\hat R_b(k)}>0$ since $k\in\hat R_b(k)\cap\mathcal{K}_+$.

    \emph{Retrieved-set part.}
    The binary grouping $G_{k,b}(x)=\mathds{1}\{\mathrm{BMU}(x)\in\hat R_b(k)\}$ is deterministic conditional on $\mathcal{D}_{\mathrm{route}}$.
    Lemma~\ref{lem:mondrian} under $\mathbb{P}(\cdot\mid\mathcal{D}_{\mathrm{route}})$ at $g=1$ yields~\eqref{eq:target-retrieved}.

    \emph{Cell-conditional part.}
    Set $A=\hat R_b(k)$.
    All hypotheses of Lemma~\ref{lem:tower-transfer} hold under $\mathbb{P}(\cdot\mid\mathcal{D}_{\mathrm{route}})$, giving
    \[
        \mathbb{P}\!\left(s_{n+1}\leq \hat Q_k^{(\mathrm{tar})}\mid k^\ast=k,\mathcal{D}_{\mathrm{route}}\right)
        \geq
        1-\alpha-\varepsilon_k(\hat R_b(k)),
    \]
    which is~\eqref{eq:target-cell} on $\{k^\ast=k\}$ where $\hat q^{(\mathrm{tar})}=\hat Q_k^{(\mathrm{tar})}$.

    \emph{Marginalize.}
    Independence of $\mathcal{D}_{\mathrm{route}}$ and $(X_{n+1},Y_{n+1})$ implies
    \[
        \mathbb{P}(\,\cdot\,\mid\mathrm{BMU}(X_{n+1})=k)
        =
        \mathbb{E}_{\mathrm{route}}\!\left[\mathbb{P}(\,\cdot\,\mid\mathrm{BMU}(X_{n+1})=k,\mathcal{D}_{\mathrm{route}})\right].
    \]
    Applying this identity to $\{Y_{n+1}\in C_\alpha(X_{n+1};\hat q^{(\mathrm{tar})})\}$ and using~\eqref{eq:target-cell} pathwise yields the final bound.
\end{proof}

\clearpage
\subsection{Experimental details}
\label{app:experimental_details}

\subsubsection*{Datasets, splits, and predictors}

Table~\ref{tab:datasets} gives the data splits and predictor configurations used in every run.\footnote{The experiments ran on Intel Xeon Gold 6134 CPUs with $32$ logical cores.}
The ten seeds are $42,123,288,327,456,555,690,761,832,999$.
Training, selection, calibration, and test rows are disjoint within each dataset and seed.

\begin{table}[H]
    \centering
    \scriptsize
    \caption{
        Prepared split sizes and predictor configurations.
        Dim. is the dimension of the representation used by the \gls{som} and the other localizers.
        For \gls{gbr}, the predictor column reports the number of estimators, maximum depth, and learning rate.
        For histogram gradient boosting, it reports the number of iterations, maximum leaves, and learning rate.
        For ResNet-18, it reports epochs, batch size, and learning rate.
        ImageNet-1k uses the pretrained torchvision ResNet-50 V2 weights.
    }
    \label{tab:datasets}
    \resizebox{\textwidth}{!}{%
        \begin{tabular}{lrrrrcl}
            \toprule
            Dataset            & Train      & Select    & Calibrate  & Test       & Dim. & Predictor                  \\
            \midrule
            Bike Sharing       & $6{,}531$  & $1{,}088$ & $1{,}632$  & $1{,}635$  & $18$ & \gls{gbr} $500/7/0.1$      \\
            CASP (Bio)         & $27{,}438$ & $4{,}573$ & $6{,}859$  & $6{,}860$  & $9$  & \gls{gbr} $800/7/0.1$      \\
            California Housing & $12{,}384$ & $2{,}064$ & $3{,}096$  & $3{,}096$  & $8$  & \gls{gbr} $500/7/0.1$      \\
            Concrete strength  & $618$      & $103$     & $154$      & $155$      & $8$  & \gls{gbr} $200/3/0.05$     \\
            Auto MPG           & $235$      & $39$      & $58$       & $60$       & $7$  & \gls{gbr} $200/2/0.05$     \\
            CIFAR-10           & $35{,}000$ & $5{,}000$ & $10{,}000$ & $10{,}000$ & $32$ & ResNet-18 $15/256/10^{-3}$ \\
            MNIST              & $42{,}000$ & $6{,}000$ & $12{,}000$ & $10{,}000$ & $32$ & ResNet-18 $15/256/10^{-3}$ \\
            Covertype          & $30{,}000$ & $5{,}000$ & $7{,}500$  & $7{,}500$  & $54$ & HistGB $300/15/0.1$        \\
            CIFAR-100          & $35{,}000$ & $5{,}000$ & $10{,}000$ & $10{,}000$ & $32$ & ResNet-18 $15/256/10^{-3}$ \\
            ImageNet-1k        & $8{,}000$  & $2{,}000$ & $20{,}000$ & $20{,}000$ & $32$ & ResNet-50 V2, pretrained   \\
            \bottomrule
        \end{tabular}%
    }
\end{table}

Except for the fixed ImageNet-1k pretrained network, predictors are fitted on the training split.
CIFAR-10, MNIST, and CIFAR-100 use ImageNet-pretrained ResNet-18 models fine-tuned on their training rows for $15$ epochs~\citep{heDeepResidualLearning2015,russakovskyImageNetLargeScale2015}.
ImageNet-1k uses the fixed ResNet-50 V2 weights.

For image datasets, a training-fitted \gls{pca} projection followed by standardization reduces the penultimate representation to $32$ dimensions.
All tabular inputs, including the regression inputs, are standardized using training statistics.
Regression targets are not standardized, so prediction errors and interval widths retain their dataset-specific response units.
Covertype is subsampled to $50{,}000$ rows before splitting to keep query-wise kernel calibration tractable.

The selection split is held-out pre-calibration data used to select the \gls{nfcp} training duration and the RAPS and SAPS score parameters.
The baseline-specific choices described below are made on the same split, before conformal calibration.

Table~\ref{tab:setup_quality} records a seed-$42$ preparation snapshot.
These are predictor and representation diagnostics, not conformal results.

\begin{table}[H]
    \centering
    \scriptsize
    \caption{
        Seed-$42$ preparation diagnostics.
        Regression columns report test RMSE, $R^2$, and the coverage of the fitted $0.05$ to $0.95$ quantile band before conformal calibration.
        Classification reports test top-1 accuracy.
        Empty cells have no training \gls{bmu} assignments.
    }
    \label{tab:setup_quality}
    \resizebox{\textwidth}{!}{%
        \begin{tabular}{llclc}
            \toprule
            Dataset            & Predictor test performance & Raw quantile coverage & Localization representation        & Empty training cells \\
            \midrule
            Bike Sharing       & RMSE $36.67$, $R^2=0.958$  & $0.795$               & standardized features              & $11/63$              \\
            CASP (Bio)         & RMSE $3.73$, $R^2=0.630$   & $0.870$               & standardized features              & $0/90$               \\
            California Housing & RMSE $0.46$, $R^2=0.840$   & $0.847$               & standardized features              & $0/56$               \\
            Concrete strength  & RMSE $5.26$, $R^2=0.900$   & $0.839$               & standardized features              & $1/30$               \\
            Auto MPG           & RMSE $2.84$, $R^2=0.854$   & $0.750$               & standardized features              & $1/20$               \\
            CIFAR-10           & Top-1 $0.812$              & n/a                   & train-fitted \gls{pca}, $32$ dims. & $6/156$              \\
            MNIST              & Top-1 $0.994$              & n/a                   & train-fitted \gls{pca}, $32$ dims. & $26/400$             \\
            Covertype          & Top-1 $0.813$              & n/a                   & standardized features              & $55/130$             \\
            CIFAR-100          & Top-1 $0.491$              & n/a                   & train-fitted \gls{pca}, $32$ dims. & $0/156$              \\
            ImageNet-1k        & Top-1 $0.808$              & n/a                   & train-fitted \gls{pca}, $32$ dims. & $1/156$              \\
            \bottomrule
        \end{tabular}%
    }
\end{table}

\subsubsection*{Localization and calibration protocol}

The \gls{som} and every learned localization transform are fitted on training inputs.
Table~\ref{tab:som_config} separates the map configuration from the predictor choices above.
Regime~2 uses the listed radius directly.
For Concrete and Auto MPG, Regime~3 starts from the same radius and adds prototype-nearest cells according to Equation~\eqref{eq:training-routed-budget}.

\begin{table}[H]
    \centering
    \scriptsize
    \caption{
        \gls{som} and retrieval configurations.
        The map columns give grid size, epochs $E$, batch size $B$, initial learning rate $\eta_0$, initial neighborhood scale $\sigma_0$, and grid-neighborhood radius $r$.
        The final column gives the retrieval regime used in the headline comparison.
    }
    \label{tab:som_config}
    \resizebox{\textwidth}{!}{%
        \begin{tabular}{lcrrrrrl}
            \toprule
            Dataset            & Grid         & $E$   & $B$   & $\eta_0$ & $\sigma_0$ & $r$ & Retrieval \\
            \midrule
            Bike Sharing       & $7\times9$   & $100$ & $128$ & $0.95$   & $1.10$     & $2$ & Regime~2  \\
            CASP (Bio)         & $9\times10$  & $100$ & $256$ & $0.95$   & $1.40$     & $2$ & Regime~2  \\
            California Housing & $7\times8$   & $100$ & $128$ & $0.95$   & $1.10$     & $2$ & Regime~2  \\
            Concrete strength  & $5\times6$   & $50$  & $64$  & $0.85$   & $1.00$     & $1$ & Regime~3  \\
            Auto MPG           & $4\times5$   & $50$  & $32$  & $0.85$   & $0.90$     & $1$ & Regime~3  \\
            CIFAR-10           & $12\times13$ & $100$ & $128$ & $0.95$   & $2.30$     & $3$ & Regime~2  \\
            MNIST              & $20\times20$ & $100$ & $128$ & $0.95$   & $3.75$     & $4$ & Regime~2  \\
            Covertype          & $10\times13$ & $100$ & $256$ & $0.95$   & $1.80$     & $2$ & Regime~2  \\
            CIFAR-100          & $12\times13$ & $100$ & $128$ & $0.95$   & $2.30$     & $3$ & Regime~2  \\
            ImageNet-1k        & $12\times13$ & $100$ & $128$ & $0.95$   & $2.30$     & $3$ & Regime~2  \\
            \bottomrule
        \end{tabular}%
    }
\end{table}

Concrete and Auto MPG are retained under both Regime~2 and Regime~3.
The headline comparison uses Regime~3, while analyses that require a common retrieval rule across all datasets use Regime~2.

Every \gls{som} uses \texttt{torchsom}~\citep{berthierTorchsomReferencePyTorch2025}, \gls{pca} initialization, a Gaussian neighborhood function, Euclidean prototype distance, inverse learning-rate and neighborhood-scale decay, rectangular topology, and no periodic boundary condition.
The \gls{lcp} baseline uses the corrected localized level of~\citet{guanLocalizedConformalPrediction2022a} with a Laplace kernel.
Its score-specific bandwidth is selected on the selection split using the authors' $20$-point grid and resampling objective.
For classification, this objective uses score-matched prediction-set size and the full label set in place of interval length and the real line.
The \gls{rlcp} baseline uses a Gaussian random anchor and a training-only bandwidth rule that targets an effective calibration sample size of $50$.
Mondrian $k$-means fits every candidate group count in $\{5,10,15,20,25,30\}$ on the training inputs and selects the one with the highest silhouette score on the selection inputs.
Classwise calibration estimates a separate threshold for each label from the calibration examples carrying that label.
Following~\citet{dingClassConditionalConformalPrediction2023a}, Clustered calibration groups labels on an auxiliary calibration subset, then estimates one threshold per group on the remaining calibration data.

\subsubsection*{Nonconformity scores and methods}

For regression, the absolute residual, \gls{nfcp}, and \gls{rcp} scores use the same fitted point predictor $\hat\mu$.
The absolute residual score is
\[
    s_{\mathrm{abs}}(x,y)
    \coloneqq
    |y-\hat\mu(x)|.
\]
The other two scores transform this residual as
\begin{align}
    s_{\mathrm{nfcp}}(x,y)
     & \coloneqq
    \frac{s_{\mathrm{abs}}(x,y)+\delta}{\gamma+\hat g(x)},
    \\
    s_{\mathrm{rcp}}(x,y)
     & \coloneqq
    \frac{s_{\mathrm{abs}}(x,y)}{\hat\tau(x)}.
\end{align}
Here, $\hat g(x)\geq0$ is the \gls{nfcp} scale model and $\delta,\gamma>0$ are numerical stabilizers fixed before calibration~\citep{colomboNormalizingFlowsConformal2024}.
For \gls{rcp}, $\hat\tau(x)>0$ estimates the conditional $(1-\alpha)$-quantile of the absolute residual~\citep{plassierRectifyingConformityScores2025}.
Both $\hat g$ and $\hat\tau$ are fitted from out-of-fold training residuals.
The number of \gls{nfcp} training updates is selected on the selection split.

The remaining regression scores use fitted distributional models.
For \gls{cqr},
\[
    s_{\mathrm{cqr}}(x,y)
    \coloneqq
    \max\{\hat\ell(x)-y,\ y-\hat u(x)\},
\]
where $\hat\ell$ and $\hat u$ are the fitted $0.05$ and $0.95$ conditional quantiles~\citep{romanoConformalizedQuantileRegression2019b}.
For \gls{chr}, a fitted conditional histogram defines randomized nested intervals $\{I_t(x,U)\}_{t=0}^{T}$~\citep{sesiaConformalHistogramRegression2021}.
The score is the first level containing the response,
\[
    s_{\mathrm{chr}}(x,y,U)
    \coloneqq
    \min\bigl\{t\in\{0,\dots,T\}:y\in I_t(x,U)\bigr\}.
\]

For classification, let $C=|\mathcal{Y}|$ and write $\hat p_c(x)$ for the predicted probability of class $c$.
For each ordered score, let $\pi_x(1),\dots,\pi_x(C)$ denote its ordering of the classes by decreasing probability, and let $r_x(y)$ be the resulting rank of label $y$.
Let $U\sim\operatorname{Unif}(0,1)$ be an auxiliary draw independent of the observation and shared across all its candidate labels.
The classification scores are
\begin{align}
    s_{\mathrm{lac}}(x,y)
     & \coloneqq 1-\hat p_y(x),
    \\
    s_{\mathrm{aps}}(x,y,U)
     & \coloneqq
    \sum_{j<r_x(y)}\hat p_{\pi_x(j)}(x)+U\hat p_y(x),
    \\
    s_{\mathrm{raps}}(x,y,U)
     & \coloneqq
    s_{\mathrm{aps}}(x,y,U)
    +\lambda_{\mathrm R}\bigl(r_x(y)-k_{\mathrm R}\bigr)_+.
\end{align}
These definitions recover \gls{lac}, \gls{aps}, and \gls{raps}~\citep{sadinleLeastAmbiguousSetvalued2019,romanoClassificationValidAdaptive2020,angelopoulosUncertaintySetsImage2021}.
The \gls{aps} score is the cumulative probability before $y$, plus a uniformly random fraction of $\hat p_y(x)$.
For \gls{saps}, set $a_1(x)=\hat p_{\pi_x(1)}(x)$ and $a_j(x)=\lambda_{\mathrm S}$ for $j\geq2$.
Then,
\begin{align}
    s_{\mathrm{saps}}(x,y,U)
     & \coloneqq
    \sum_{j<r_x(y)}a_j(x)+U a_{r_x(y)}(x),
    \\
    s_{\mathrm{rank}}(x,y,U)
     & \coloneqq r_x(y)-1+U.
\end{align}
The parameters $(\lambda_{\mathrm R},k_{\mathrm R})$ and $\lambda_{\mathrm S}$ are selected on the selection split~\citep{angelopoulosUncertaintySetsImage2021,huangConformalPredictionDeep2024}.

Every score is fixed before calibration, and larger values indicate greater nonconformity.
Given a threshold $q$, the prediction set retains every candidate $y$ such that $s(x,y)\leq q$.
Pooled calibration computes $q$ from all calibration scores.
SO calibration computes the same split-conformal quantile from the buffer retrieved by the query's \gls{bmu}.
The score-composition study crosses each score with both of these calibration rules.
The baseline study also applies the same score object under \gls{lcp}, \gls{rlcp}, and Mondrian $k$-means.
For classification, Classwise uses one cutoff per label and Clustered shares cutoffs within learned label clusters.
This design changes the calibration distribution or grouping while leaving the underlying score unchanged.

After data and predictor preparation, one complete dataset-seed run evaluates all five task-specific scores with every compatible calibrator, giving $25$ score-calibrator combinations for regression and $35$ for classification.
Across the $50$ regression runs, wall time averaged $31$~min and ranged from $17$~s to $1$~h $55$~min.
Across the $50$ classification runs, it averaged approximately $6$~h $30$~min and ranged from approximately $1$~h $26$~min to $10$~h $24$~min.

\clearpage
\subsection{Detailed empirical results}
\label{app:detailed_results}

This section presents the ten-seed evaluation.
We begin with the WCovGap and output-size trade-off between \gls{scp} and SO-SCP, then test SO calibration across five scores per task and compare it with the retained local and grouped calibrators.
Next, we examine transfer across external partition granularities and use held-out \gls{wsc} target error, \gls{ssc} gap, and the \gls{ert} diagnostic as partition-free checks.
Finally, we test sensitivity to the \gls{som} grid and retrieval radius, then compare Regimes~2 and~3 as calibration size varies.

WCovGap is computed on external $k$-means partitions fitted on each training representation and shared by all methods.
We use $K=25$ in the headline comparison and $K\in\{10,50\}$ to assess sensitivity to partition granularity.

The headline comparison uses training-routed Regime~3 with $b=19$ for Concrete and Auto MPG, whose calibration splits contain only $154$ and $58$ observations.
With Regime~2, fixed neighborhoods can leave some queries with very small calibration buffers.
All other headline datasets use Regime~2.
Comparisons across scores and calibrators, partition granularities, and partition-free diagnostics also use Regime~2 on every dataset so that SO-SCP follows the same retrieval rule throughout.
Headline significance uses a two-sided Wilcoxon signed-rank test over the ten seeds for each dataset, followed by Holm correction across the ten datasets.

\subsubsection*{Headline comparison}

Figure~\ref{fig:cr_tradeoff} summarizes the seed-level trade-off in the main text.
Tables~\ref{tab:cr_main_regression} and~\ref{tab:cr_main_classification} complement that view with the corresponding coverage, WCovGap, and output-size levels.

\begin{table}[ht]
    \centering
    \caption{
        Ten-seed headline comparison for regression using the absolute residual score.
        Coverage, interval width, and external WCovGap at $K=25$ are averaged across seeds.
        Each $\Delta$ is the mean within-seed relative change from \gls{scp} to SO-SCP.
        Holm $p$ is the two-sided Wilcoxon signed-rank $p$-value after adjustment across the ten datasets.
        Daggers mark the two datasets evaluated with training-routed Regime~3.
    }
    \label{tab:cr_main_regression}
    \resizebox{\textwidth}{!}{%
        \begin{tabular}{lccccccccc}
\toprule
 & \multicolumn{2}{c}{Coverage} & \multicolumn{3}{c}{Width} & \multicolumn{3}{c}{External WCovGap ($\downarrow$)} & Holm $p$ \\
 & SCP & SO-SCP & SCP & SO-SCP & $\Delta$ (\%) & SCP & SO-SCP & $\Delta$ (\%) &  \\
\midrule
Bike & $0.902$ & $0.902$ & $115.36$ & $117.44$ & $+1.8$ & $5.46$ & $\mathbf{4.75}$ & $-12.0$ & $0.098$ \\
Bio & $0.902$ & $0.906$ & $12.74$ & $12.79$ & $+0.4$ & $3.14$ & $\mathbf{2.33}$ & $-25.8$ & $0.020$ \\
California Housing & $0.898$ & $0.899$ & $1.39$ & $1.36$ & $-1.8$ & $3.50$ & $\mathbf{3.15}$ & $-10.0$ & $0.098$ \\
Concrete$^{\dagger}$ & $0.908$ & $0.914$ & $16.57$ & $18.68$ & $+12.5$ & $10.40$ & $\mathbf{10.12}$ & $-1.1$ & $1.000$ \\
Auto MPG$^{\dagger}$ & $0.928$ & $0.922$ & $10.25$ & $10.39$ & $+1.5$ & $12.87$ & $\mathbf{12.73}$ & $+0.0$ & $1.000$ \\
\bottomrule
\end{tabular}
    }
\end{table}

\begin{table}[ht]
    \centering
    \caption{
        Ten-seed headline comparison for classification using the \gls{lac} score, following the conventions of Table~\ref{tab:cr_main_regression}, with set size replacing interval width.
        All five datasets use Regime~2.
    }
    \label{tab:cr_main_classification}
    \resizebox{\textwidth}{!}{%
        \begin{tabular}{lccccccccc}
\toprule
 & \multicolumn{2}{c}{Coverage} & \multicolumn{3}{c}{Set size} & \multicolumn{3}{c}{External WCovGap ($\downarrow$)} & Holm $p$ \\
 & SCP & SO-SCP & SCP & SO-SCP & $\Delta$ (\%) & SCP & SO-SCP & $\Delta$ (\%) &  \\
\midrule
CIFAR-10 & $0.898$ & $0.901$ & $1.42$ & $1.46$ & $+3.3$ & $4.93$ & $\mathbf{4.54}$ & $-8.0$ & $0.098$ \\
MNIST & $0.901$ & $0.908$ & $0.90$ & $0.91$ & $+0.8$ & $8.28$ & $\mathbf{6.84}$ & $-17.3$ & $0.020$ \\
Covertype & $0.898$ & $0.899$ & $1.20$ & $1.22$ & $+1.1$ & $2.79$ & $\mathbf{1.95}$ & $-29.2$ & $0.020$ \\
CIFAR-100 & $0.901$ & $0.904$ & $14.25$ & $14.67$ & $+3.0$ & $3.22$ & $\mathbf{2.58}$ & $-19.7$ & $0.020$ \\
ImageNet-1k & $0.899$ & $0.898$ & $1.48$ & $1.59$ & $+7.4$ & $5.70$ & $\mathbf{4.56}$ & $-19.9$ & $0.020$ \\
\bottomrule
\end{tabular}

    }
\end{table}

All eight datasets outside the sparse-buffer regime use Regime~2 in the headline comparison, and SO-SCP lowers WCovGap on each of them, by $17.7\%$ on average for a $2.0\%$ mean increase in output size.
These are paired effects: within each seed the two methods share the same splits, predictor, and score, so the deltas isolate the calibration rule.
Five reductions, on Bio, MNIST, Covertype, CIFAR-100, and ImageNet-1k, remain significant after Holm correction across the ten datasets, and the three other Regime~2 datasets share an adjusted $p$ of $0.098$.
At seed level, SO-SCP wins $48$ of the $50$ classification comparisons and $26$ of the $30$ comparisons on the three large regression datasets.
The two Regime~3 datasets behave as the budget rule intends: Concrete improves by $1.1\%$, Auto MPG is flat at $+0.03\%$, and each favors SO-SCP in $4$ of $10$ seeds, so the smallest calibration splits are draws rather than failure cases while the eight larger datasets carry the gains.
Across all ten datasets, the reduction averages $14.3\%$ for a $3.0\%$ mean increase in output size.
The gains do not come at the cost of marginal coverage: on the eight Regime~2 datasets, every mean coverage in both tables lies within $0.008$ of the $0.90$ target.

The raw output sizes show what these percentages cost in practice.
SO-SCP returns fewer than two labels on average on four of the five classification datasets, whose label spaces range from $7$ to $1{,}000$ classes, and adds at most $0.11$ label over \gls{scp} on any of them.
On CIFAR-100, the mean set size moves from $14.25$ to $14.67$ labels out of $100$, an increase of $0.43$ label.
Regression widths increase by $2.9\%$ on average.
The largest change, from $16.57$ to $18.68$ target units on Concrete, is the price of the Regime~3 buffers that keep every Concrete cutoff finite, while the other four datasets range from a $1.8\%$ decrease to a $1.8\%$ increase.
SO-SCP delivers its coverage repair at the output scale of \gls{scp}, in contrast with the order-of-magnitude expansions the calibrator comparison below records for kernel and classwise alternatives.

\subsubsection*{Composition across scores}

\gls{socp} localizes calibration while leaving the predictor and nonconformity score unchanged.
We use this separation to test whether its effect depends on one particular score.
For every dataset and score, Figures~\ref{fig:cr_composition_regression} and~\ref{fig:cr_composition_classification} compare pooled and SO calibration under the same Regime~2 protocol.

\begin{figure}[ht]
    \centering
    \includegraphics[width=0.86\textwidth]{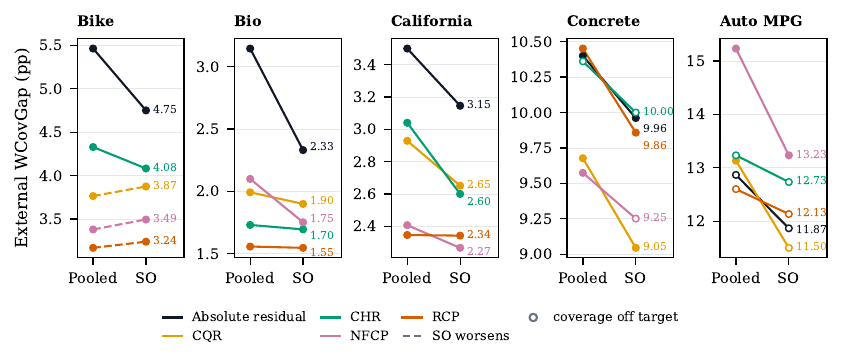}
    \caption{
        Pooled and SO calibration for the five regression scores.
        Within each dataset, a segment connects the ten-seed mean external WCovGap under pooled calibration to the corresponding mean under SO calibration.
        All comparisons use Regime~2 and the external $K=25$ audit partitions.
        Dashed segments mark increases in WCovGap.
        Open endpoints mark a mean marginal coverage more than $0.02$ from the $0.90$ target.
    }
    \label{fig:cr_composition_regression}
\end{figure}

\begin{figure}[ht]
    \centering
    \includegraphics[width=0.86\textwidth]{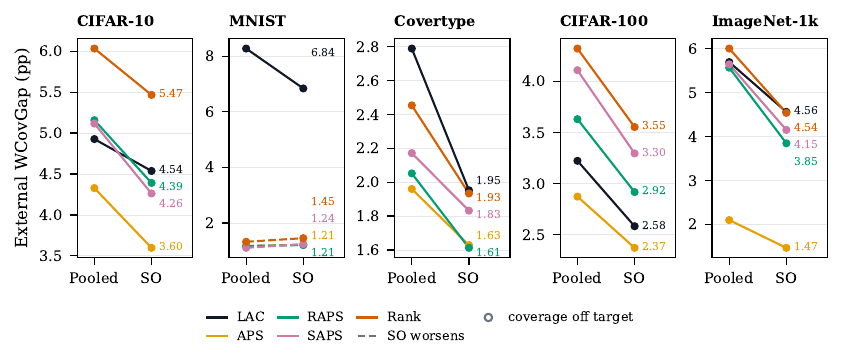}
    \caption{
        Pooled and SO calibration for the five classification scores.
        Each segment connects the ten-seed mean external WCovGap of one score under the two calibration rules.
        All comparisons use Regime~2 and the external $K=25$ audit partitions.
        Dashed segments mark increases in WCovGap, and open endpoints mark a mean marginal coverage more than $0.02$ from $0.90$.
    }
    \label{fig:cr_composition_classification}
\end{figure}

SO calibration lowers WCovGap for most score-dataset combinations, although the size of the gain varies by dataset.
Tables~\ref{tab:cr_factorial_regression} and~\ref{tab:cr_factorial_classification} report the exact pooled and SO values shown in the figures.

\begin{table}[ht]
    \centering
    \caption{
        Ten-seed mean external WCovGap, in percentage points, for every regression score under pooled and SO calibration.
        All entries use Regime~2 and $K=25$ external audit partitions.
        The lower value in each pair is bold.
        A dagger marks an entry whose mean marginal coverage is more than $0.02$ from $0.90$.
    }
    \label{tab:cr_factorial_regression}
    \resizebox{\textwidth}{!}{%
        \begin{tabular}{lcccccccccc}
\toprule
 & \multicolumn{2}{c}{Absolute residual} & \multicolumn{2}{c}{CQR} & \multicolumn{2}{c}{CHR} & \multicolumn{2}{c}{NFCP} & \multicolumn{2}{c}{RCP} \\
 & Pooled & SO & Pooled & SO & Pooled & SO & Pooled & SO & Pooled & SO \\
\midrule
Bike & $5.46$ & $\mathbf{4.75}$ & $\mathbf{3.76}$ & $3.87$ & $4.33$ & $\mathbf{4.08}$ & $\mathbf{3.38}$ & $3.49$ & $\mathbf{3.17}$ & $3.24$ \\
Bio & $3.14$ & $\mathbf{2.33}$ & $1.99$ & $\mathbf{1.90}$ & $1.73$ & $\mathbf{1.70}$ & $2.10$ & $\mathbf{1.75}$ & $1.56$ & $\mathbf{1.55}$ \\
California Housing & $3.50$ & $\mathbf{3.15}$ & $2.93$ & $\mathbf{2.65}$ & $3.04$ & $\mathbf{2.60}$ & $2.41$ & $\mathbf{2.27}$ & $2.35$ & $\mathbf{2.34}$ \\
Concrete & $10.40$ & $\mathbf{9.96}$ & $9.68$ & $\mathbf{9.05}$ & $10.36$$^{\dagger}$ & $\mathbf{10.00}$$^{\dagger}$ & $9.57$ & $\mathbf{9.25}$$^{\dagger}$ & $10.45$ & $\mathbf{9.86}$ \\
Auto MPG & $12.87$$^{\dagger}$ & $\mathbf{11.87}$$^{\dagger}$ & $13.13$ & $\mathbf{11.50}$$^{\dagger}$ & $13.23$$^{\dagger}$ & $\mathbf{12.73}$$^{\dagger}$ & $15.23$ & $\mathbf{13.23}$ & $12.60$$^{\dagger}$ & $\mathbf{12.13}$$^{\dagger}$ \\
\bottomrule
\end{tabular}
    }
\end{table}

\begin{table}[ht]
    \centering
    \caption{
        Ten-seed mean external WCovGap, in percentage points, for every classification score under pooled and SO calibration.
        All entries use Regime~2 and $K=25$ external audit partitions.
        The lower value in each pair is bold, and the dagger convention follows Table~\ref{tab:cr_factorial_regression}.
    }
    \label{tab:cr_factorial_classification}
    \resizebox{\textwidth}{!}{%
        \begin{tabular}{lcccccccccc}
\toprule
 & \multicolumn{2}{c}{LAC} & \multicolumn{2}{c}{APS} & \multicolumn{2}{c}{RAPS} & \multicolumn{2}{c}{SAPS} & \multicolumn{2}{c}{Rank} \\
 & Pooled & SO & Pooled & SO & Pooled & SO & Pooled & SO & Pooled & SO \\
\midrule
CIFAR-10 & $4.93$ & $\mathbf{4.54}$ & $4.33$ & $\mathbf{3.60}$ & $5.16$ & $\mathbf{4.39}$ & $5.12$ & $\mathbf{4.26}$ & $6.03$ & $\mathbf{5.47}$ \\
MNIST & $8.28$ & $\mathbf{6.84}$ & $\mathbf{1.17}$ & $1.21$ & $\mathbf{1.17}$ & $1.21$ & $\mathbf{1.10}$ & $1.24$ & $\mathbf{1.32}$ & $1.45$ \\
Covertype & $2.79$ & $\mathbf{1.95}$ & $1.96$ & $\mathbf{1.63}$ & $2.05$ & $\mathbf{1.61}$ & $2.17$ & $\mathbf{1.83}$ & $2.45$ & $\mathbf{1.93}$ \\
CIFAR-100 & $3.22$ & $\mathbf{2.58}$ & $2.87$ & $\mathbf{2.37}$ & $3.63$ & $\mathbf{2.92}$ & $4.11$ & $\mathbf{3.30}$ & $4.32$ & $\mathbf{3.55}$ \\
ImageNet-1k & $5.70$ & $\mathbf{4.56}$ & $2.10$ & $\mathbf{1.47}$ & $5.57$ & $\mathbf{3.85}$ & $5.65$ & $\mathbf{4.15}$ & $6.01$ & $\mathbf{4.54}$ \\
\bottomrule
\end{tabular}
    }
\end{table}

SO calibration lowers the ten-seed mean WCovGap in $43$ of the $50$ dataset-score comparisons, $22$ of $25$ in regression and $21$ of $25$ in classification, and when averaged over its five datasets every score improves.
All five scores improve on each of the eight datasets other than Bike Sharing and MNIST.
Again, the predictor and score are held fixed within each comparison, so the gains come from the \gls{socp} calibration rule alone, applied unchanged to residual, quantile, histogram, scale-normalized, probability-based, and rank-based constructions.

The asymmetry between the two directions is large.
The $43$ reductions average $0.66$ percentage points while the seven increases average $0.09$, the largest single effects are reductions, $2.00$ points for \gls{nfcp} on Auto MPG and $1.72$ for \gls{raps} on ImageNet-1k, and both base scores improve on every dataset, by $0.35$ to $1.00$ points for the absolute residual and $0.39$ to $1.44$ points for \gls{lac}.
The seven increases are confined to Bike Sharing and MNIST, no score rises on more than one dataset, and they follow a single pattern: SO calibration ticks up exactly for the scores whose pooled gaps are already the smallest of their row.
On Bike Sharing, \gls{cqr}, \gls{nfcp}, and \gls{rcp} start from the three lowest pooled gaps, $3.17$ to $3.76$ points, and rise by $0.07$ to $0.11$, while the two scores with the largest pooled gaps improve.
On MNIST, the four cumulative and rank-based scores start between $1.10$ and $1.32$ points and rise by at most $0.14$, while \gls{lac}, whose pooled gap of $8.28$ points is the largest in the classification table, improves by $1.44$ points.
In short, SO calibration gains the most where pooled calibration leaves the largest regional gaps, and loses almost nothing where little remains to fix.

Daggers identify endpoints whose mean marginal coverage lies more than $0.02$ from the $0.90$ target.
Marginal coverage remains within $0.02$ of the target at every classification endpoint.
All ten marked regression endpoints occur on Concrete or Auto MPG, whose calibration splits contain only $154$ and $58$ observations.
Four are pooled endpoints and six are SO endpoints, together spanning all five regression scores.
Every daggered mean lies above $0.90$, so none represents a marginal-coverage shortfall.
This conservatism is a small-sample effect: with only $58$ calibration rows, the conformal quantile pushes coverage above the target, and Regime~2 buffers hold even fewer scores.
It appears under pooled and SO calibration alike, and the sensitivity and small calibration buffers analysis returns to it.

These paired endpoints isolate the change from pooled to SO calibration.
The next subsection keeps the score fixed and widens the comparison to the other local and grouped calibration rules.

\clearpage
\subsubsection*{Calibration trade-offs}

A lower WCovGap is useful only if the resulting intervals or prediction sets remain informative.
We hold the score fixed and compare each calibrator with \gls{scp} on both axes at once.
All entries are ten-seed means under Regime~2 and the external $K=25$ audit partition.
In both tables, Gain is the WCovGap of \gls{scp} minus that of the competing calibrator, in percentage points, so larger values are better.
Output gives the raw mean width or set size, followed by its relative change from \gls{scp} in brackets, and smaller outputs are preferable.
Read together, the two tables expose the two ways a local calibrator can fail: it can buy its gain with uninformative outputs, through unbounded intervals or inflated sets, or it can keep outputs intact by localizing too weakly to move the gap.

\paragraph{Regression.}

Table~\ref{tab:cr_calibrator_tradeoff_regression} reports the mean and median widths together with the prevalence of unbounded intervals, and its structure is dictated by a single fact: on Bio and California Housing, every competing localized calibrator records infinite intervals, so its mean width does not exist and only its median remains comparable.

\begin{table}[ht]
    \centering
    \caption{
        Regression calibration trade-offs using the absolute residual score.
        Gain is the reduction in external WCovGap from \gls{scp}, in percentage points, so larger values are better.
        Mean width is the mean of the ten seed-level means, in the target units of each dataset, followed by its relative change from \gls{scp}.
        Median width pools all recorded test predictions across the ten seeds and retains infinite values.
        The final column gives the number and percentage of infinite widths.
    }
    \label{tab:cr_calibrator_tradeoff_regression}
    \resizebox{\textwidth}{!}{%
        \begin{tabular}{llcccc}
\toprule
 &  & Gain & Mean width [$\Delta\%$] & Median width & Infinite widths \\
\midrule
\multirow[t]{5}{*}{Bike} & SCP & $0.00$ & $115.36\;[0.0\%]$ & $115.83$ & $0/16{,}350\;(0.00\%)$ \\
 & LCP & $+0.60$ & $111.30\;[-3.5\%]$ & $111.93$ & $0/16{,}350\;(0.00\%)$ \\
 & RLCP & $+2.01$ & $\infty$ & $112.43$ & $556/16{,}350\;(3.40\%)$ \\
 & Mondrian k-means & $+1.21$ & $119.37\;[+3.5\%]$ & $115.87$ & $0/16{,}350\;(0.00\%)$ \\
 & \textbf{SO-SCP} & $+0.71$ & $117.44\;[+1.8\%]$ & $115.87$ & $0/16{,}350\;(0.00\%)$ \\
\cline{1-6}
\multirow[t]{5}{*}{Bio} & SCP & $0.00$ & $12.74\;[0.0\%]$ & $12.73$ & $0/68{,}600\;(0.00\%)$ \\
 & LCP & $+0.38$ & $\infty$ & $12.42$ & $531/68{,}600\;(0.77\%)$ \\
 & RLCP & $+1.08$ & $\infty$ & $12.85$ & $3{,}774/68{,}600\;(5.50\%)$ \\
 & Mondrian k-means & $+0.31$ & $\infty$ & $13.22$ & $37/68{,}600\;(0.05\%)$ \\
 & \textbf{SO-SCP} & $+0.81$ & $12.79\;[+0.4\%]$ & $12.74$ & $0/68{,}600\;(0.00\%)$ \\
\cline{1-6}
\multirow[t]{5}{*}{California Housing} & SCP & $0.00$ & $1.39\;[0.0\%]$ & $1.38$ & $0/30{,}960\;(0.00\%)$ \\
 & LCP & $+0.26$ & $\infty$ & $1.33$ & $455/30{,}960\;(1.47\%)$ \\
 & RLCP & $+0.88$ & $\infty$ & $1.36$ & $1{,}151/30{,}960\;(3.72\%)$ \\
 & Mondrian k-means & $+0.07$ & $\infty$ & $1.38$ & $12/30{,}960\;(0.04\%)$ \\
 & \textbf{SO-SCP} & $+0.35$ & $1.36\;[-1.8\%]$ & $1.36$ & $0/30{,}960\;(0.00\%)$ \\
\cline{1-6}
\multirow[t]{5}{*}{Concrete} & SCP & $0.00$ & $16.57\;[0.0\%]$ & $16.26$ & $0/1{,}550\;(0.00\%)$ \\
 & LCP & $-0.55$ & $\infty$ & $16.18$ & $12/1{,}550\;(0.77\%)$ \\
 & RLCP & $+0.32$ & $\infty$ & $17.05$ & $30/1{,}550\;(1.94\%)$ \\
 & Mondrian k-means & $+0.76$ & $\infty$ & $18.40$ & $257/1{,}550\;(16.58\%)$ \\
 & \textbf{SO-SCP} & $+0.44$ & $\infty$ & $17.52$ & $3/1{,}550\;(0.19\%)$ \\
\cline{1-6}
\multirow[t]{5}{*}{Auto MPG} & SCP & $0.00$ & $10.25\;[0.0\%]$ & $9.89$ & $0/600\;(0.00\%)$ \\
 & LCP & $-0.17$ & $10.02\;[-2.3\%]$ & $9.87$ & $0/600\;(0.00\%)$ \\
 & RLCP & $0.00$ & $10.25\;[0.0\%]$ & $9.89$ & $0/600\;(0.00\%)$ \\
 & Mondrian k-means & $+0.77$ & $\infty$ & $11.63$ & $171/600\;(28.50\%)$ \\
 & \textbf{SO-SCP} & $+1.00$ & $\infty$ & $10.49$ & $61/600\;(10.17\%)$ \\
\cline{1-6}
\bottomrule
\end{tabular}
    }
\end{table}

SO-SCP is the only localized calibrator that keeps all $115{,}910$ recorded predictions finite on Bike Sharing, Bio, and California Housing.
It reduces WCovGap by $0.71$, $0.81$, and $0.35$ points there, moves the mean width by $+1.8\%$, $+0.4\%$, and $-1.8\%$, and its median widths sit within $0.04$ target units of \gls{scp}.
On these datasets, the local coverage repair costs nothing measurable in interval scale.

The kernel methods degenerate on exactly these well-populated datasets, which rules out data scarcity as the cause.
\gls{rlcp} leaves $3.40\%$, $5.50\%$, and $3.72\%$ of its intervals unbounded on Bike Sharing, Bio, and California Housing, and \gls{lcp} leaves $0.77\%$ and $1.47\%$ on the latter two.
A weighted quantile returns an infinite cutoff whenever the effective local sample cannot support the $1-\alpha$ level, and the median conceals this mass: \gls{rlcp}'s Bio median of $12.85$ looks ordinary while $3{,}774$ of its intervals carry no information.
Its larger gains on these datasets are therefore paid for with a $3$ to $6\%$ rate of unusable outputs, a cost neither \gls{scp} nor SO-SCP incurs there.
\gls{lcp}'s one clean regression case, Bike Sharing, pairs a $0.60$-point gain with $3.5\%$ narrower intervals, still below SO-SCP's $0.71$ points.
Mondrian $k$-means occupies the worst position on Bio and California Housing, where it combines residual degeneracy with gains of only $0.31$ and $0.07$ points, and its one strong case, Bike Sharing, buys a $1.21$-point gain with the largest finite width increase in the table.

Concrete and Auto MPG, with $154$ and $58$ calibration rows, separate the calibrators most sharply.
The kernel methods stay finite there only by barely localizing: \gls{lcp} increases WCovGap on both datasets, by $0.55$ and $0.17$ points, and \gls{rlcp} gains $0.32$ points on Concrete while matching \gls{scp} on every reported Auto MPG quantity.
The two partition-based methods localize and pay in unbounded intervals, at very different rates: $16.58\%$ and $28.50\%$ for Mondrian $k$-means against $0.19\%$ and $10.17\%$ for SO-SCP, which also posts the largest Auto MPG gain of $1.00$ point.
This residual degeneracy is confined to the two smallest calibration splits and is not a fixed property of the method: training-routed Regime~3, the protocol the headline comparison already reports on these two datasets, removes every infinite cutoff at all recorded calibration sizes (Figure~\ref{fig:cr_regime3}).
No competing calibrator has an analogous repair.

\begin{figure}[ht]
    \centering
    \includegraphics[width=0.99\textwidth]{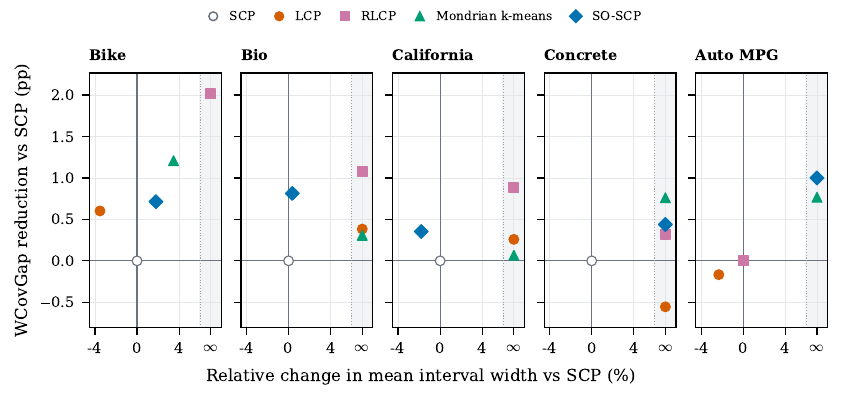}
    \caption{
        Regression trade-offs corresponding to Table~\ref{tab:cr_calibrator_tradeoff_regression}.
        Each panel compares the calibrators on one dataset.
        The vertical axis gives the WCovGap reduction from \gls{scp}, so higher values are better.
        The horizontal axis gives the relative change in mean interval width, so points to the left produce narrower intervals.
        \gls{scp} is fixed at the origin and SO-SCP is marked by a diamond.
        The shaded rail contains methods with an infinite mean width.
    }
    \label{fig:cr_calibrator_tradeoff_regression}
\end{figure}

\paragraph{Classification.}

Table~\ref{tab:cr_calibrator_tradeoff_classification} gives the same comparison for prediction sets.

\begin{table}[ht]
    \centering
    \caption{
        Classification calibration trade-offs using the \gls{lac} score.
        Gain is the reduction in external WCovGap from \gls{scp}, in percentage points, so larger values are better.
        Output is the mean number of labels in the prediction set, followed by its relative change from \gls{scp}.
        Smaller outputs are better, and a negative change denotes a smaller prediction set.
    }
    \label{tab:cr_calibrator_tradeoff_classification}
    \resizebox{\textwidth}{!}{%
        \begin{tabular}{lcccccccccc}
\toprule
 & \multicolumn{2}{c}{CIFAR-10} & \multicolumn{2}{c}{MNIST} & \multicolumn{2}{c}{Covertype} & \multicolumn{2}{c}{CIFAR-100} & \multicolumn{2}{c}{ImageNet-1k} \\
 & Gain & Output [$\Delta\%$] & Gain & Output [$\Delta\%$] & Gain & Output [$\Delta\%$] & Gain & Output [$\Delta\%$] & Gain & Output [$\Delta\%$] \\
\midrule
SCP & $0.00$ & $1.42\;[0.0\%]$ & $0.00$ & $0.90\;[0.0\%]$ & $0.00$ & $1.20\;[0.0\%]$ & $0.00$ & $14.25\;[0.0\%]$ & $0.00$ & $1.48\;[0.0\%]$ \\
Classwise & $+0.21$ & $1.46\;[+2.9\%]$ & $+1.78$ & $0.90\;[+0.1\%]$ & $+0.56$ & $1.68\;[+39.8\%]$ & $+1.07$ & $15.88\;[+11.4\%]$ & $+2.41$ & $11.82\;[+697\%]$ \\
Clustered & $0.00$ & $1.42\;[0.0\%]$ & $-0.01$ & $0.90\;[0.0\%]$ & $+0.13$ & $1.21\;[+0.2\%]$ & $+0.76$ & $14.91\;[+4.7\%]$ & $+0.90$ & $1.81\;[+22.4\%]$ \\
LCP & $+0.05$ & $1.42\;[+0.1\%]$ & $+0.04$ & $0.90\;[0.0\%]$ & $+0.15$ & $1.22\;[+1.4\%]$ & $+0.05$ & $14.23\;[-0.1\%]$ & $+0.05$ & $1.48\;[0.0\%]$ \\
RLCP & $+1.94$ & $2.31\;[+63.2\%]$ & $+3.26$ & $1.64\;[+82.2\%]$ & $+1.21$ & $1.64\;[+36.3\%]$ & $+0.90$ & $21.41\;[+50.3\%]$ & $+2.66$ & $139.33\;[+9297\%]$ \\
Mondrian k-means & $+3.17$ & $1.63\;[+14.8\%]$ & $+5.80$ & $0.91\;[+0.6\%]$ & $+1.13$ & $1.25\;[+4.1\%]$ & $+1.57$ & $15.32\;[+7.5\%]$ & $+1.29$ & $1.77\;[+19.2\%]$ \\
\textbf{SO-SCP} & $+0.39$ & $1.46\;[+3.4\%]$ & $+1.44$ & $0.91\;[+0.8\%]$ & $+0.84$ & $1.22\;[+1.1\%]$ & $+0.64$ & $14.67\;[+3.0\%]$ & $+1.13$ & $1.59\;[+7.4\%]$ \\
\bottomrule
\end{tabular}
    }
\end{table}

The classification plane splits the competitors into an inert corner and an expensive one.
\gls{lcp} moves neither axis: its gains stay between $0.04$ and $0.15$ points with essentially unchanged sets, so its kernel localizes too weakly to matter.
\gls{rlcp} sits at the opposite extreme, gaining $0.90$ to $3.26$ points at size increases of $36.3\%$ to $9{,}297\%$.
On ImageNet-1k its average set holds $139.33$ of the $1{,}000$ labels, against $1.48$ for \gls{scp} and $1.59$ for SO-SCP, and a set covering $14\%$ of the label space meets the coverage target without discriminating among labels.
Classwise calibration fails in the many-class regime it targets: it gains $1.78$ points on MNIST for a $0.1\%$ size increase, where every class has ample calibration data, but inflates sets by $39.8\%$ on Covertype and $697\%$ on ImageNet-1k, where small per-class buffers force noisy and conservative class thresholds.
Clustered calibration tempers this failure but inherits its inconsistency: it changes nothing measurable on CIFAR-10 and MNIST, and its one substantial gain, $0.90$ points on ImageNet-1k, costs a $22.4\%$ size increase, where SO-SCP gains more, $1.13$ points, for a third of that cost.

SO-SCP pairs a gain on every dataset, $0.39$ to $1.44$ points and at least five times that of \gls{lcp} throughout, with size increases of $0.8\%$ to $7.4\%$.
In raw terms it adds at most $0.11$ label on the four datasets whose sets average under two labels, and $0.43$ label on CIFAR-100 out of $100$ classes, so its sets remain decision-grade everywhere.

Mondrian $k$-means posts larger gains than SO-SCP on all five datasets in this audit, most notably $5.80$ points on MNIST for a $0.6\%$ size increase.
Two observations qualify the comparison.
First, the audit partitions are themselves $k$-means partitions of the same training representation, fitted separately and with different seeds, so the evaluation geometry is aligned with Mondrian's own grouping in a way it is not for any other calibrator.
Second, the regression half of the comparison shows what this partitioning does without a topology or a small-sample safeguard: near-zero gains on Bio and California Housing and the largest unbounded-interval rates on Concrete and Auto MPG.
Consequently, the SO-SCP evidence does not rest on this single audit, and the next subsections verify transfer across three partition granularities and three partition-free diagnostics.

Across both tasks, each competitor violates at least one requirement of a usable local calibrator.
\gls{lcp} localizes too weakly to matter in classification and worsens the gap on two regression datasets.
\gls{rlcp} converts its gains into unbounded intervals or heavily inflated sets.
Classwise and Clustered calibration apply only to classification and are inconsistent or costly precisely where class-conditional structure matters most.
Mondrian $k$-means collapses on small calibration splits and benefits from an audit geometry aligned with its own.
SO-SCP improves all ten datasets, keeps outputs within a few percent of \gls{scp}, confines its degeneracy to the two smallest calibration splits, and removes it there through Regime~3.

\begin{figure}[ht]
    \centering
    \includegraphics[width=0.93\textwidth]{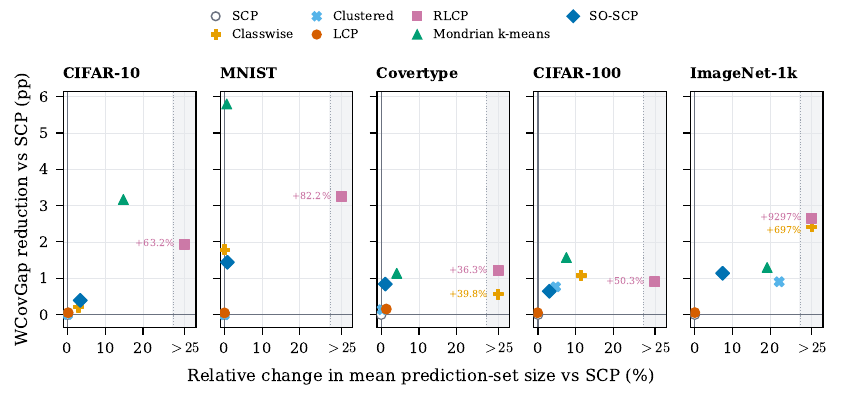}
    \caption{
        Classification trade-offs corresponding to Table~\ref{tab:cr_calibrator_tradeoff_classification}.
        The vertical axis gives the external WCovGap reduction from \gls{scp}, and the horizontal axis gives the relative change in mean prediction-set size.
        The shaded rail contains methods whose size increase exceeds $25\%$, with the exact changes printed to its left.
        \gls{scp} is fixed at the origin and SO-SCP is marked by a diamond.
    }
    \label{fig:cr_calibrator_tradeoff_classification}
\end{figure}

\subsubsection*{Transfer across partitions and partition-free diagnostics}

Now, we test whether the results depend on the audit partition.
Using the same Regime~2 predictions, we evaluate both methods on a shared \gls{som} partition and on external $k$-means partitions with $K\in\{10,25,50\}$.
The shared \gls{som} is fitted with the same training representation and map settings as SO-SCP, and its test-cell assignments are shared by both methods.
It provides an aligned audit, while the external partitions are not used to construct either conformal predictor.

\paragraph{Regression.}
Figure~\ref{fig:cr_transfer_regression} compares the WCovGap changes across the four audits, and Table~\ref{tab:cr_transfer_regression} adds the corresponding change in interval width.

\begin{figure}[ht]
    \centering
    \includegraphics[width=\textwidth]{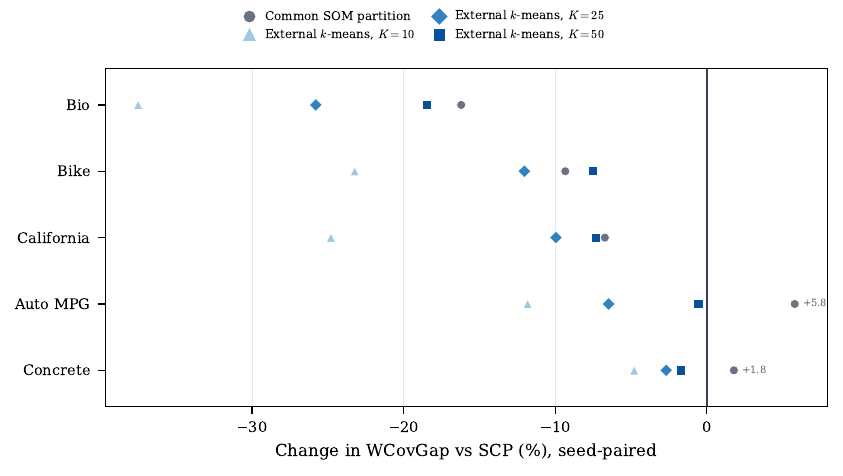}
    \caption{
        Ten-seed mean of the seed-paired relative WCovGap change from \gls{scp} to SO-SCP for regression.
        Gray circles use the shared \gls{som} partition.
        Triangles, diamonds, and squares use external $k$-means audit partitions with $K=10$, $25$, and $50$.
        All comparisons use the absolute residual score and the same Regime~2 predictions.
        Negative values favor SO-SCP.
    }
    \label{fig:cr_transfer_regression}
\end{figure}

\begin{table}[ht]
    \centering
    \caption{
        Ten-seed mean of the seed-paired relative change from \gls{scp} to SO-SCP for regression under Regime~2.
        The first three columns use the external $k$-means partitions.
        The SOM column evaluates predictions on the same fitted map for each dataset and seed.
        Output is the relative change in mean interval width and does not depend on the audit partition.
        Values are percentages and negative values favor SO-SCP.
        An infinite output change means that at least one seed has an unbounded mean SO-SCP interval width.
    }
    \label{tab:cr_transfer_regression}
    \resizebox{\textwidth}{!}{%
        \begin{tabular}{lccccc}
\toprule
 & $K=10$ ($\downarrow$) & $K=25$ ($\downarrow$) & $K=50$ ($\downarrow$) & SOM ($\downarrow$) & Output ($\downarrow$) \\
\midrule
Bike & $-23.2$ & $-12.0$ & $-7.5$ & $-9.3$ & $+1.8$ \\
Bio & $-37.5$ & $-25.8$ & $-18.5$ & $-16.2$ & $+0.4$ \\
California Housing & $-24.8$ & $-10.0$ & $-7.3$ & $-6.7$ & $-1.8$ \\
Concrete & $-4.8$ & $-2.7$ & $-1.7$ & $+1.8$ & $+\infty$ \\
Auto MPG & $-11.8$ & $-6.5$ & $-0.6$ & $+5.8$ & $+\infty$ \\
\bottomrule
\end{tabular}
    }
\end{table}

SO-SCP lowers mean WCovGap for every regression dataset at all three external granularities.
The reductions range from $4.8\%$ to $37.5\%$ at $K=10$, from $2.7\%$ to $25.8\%$ at $K=25$, and from $0.6\%$ to $18.5\%$ at $K=50$.
For every dataset, the relative reduction becomes smaller as $K$ increases.
With more groups, each group covers a smaller region and contains fewer test observations.

On the shared \gls{som} partition, the three larger regression datasets also improve.
WCovGap falls by $9.3\%$ on Bike Sharing, $16.2\%$ on Bio, and $6.7\%$ on California Housing, with output-size changes of $+1.8\%$, $+0.4\%$, and $-1.8\%$.
Concrete and Auto MPG are the exceptions, with changes of $+1.8\%$ and $+5.8\%$ on the shared map and an unbounded mean interval width in at least one seed for each dataset.
Their calibration splits contain only $154$ and $58$ observations, so fixed Regime~2 neighborhoods can contain too few scores for a finite cutoff.

This failure also explains why their $K=25$ values differ from the headline comparison.
Under Regime~2, Concrete and Auto MPG improve by $2.7\%$ and $6.5\%$, but these reductions accompany unbounded intervals and are not useful trade-offs.
Instead, the headline uses training-routed Regime~3, which keeps the recorded cutoffs finite and changes WCovGap by $-1.1\%$ and $+0.03\%$.
The next subsection, sensitivity and small calibration buffers, shows how Regime~3 enlarges sparse neighborhoods before calibration and removes the unbounded-cutoff failure as calibration size varies.
Bike Sharing, Bio, and California Housing use Regime~2 in both analyses, so their $K=25$ values match the headline.

\paragraph{Classification.}
Figure~\ref{fig:cr_transfer_classification} and Table~\ref{tab:cr_transfer_classification} provide the same comparison for classification, with prediction-set size as the output measure.

\begin{figure}[ht]
    \centering
    \includegraphics[width=\textwidth]{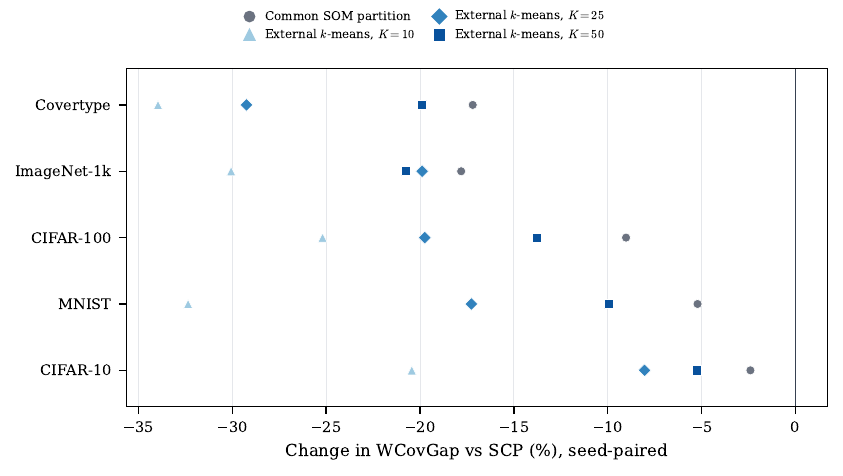}
    \caption{
        Ten-seed mean of the seed-paired relative WCovGap change from \gls{scp} to SO-SCP for classification.
        All comparisons use the \gls{lac} score and the same Regime~2 predictions on the four partitions shown in Figure~\ref{fig:cr_transfer_regression}.
        Negative values favor SO-SCP.
    }
    \label{fig:cr_transfer_classification}
\end{figure}

\begin{table}[ht]
    \centering
    \caption{
        Ten-seed mean of the seed-paired relative change from \gls{scp} to SO-SCP for classification under Regime~2.
        The external and SOM columns report WCovGap, while Output reports the change in mean prediction-set size.
        The partition definitions, units, and direction of improvement match Table~\ref{tab:cr_transfer_regression}.
    }
    \label{tab:cr_transfer_classification}
    \resizebox{\textwidth}{!}{%
        \begin{tabular}{lccccc}
\toprule
 & $K=10$ ($\downarrow$) & $K=25$ ($\downarrow$) & $K=50$ ($\downarrow$) & SOM ($\downarrow$) & Output ($\downarrow$) \\
\midrule
CIFAR-10 & $-20.4$ & $-8.0$ & $-5.3$ & $-2.4$ & $+3.3$ \\
MNIST & $-32.3$ & $-17.3$ & $-10.0$ & $-5.2$ & $+0.8$ \\
Covertype & $-34.0$ & $-29.2$ & $-19.9$ & $-17.2$ & $+1.1$ \\
CIFAR-100 & $-25.2$ & $-19.7$ & $-13.8$ & $-9.0$ & $+3.0$ \\
ImageNet-1k & $-30.1$ & $-19.9$ & $-20.8$ & $-17.8$ & $+7.4$ \\
\bottomrule
\end{tabular}
    }
\end{table}

The classification result is consistent across all four partitions.
The $K=25$ column reproduces the headline classification comparison.
On the shared \gls{som}, every dataset improves, with an average WCovGap change of $-10.3\%$, while mean prediction-set size changes by $+3.1\%$.
The individual reductions range from $2.4\%$ on CIFAR-10 to $17.8\%$ on ImageNet-1k.

All fifteen external comparisons also favor SO-SCP.
Even at $K=50$, every reduction is at least $5.3\%$.
Covertype improves by $29.2\%$ at the headline granularity, while ImageNet-1k remains close to a $20\%$ reduction at both $K=25$ and $K=50$.
The accompanying set-size changes range from $+0.8\%$ to $+7.4\%$.

Across both tasks, all $30$ external dataset-granularity comparisons favor SO-SCP.
The improvement is not tied to the headline choice $K=25$.
The two audits answer different questions.
The shared \gls{som} measures coverage balance on the geometry used by SO-SCP, while the external $k$-means partitions test whether the improvement persists under separately fitted groups that play no role in calibration.

\paragraph{Partition-free diagnostics.}

To avoid fixed feature-space partitions, we also report held-out \gls{wsc} target error, \gls{ssc} gap, and five-fold $L_1$-\gls{ert}.
Held-out \gls{wsc} averages ten half-split coverage estimates per seed, then reports the absolute deviation of their mean from $0.90$.
Each split searches $500$ random directions on one half for the lowest-coverage slab containing at least $10\%$ of its observations and evaluates the selected slab on the other half.
The \gls{ssc} gap groups observations into at most ten strata based on interval width or set cardinality.
It averages the absolute coverage deviation from $0.90$ equally across strata.
Five-fold $L_1$-\gls{ert} computes excess risk under the $L_1$ miscoverage loss from out-of-fold coverage predictions made by a histogram gradient-boosting classifier.
Smaller \gls{wsc} target error and \gls{ssc} gap are better.
For $L_1$-\gls{ert}, values closer to zero indicate less detectable deviation of conditional coverage from the target.

\begin{table}[ht]
    \centering
    \caption{
        Ten-seed means for the three partition-free regression diagnostics.
        Each $\Delta$ cell gives the SO-SCP minus \gls{scp} difference, followed by the relative change between the two ten-seed means.
        The relative change is $100(\mathrm{SO\text{-}SCP}-\mathrm{SCP})/\mathrm{SCP}$.
        Lower \gls{wsc} target error and \gls{ssc} gap are better.
        For $L_1$-\gls{ert}, values closer to zero are better, and all displayed means are positive.
        The lower value in each pair is bold.
        Auto MPG has no held-out \gls{wsc} estimate because every seed contains a search-selected slab with zero held-out support for both methods.
    }
    \label{tab:cr_conditional_regression}
    \resizebox{\textwidth}{!}{%
        \begin{tabular}{l*{9}{c}}
\toprule
 & \multicolumn{3}{c}{WSC error ($\downarrow$)} & \multicolumn{3}{c}{SSC gap ($\downarrow$)} & \multicolumn{3}{c}{$L_1$-ERT ($\rightarrow 0$)} \\
 & SCP & SO-SCP & $\Delta$ & SCP & SO-SCP & $\Delta$ & SCP & SO-SCP & $\Delta$ \\
\midrule
Bike & $0.1064$ & $\mathbf{0.0716}$ & $-0.0348\;(-32.7\%)$ & $0.0851$ & $\mathbf{0.0434}$ & $-0.0417\;(-49.0\%)$ & $0.0621$ & $\mathbf{0.0539}$ & $-0.0082\;(-13.2\%)$ \\
Bio & $0.0360$ & $\mathbf{0.0193}$ & $-0.0167\;(-46.4\%)$ & $0.0414$ & $\mathbf{0.0150}$ & $-0.0264\;(-63.8\%)$ & $0.0407$ & $\mathbf{0.0345}$ & $-0.0061\;(-15.1\%)$ \\
California Housing & $0.0925$ & $\mathbf{0.0849}$ & $-0.0076\;(-8.2\%)$ & $0.0863$ & $\mathbf{0.0215}$ & $-0.0648\;(-75.0\%)$ & $\mathbf{0.0549}$ & $0.0558$ & $+0.0009\;(+1.6\%)$ \\
Concrete & $0.0969$ & $\mathbf{0.0378}$ & $-0.0591\;(-61.0\%)$ & $0.1048$ & $\mathbf{0.0833}$ & $-0.0216\;(-20.6\%)$ & $0.0305$ & $\mathbf{0.0272}$ & $-0.0032\;(-10.6\%)$ \\
Auto MPG & \textit{n/a} & \textit{n/a} & \textit{n/a} & $0.1320$ & $\mathbf{0.1158}$ & $-0.0162\;(-12.3\%)$ & $0.0457$ & $\mathbf{0.0280}$ & $-0.0177\;(-38.7\%)$ \\
\bottomrule
\end{tabular}
    }
\end{table}

\begin{figure}[ht]
    \centering
    \includegraphics[width=0.92\textwidth]{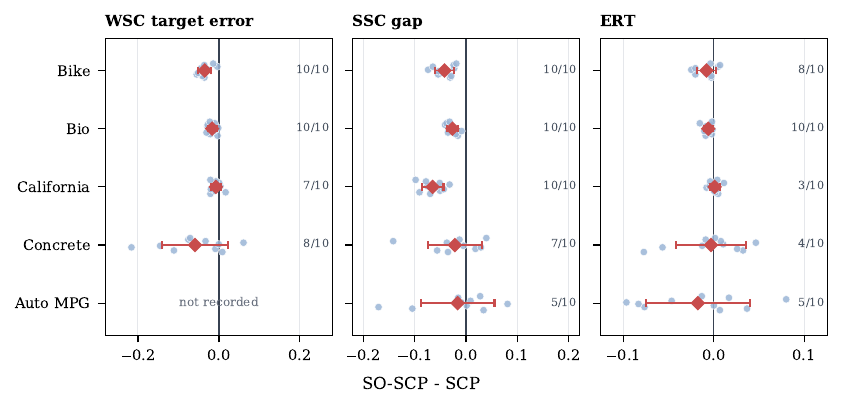}
    \caption{
        Seed-paired SO-SCP minus \gls{scp} effects for regression on held-out \gls{wsc} target error, \gls{ssc} gap, and five-fold $L_1$-\gls{ert}.
        Negative differences indicate a smaller diagnostic estimate.
        For $L_1$-\gls{ert}, values closer to zero are preferred.
        Dots are seeds, diamonds are means, and the lines span one standard deviation.
        Auto MPG has no held-out \gls{wsc} estimate because both methods produce at least one search-selected slab with zero held-out support in every seed.
    }
    \label{fig:cr_conditional_regression}
\end{figure}

Table~\ref{tab:cr_conditional_regression} reports the metric levels and their relative changes, while Figure~\ref{fig:cr_conditional_regression} shows the ten paired seed effects.
Averaging the dataset changes, SO-SCP reduces held-out \gls{wsc} target error by $37.1\%$, the \gls{ssc} gap by $44.1\%$, and $L_1$-\gls{ert} by $15.2\%$.
These reductions start from substantial \gls{scp} deficits, since pooling leaves held-out \gls{wsc} target errors between $0.0360$ and $0.1064$ and \gls{ssc} gaps between $0.0414$ and $0.1320$ despite marginal validity.
The first two diagnostics improve on every regression dataset for which they are recorded.
$L_1$-\gls{ert} improves on four of five, while California Housing changes only from $0.0549$ to $0.0558$.

The raw levels make the scale of these reductions clear.
On Concrete, mean held-out \gls{wsc} target error falls from $0.0969$ to $0.0378$, a $61.0\%$ reduction, and improves in eight of the ten seeds.
Bio reaches $0.0193$, a $46.4\%$ reduction.
For \gls{ssc}, California Housing falls by $75.0\%$, from $0.0863$ to $0.0215$, and Bio falls by $63.8\%$, from $0.0414$ to $0.0150$.
Auto MPG has no valid held-out \gls{wsc} estimate, but its \gls{ssc} gap decreases by $12.3\%$ and its $L_1$-\gls{ert} decreases by $38.7\%$.

\begin{table}[ht]
    \centering
    \caption{
        Ten-seed means for the three partition-free classification diagnostics.
        The definitions and conventions match Table~\ref{tab:cr_conditional_regression}.
        Each $\Delta$ cell reports the absolute difference and the relative change in parentheses, and the lower value in each pair is bold.
        A signed zero marks a difference smaller than the displayed precision.
    }
    \label{tab:cr_conditional_classification}
    \resizebox{\textwidth}{!}{%
        \begin{tabular}{l*{9}{c}}
\toprule
 & \multicolumn{3}{c}{WSC error ($\downarrow$)} & \multicolumn{3}{c}{SSC gap ($\downarrow$)} & \multicolumn{3}{c}{$L_1$-ERT ($\rightarrow 0$)} \\
 & SCP & SO-SCP & $\Delta$ & SCP & SO-SCP & $\Delta$ & SCP & SO-SCP & $\Delta$ \\
\midrule
CIFAR-10 & $0.0707$ & $\mathbf{0.0658}$ & $-0.0050\;(-7.0\%)$ & $\mathbf{0.0492}$ & $0.0936$ & $+0.0444\;(+90.2\%)$ & $0.0663$ & $\mathbf{0.0639}$ & $-0.0024\;(-3.7\%)$ \\
MNIST & $0.3337$ & $\mathbf{0.2980}$ & $-0.0357\;(-10.7\%)$ & $0.4998$ & $\mathbf{0.4998}$ & $-0.0000\;(-0.0\%)$ & $0.1698$ & $\mathbf{0.1613}$ & $-0.0085\;(-5.0\%)$ \\
Covertype & $0.0636$ & $\mathbf{0.0327}$ & $-0.0309\;(-48.6\%)$ & $\mathbf{0.2140}$ & $0.2234$ & $+0.0094\;(+4.4\%)$ & $0.0465$ & $\mathbf{0.0442}$ & $-0.0023\;(-4.8\%)$ \\
CIFAR-100 & $0.0322$ & $\mathbf{0.0203}$ & $-0.0118\;(-36.8\%)$ & $\mathbf{0.0275}$ & $0.0305$ & $+0.0030\;(+10.9\%)$ & $0.0268$ & $\mathbf{0.0245}$ & $-0.0023\;(-8.6\%)$ \\
ImageNet-1k & $0.1593$ & $\mathbf{0.1361}$ & $-0.0231\;(-14.5\%)$ & $0.1811$ & $\mathbf{0.1746}$ & $-0.0066\;(-3.6\%)$ & $0.0789$ & $\mathbf{0.0692}$ & $-0.0097\;(-12.3\%)$ \\
\bottomrule
\end{tabular}
    }
\end{table}

\begin{figure}[ht]
    \centering
    \includegraphics[width=0.99\textwidth]{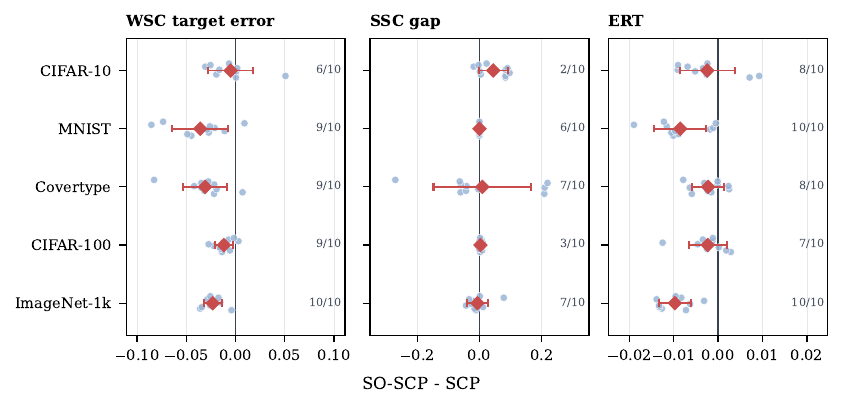}
    \caption{
        Seed-paired SO-SCP minus \gls{scp} effects for classification on held-out \gls{wsc} target error, \gls{ssc} gap, and five-fold $L_1$-\gls{ert}.
        Negative differences indicate a smaller diagnostic estimate.
        For $L_1$-\gls{ert}, values closer to zero are preferred.
        Dots are seeds, diamonds are means, and the lines span one standard deviation.
    }
    \label{fig:cr_conditional_classification}
\end{figure}

Table~\ref{tab:cr_conditional_classification} and Figure~\ref{fig:cr_conditional_classification} show the same diagnostics for classification.
Averaging the dataset changes, held-out \gls{wsc} target error decreases by $23.5\%$ and $L_1$-\gls{ert} by $6.9\%$.
Both improve on every dataset.
The \gls{wsc} reduction reaches $48.6\%$ on Covertype, from $0.0636$ to $0.0327$, and $36.8\%$ on CIFAR-100, from $0.0322$ to $0.0203$.
The two largest raw \gls{wsc} errors also decrease, by $10.7\%$ on MNIST and $14.5\%$ on ImageNet-1k.
Despite this reduction, MNIST retains the largest held-out \gls{wsc} target error for both methods, at $0.3337$ and $0.2980$.
For $L_1$-\gls{ert}, the relative reduction ranges from $3.7\%$ on CIFAR-10 to $12.3\%$ on ImageNet-1k.

The classification \gls{ssc} result is mixed.
It remains unchanged on MNIST, decreases by $3.6\%$ on ImageNet-1k, and increases on the other three datasets.
The largest increase is on CIFAR-10, from $0.0492$ to $0.0936$, where the small pooled level turns the absolute increase of $0.0444$ into a $90.2\%$ relative change.
This increase is driven by empty classification prediction sets, which occur when no candidate label passes the conformal cutoff.
\gls{ssc} groups test predictions by the number of labels returned and computes coverage within each observed size group.
The empty predictions form a size-zero group whose coverage is zero because it cannot contain the true label.
Although SO-SCP returns an empty set for only $2$ to $81$ of the $10{,}000$ CIFAR-10 test images in five seeds, \gls{ssc} gives this group the same weight as every other size group.
Across these five seeds, the mean SO-SCP minus \gls{scp} \gls{ssc} difference is $0.0875$, compared with $0.0013$ across the other five.
Averaging the ten seed-level differences gives $0.0444$, the CIFAR-10 difference reported in Table~\ref{tab:cr_conditional_classification}.

On MNIST, both methods return either no label or one label.
The empty-set group has coverage zero and contributes an error of $0.90$ relative to the target.
Coverage within the one-label group is nearly one, which contributes an error close to $0.10$.
Consequently, the equally weighted average of these two errors is close to $0.50$ for both methods.

Across both tasks, the dataset changes average $-29.5\%$ for held-out \gls{wsc} target error over its nine recorded datasets and $-11.0\%$ for $L_1$-\gls{ert} over ten.
Held-out \gls{wsc} improves on all nine recorded datasets, $L_1$-\gls{ert} on nine of ten, and \gls{ssc} on seven of ten, one of them, MNIST, only marginally.
The transfer result persists without selecting a fixed audit partition.

\subsubsection*{Sensitivity and small calibration buffers}

The preceding analyses use one fixed \gls{som} configuration per dataset.
First, we vary the grid and retrieval radius on California Housing to assess how this choice affects SO-SCP.

\begin{figure}[ht]
    \centering
    \includegraphics[width=\textwidth]{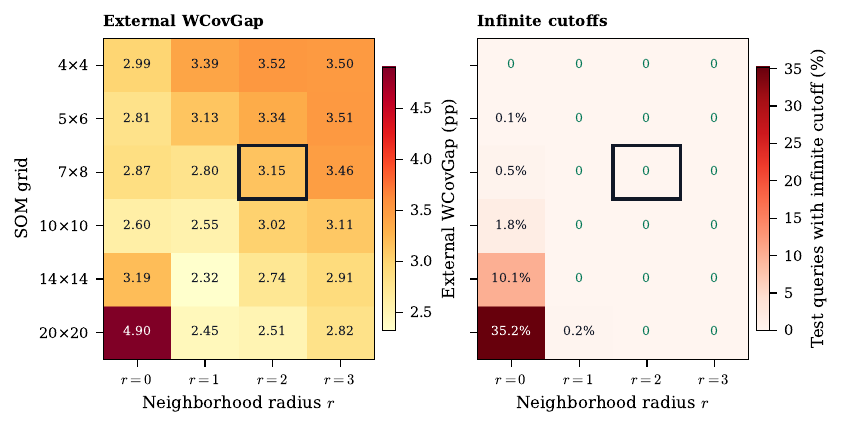}
    \caption{
        Sensitivity to the \gls{som} grid and neighborhood radius on California Housing with the absolute residual score.
        The left panel reports the ten-seed mean external WCovGap at $K=25$ in percentage points, using audit partitions shared by every \gls{som} configuration.
        The right panel reports the mean percentage of test queries receiving an infinite cutoff.
        The outlined $7\times8$ grid with radius $2$ is the configuration used in the headline campaign.
    }
    \label{fig:cr_sensitivity}
\end{figure}

Figure~\ref{fig:cr_sensitivity} shows how map resolution and neighborhood radius jointly affect WCovGap and the occurrence of infinite cutoffs.
With radius $0$, each query uses a single cell and the infinite-cutoff rate rises with map resolution, from $0\%$ on the $4\times4$ grid to $35.2\%$ on the $20\times20$ grid.
Radius $1$ removes this failure through the $14\times14$ grid and leaves only $0.2\%$ infinite cutoffs on the $20\times20$ grid.
Radii $2$ and $3$ remove it for all six grids.

Among the $18$ configurations with no infinite cutoff, WCovGap ranges from $2.32$ to $3.52$ points.
On the same external partitions, $15$ lie below the pooled \gls{scp} value of $3.50$ points.
The campaign configuration yields $3.15$ points, with no infinite cutoff, while the lowest value, $2.32$ points, occurs on the $14\times14$ grid with radius $1$.
As a result, the reported reduction does not depend on choosing the best configuration in the sweep.

The same support issue is more acute on Concrete and Auto MPG, whose full calibration splits contain only $154$ and $58$ observations.
Figure~\ref{fig:cr_regime3} varies the calibration set size to compare fixed-neighborhood Regime~2 with the automatic enlargement used by Regime~3.

\begin{figure}[ht]
    \centering
    \includegraphics[width=\textwidth]{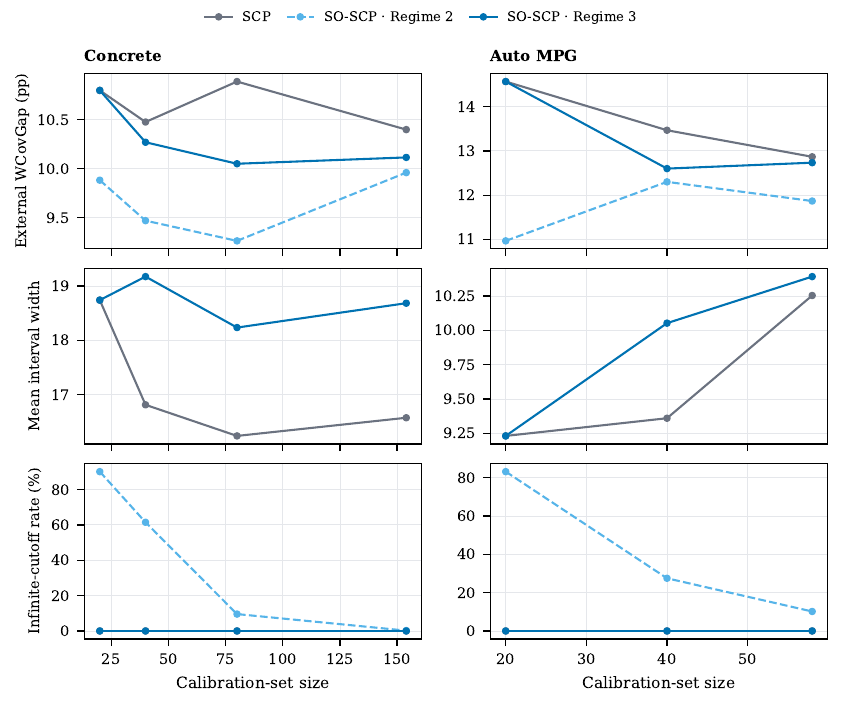}
    \caption{
        Calibration-size scaling on Concrete and Auto MPG with the absolute residual score.
        Rows report external WCovGap at $K=25$, mean interval width, and infinite-cutoff rate for \gls{scp}, Regime~2 SO-SCP, and training-routed Regime~3 SO-SCP with $b=19$.
        At each calibration set size, all three methods use the same observations.
        The x-axis gives the total calibration set size, not the number of scores retrieved for a query.
        Regime~2 is absent from the width row because its mean interval width is infinite at every displayed size.
    }
    \label{fig:cr_regime3}
\end{figure}

With the \gls{som} grid and Regime~2 neighborhood radius held fixed, shrinking the calibration set leaves some queries with too little local calibration support.
At $20$ observations, Regime~2 returns an infinite cutoff for $90.1\%$ of Concrete queries and $83.2\%$ of Auto MPG queries.
Its mean width is infinite even at the full calibration size because some queries still receive unbounded intervals.
At $20$ observations, its lower WCovGap cannot be read as a useful trade-off when most prediction intervals are unbounded.

Regime~3 removes every infinite cutoff at all seven calibration sizes.
For each combination, it selects the smallest global enlargement budget whose training-based projected buffer size reaches $b=19$ for every reference cell, using the planned calibration size but no calibration score.
The empirical coverage remains between $0.897$ and $0.926$, and every mean width is finite.
At $20$ observations, Regime~3 coincides with pooling on both datasets because the automatic rule expands far enough to recover the available support.
As the calibration set grows, the mean selected enlargement budget falls from $26.0$ to $2.2$ additional cells on Concrete and from $16.0$ to $7.4$ on Auto MPG.
These smaller budgets produce more local calibration buffers.
At every calibration size above $20$, Regime~3 has lower WCovGap than pooled \gls{scp}.
With $80$ observations on Concrete, WCovGap decreases from $10.89$ to $10.05$ points, and with $40$ observations on Auto MPG, it decreases from $13.47$ to $12.60$ points.

Several baselines show the same finite-cutoff failure.
The kernel procedures can return an infinite local cutoff when their localized calibration distribution does not support the target level, while Mondrian $k$-means does so when the assigned group contains too few calibration scores.
Regime~3 gives \gls{socp} an explicit safeguard: it recovers pooling when necessary, restores locality as support grows, and does so through a rule fixed before the calibration scores used for the final threshold are observed.

\clearpage
\subsection{Spatial diagnostics on the \glsentryshort{som}}
\label{app:spatial_diagnostics}

All maps use the seed-$42$ fit, with $(0,0)$ at the bottom left, and the task's base score, absolute residual for regression and \gls{lac} for classification.
Regime~2 is used on all ten datasets, including Concrete and Auto MPG, so one retrieval rule underlies every spatial pattern.
Each calibrator produces its predictions with its own rule and is only projected onto the same fitted \gls{som}, so the grid is a shared audit partition.
Unless stated otherwise, quoted numbers refer to this single fit, and ten-seed values are named as such.
Read in order, the maps trace one chain: occupancy determines the buffers a local rule can use, the score distributions determine the cutoff each region needs, and the coverage, output-size, and threshold maps show which calibrators supply that cutoff and at what price.

\subsubsection*{Map geometry, occupancy, and buffer size}

First, we examine where the fitted map is supported by data, from inputs alone.

\begin{figure}[ht]
    \centering
    \includegraphics[width=0.78\textwidth]{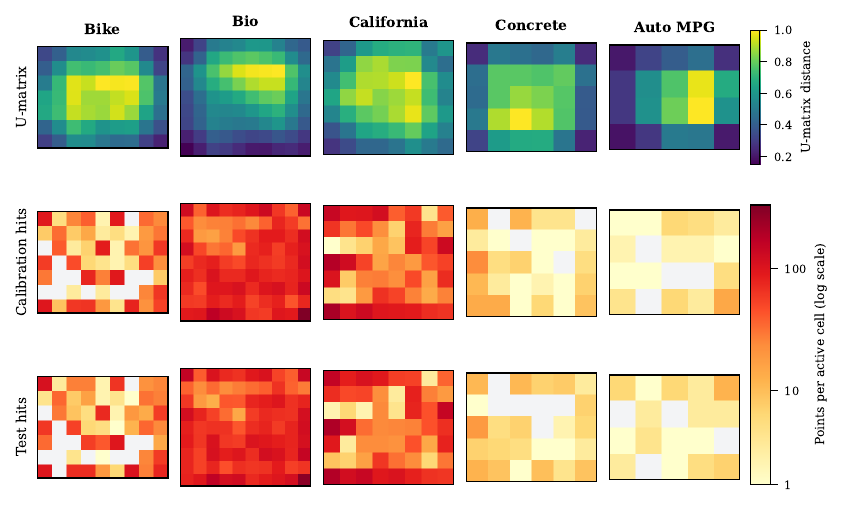}
    \caption{
        Regression \gls{som} geometry and occupancy.
        Columns are datasets.
        The top row sums the distances between each cell's prototype and those of its adjacent grid cells, normalized by the map maximum.
        Light cells have prototypes far from their neighbors, dark cells lie in regions of mutually close prototypes.
        The remaining rows give calibration and test hit counts on a shared logarithmic scale with raw-count labels.
        Gray cells contain no observation from the displayed split.
    }
    \label{fig:cr_maps_regression_geometry}
\end{figure}

\begin{figure}[ht]
    \centering
    \includegraphics[width=0.78\textwidth]{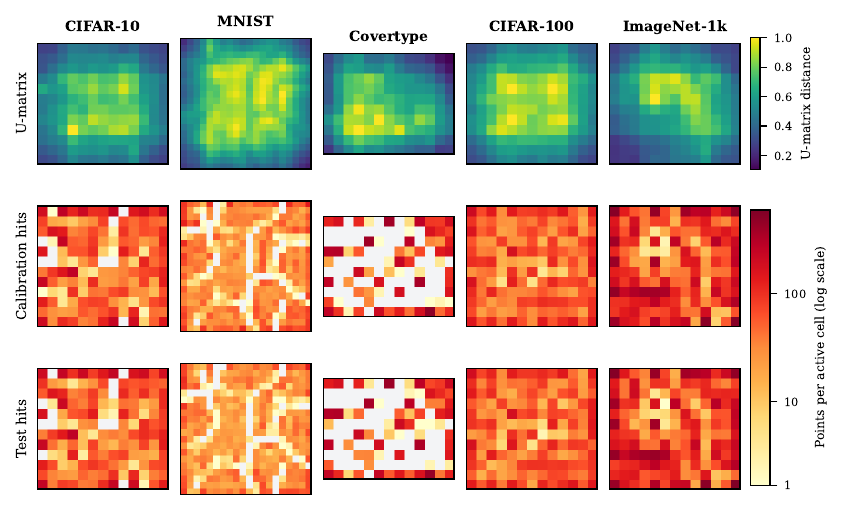}
    \caption{
        Classification \gls{som} geometry and occupancy.
        The rows, scales, and empty-cell convention match Figure~\ref{fig:cr_maps_regression_geometry}.
    }
    \label{fig:cr_maps_classification_geometry}
\end{figure}

The U-matrix summarizes input-space geometry only: light cells have prototypes far from their grid neighbors, while dark regions contain mutually close prototypes.
The hit maps report how the calibration and test samples populate the same grid.
Under the standing i.i.d. assumption the two occupancy rows should agree up to sampling fluctuation, so cells where they visibly disagree are the ones whose per-cell coverage estimates below carry the most noise.
The joint reading also shows where cell-level calibration runs out of data.
On MNIST, empty cells concentrate along the bright U-matrix ridges, on Covertype some cells hold hundreds of points while their neighbors hold none, and on Concrete and Auto MPG many cells receive only a handful of calibration points.
An empty cell admits no cell-level threshold at all, and a cell with fewer than nine scores gives only an infinite one at $\alpha=0.1$, which is why \gls{socp} pools calibration scores over a neighborhood rather than a single cell through Regimes~2 and~3.

Neighborhood retrieval is the proposed solution, and Figures~\ref{fig:cr_buffers_regression} and~\ref{fig:cr_buffers_classification} verify that it works.
Every test point landing in a given cell retrieves the same fixed neighborhood, so each test-occupied cell has one buffer size, the number of calibration points routed to the cells of its neighborhood.
The histograms show these buffer sizes across the test-occupied cells of each dataset.
They depend only on the fitted map, the radius, and the calibration routing, so they are known before any score is computed.
By contrast, \gls{lcp} and \gls{rlcp} reweight the whole calibration set at every query, and the width maps below show what this can cost.

\begin{figure}[ht]
    \centering
    \includegraphics[width=\textwidth]{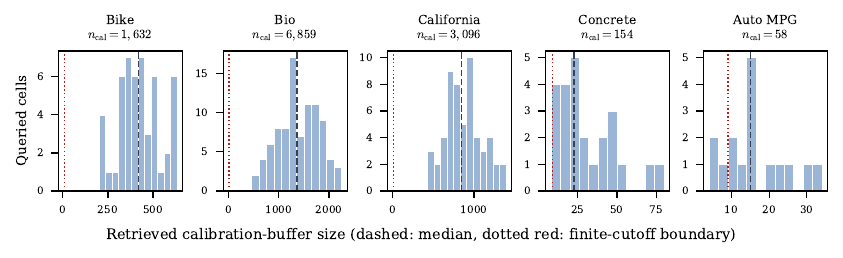}
    \caption{
        Regime~2 calibration-buffer sizes across test-occupied regression cells.
        Each histogram is computed from the recorded retrieval mask and calibration hit map.
        The dashed line marks the median buffer size.
        The dotted red line marks nine scores, the smallest buffer yielding a finite cutoff at $\alpha=0.1$ under Definition~\ref{def:conformal-q}.
    }
    \label{fig:cr_buffers_regression}
\end{figure}

\begin{figure}[ht]
    \centering
    \includegraphics[width=\textwidth]{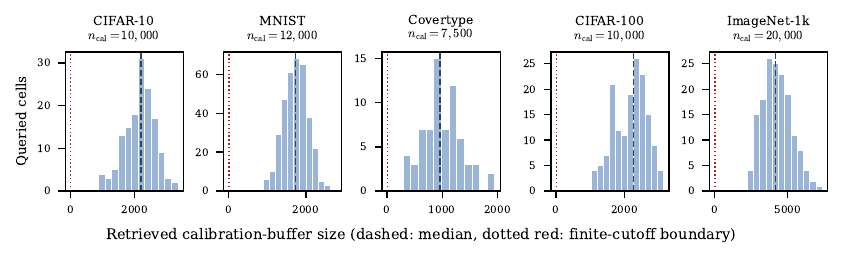}
    \caption{
        Regime~2 calibration-buffer sizes across test-occupied classification cells.
        The encoding matches Figure~\ref{fig:cr_buffers_regression}.
    }
    \label{fig:cr_buffers_classification}
\end{figure}

On the three large regression datasets, the smallest buffer already holds a few hundred scores, and no classification buffer comes near the nine-score boundary.
The boundary is active only on Concrete and Auto MPG, whose median buffers hold $22$ and $15$ scores, and one of the $24$ test-occupied Concrete cells and three of the $16$ test-occupied Auto MPG cells fall below it.
Because a buffer below nine scores forces an infinite cutoff, these four cells fully predict the blue cells of the width and threshold maps below.
The diagnostic identifies the failure set before any score is computed.
Regime~3 applies the same idea one step earlier, anticipating these buffers from training occupancy without touching the calibration split, and in Appendix~\ref{app:detailed_results} no infinite cutoff remains at any recorded calibration size.

\clearpage
\subsubsection*{Predictor difficulty and score tails}

For a test-occupied cell $k$, let $\widehat q^{\mathrm{test}}_{0.9,k}$ be the empirical $90$th percentile of the base scores evaluated at the observed test responses or true labels assigned to that cell.
It describes the upper tail of the observed test scores.
It is not a conformal cutoff and is never used for calibration.

\begin{figure}[ht]
    \centering
    \includegraphics[width=\textwidth]{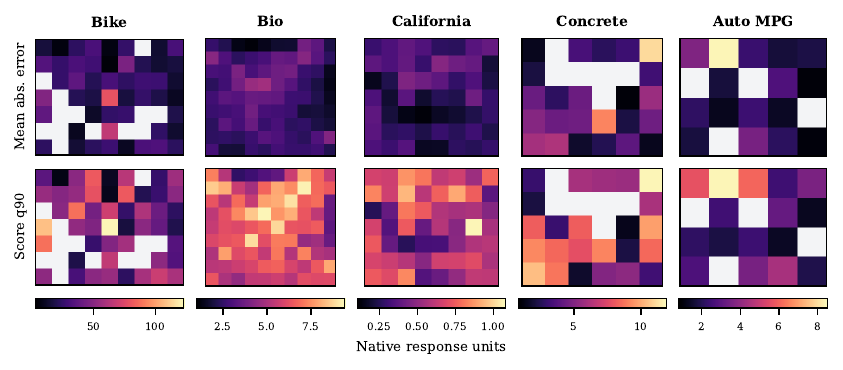}
    \caption{
        Regression predictor and score diagnostics.
        The top row gives per-cell mean absolute prediction error.
        The bottom row gives $\widehat q^{\mathrm{test}}_{0.9,k}$ for the absolute residual score.
        Each dataset uses one response-unit scale shared across both rows.
        Gray cells contain no test observations.
    }
    \label{fig:maps_regression_predictor_score}
\end{figure}

For regression, the two rows describe different parts of the same absolute residual distribution.
The mean measures typical prediction error, while $\widehat q^{\mathrm{test}}_{0.9,k}$ describes its upper tail.
For a cutoff that is constant within cell $k$, this quantile is the test-side threshold that would cover approximately $90\%$ of the observations in that cell.
For query-dependent cutoffs, each point is compared with its own cutoff, so no single threshold summarizes the cell and the quantile serves only as a descriptive reference.

The upper tail changes markedly across the map.
It forms a broad band on Bio and follows the upper edge of the California Housing grid.
A single \gls{scp} cutoff cannot follow this variation, so the same threshold can be conservative in low-tail cells and insufficient in high-tail cells.
These maps are the reference surface for the comparisons that follow: reading Figures~\ref{fig:maps_regression_calibrators}, \ref{fig:maps_regression_cutoffs}, and~\ref{fig:maps_regression_size} against them shows which calibrators raise their cutoffs in the high-tail regions, and at what cost in width.

\begin{figure}[ht]
    \centering
    \includegraphics[width=\textwidth]{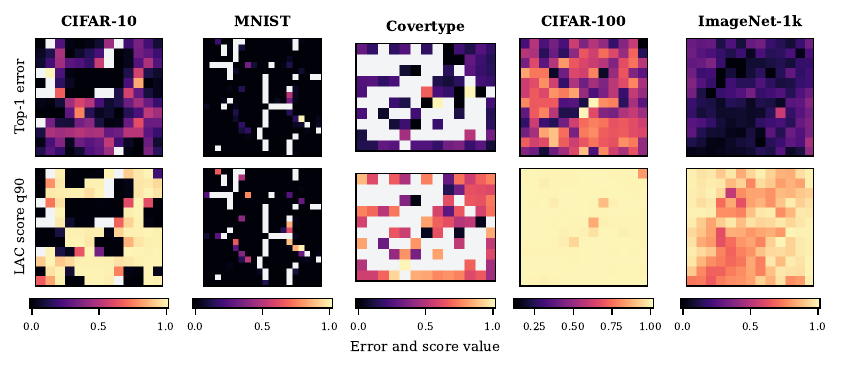}
    \caption{
        Classification predictor and score diagnostics.
        The top row gives per-cell top-1 error.
        The bottom row gives $\widehat q^{\mathrm{test}}_{0.9,k}$ for the \gls{lac} score.
        Each dataset uses one scale shared across both rows.
        Gray cells contain no test observations.
    }
    \label{fig:maps_classification_predictor_score}
\end{figure}

For classification, the two rows measure different aspects of predictive difficulty.
Top-1 error records whether the most probable class is correct, whereas the \gls{lac} score $1-\widehat p_y(x)$ records the confidence assigned to the true label, so a correct but diffuse prediction can have a large score, since the true label can rank first with a probability far below one.
The maps of $\widehat q^{\mathrm{test}}_{0.9,k}$ show three patterns: larger values localized to sparsely populated MNIST cells, values close to one across almost all CIFAR-100 cells, and a broader spatial gradient on ImageNet-1k.
The CIFAR-100 saturation reflects the predictor performance: its top-1 accuracy of $0.49$ is far below the $0.81$ or more of the other four datasets.
Consequently, most cells hold enough misclassified points, whose true-label probabilities are small, to push the $90$th score percentile near one.
Each situation demands a different correction, targeted, nearly global, or smoothly varying, and Figures~\ref{fig:maps_classification_calibrators}, \ref{fig:maps_classification_cutoffs}, and~\ref{fig:maps_classification_size} record how each calibrator responds.

\subsubsection*{Coverage across calibration methods}

Figures~\ref{fig:maps_regression_calibrators} and~\ref{fig:maps_classification_calibrators} compare empirical coverage on the shared \gls{som} audit grid.
Coverage is shown for every cell containing at least one test observation.
A one-point cell has coverage zero after a single miss and one otherwise.
Such visualizations should be read with the occupancy maps in Figures~\ref{fig:cr_maps_regression_geometry} and~\ref{fig:cr_maps_classification_geometry}.
The ten-seed WCovGap results in Appendix~\ref{app:detailed_results} remain the basis for aggregate comparisons.

\begin{figure}[ht]
    \centering
    \includegraphics[width=\textwidth]{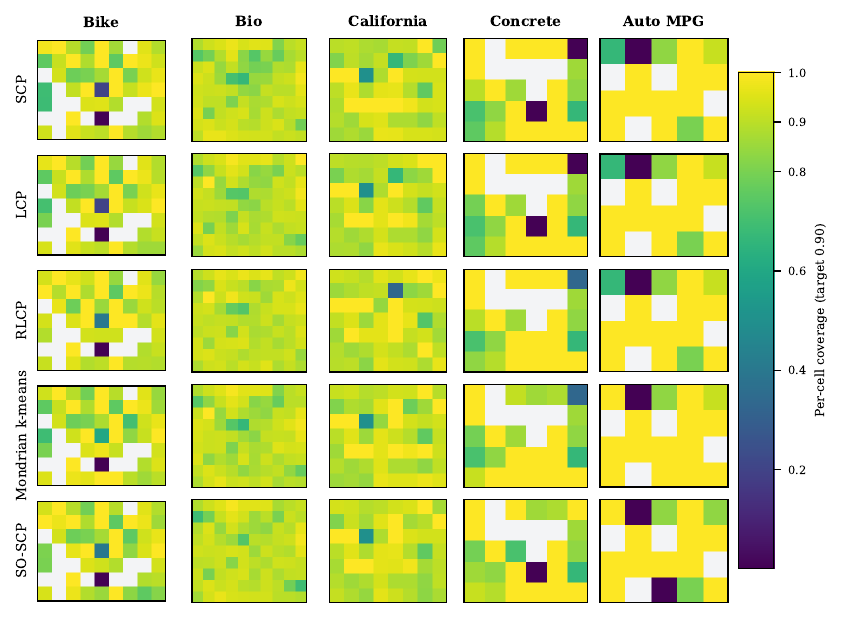}
    \caption{
        Per-cell regression coverage for the five calibrators using the absolute residual score.
        Rows are \gls{scp}, \gls{lcp}, \gls{rlcp}, Mondrian $k$-means, and SO-SCP.
        Columns are datasets.
        All panels share one scale, with the $0.90$ target marked on the colorbar.
        Gray cells contain no test observations.
    }
    \label{fig:maps_regression_calibrators}
\end{figure}

Bio is the most reliable regression example because all $90$ cells contain at least five test observations.
Under \gls{scp}, ten cells lie more than ten percentage points below the $0.90$ target.
SO-SCP raises their mean coverage from $0.757$ to $0.789$, while the number of cells below $0.80$ falls from ten to six.
Marginal coverage changes only from $0.905$ to $0.906$, and mean width from $12.95$ to $12.98$.
This is a redistribution of coverage rather than a global widening.
Figures~\ref{fig:maps_regression_cutoffs} and~\ref{fig:maps_regression_size} show the mechanism directly: SO-SCP varies the cutoff, and so the interval width, across cells, whereas \gls{scp} uses one cutoff and one width throughout.

\begin{figure}[ht]
    \centering
    \includegraphics[width=0.98\textwidth]{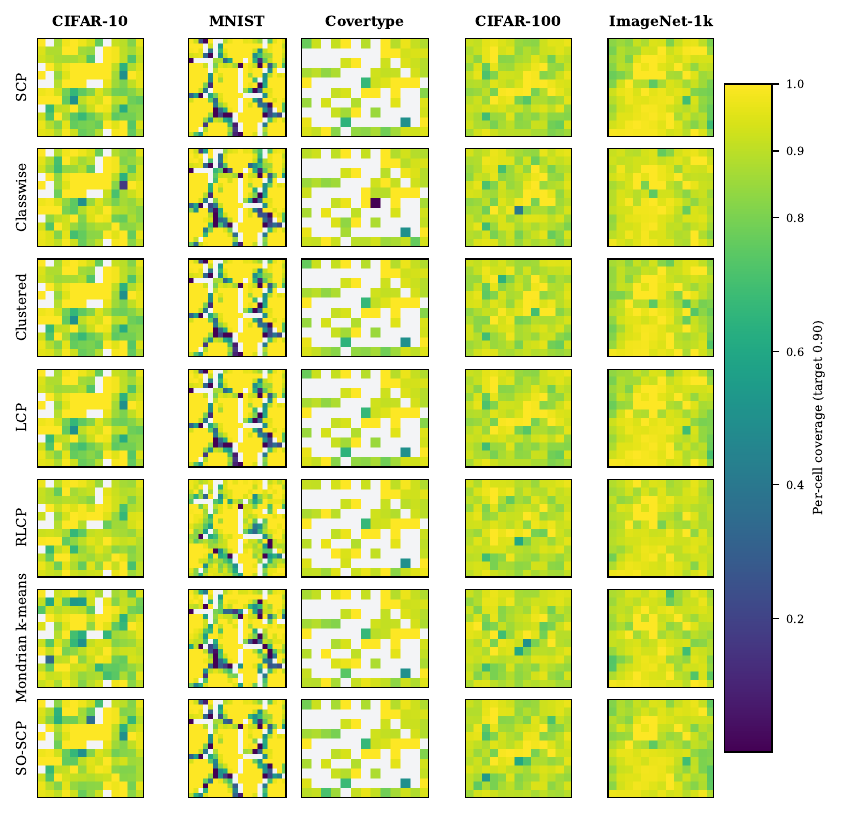}
    \caption{
        Per-cell classification coverage for the seven calibrators using the \gls{lac} score.
        Rows are \gls{scp}, Classwise, Clustered, \gls{lcp}, \gls{rlcp}, Mondrian $k$-means, and SO-SCP.
        Columns are datasets.
        The scale, target marker, and support convention match Figure~\ref{fig:maps_regression_calibrators}.
    }
    \label{fig:maps_classification_calibrators}
\end{figure}

On MNIST, the undercovered bands follow the sparse ridges in Figure~\ref{fig:cr_maps_classification_geometry} and the larger values of $\widehat q^{\mathrm{test}}_{0.9,k}$ in Figure~\ref{fig:maps_classification_predictor_score}.
Clustered and \gls{lcp} largely reproduce the \gls{scp} pattern, while SO-SCP raises coverage along parts of these bands and \gls{rlcp} raises it more broadly.
Coverage alone cannot distinguish a targeted correction from a general expansion of the prediction sets.
The next subsection separates the conformal cutoff from the resulting output size to make this trade-off explicit.

\subsubsection*{Conformal thresholds and output size}

The coverage maps show where coverage changes, but not how each calibrator produces that change or whether the resulting outputs remain informative.
Figure~\ref{fig:maps_regression_cutoffs} summarizes the scalar conformal cutoffs applied to regression queries in each audit cell.

\begin{figure}[ht]
    \centering
    \includegraphics[width=0.83\textwidth]{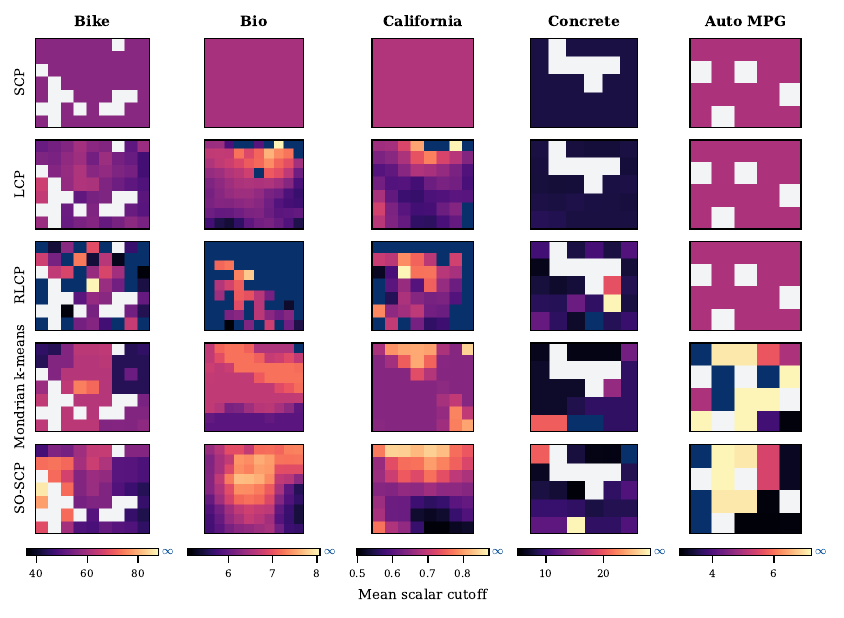}
    \caption{
        Mean scalar conformal cutoff within each regression audit cell.
        Rows are \gls{scp}, \gls{lcp}, \gls{rlcp}, Mondrian $k$-means, and SO-SCP.
        Each dataset has one linear score-unit scale shared across rows.
        Blue cells contain at least one infinite cutoff, so their cell mean is infinite.
        Gray cells contain no test observations.
    }
    \label{fig:maps_regression_cutoffs}
\end{figure}

The flat \gls{scp} row reflects its single pooled cutoff.
Instead, SO-SCP derives the cutoff from a retrieved neighborhood, producing a structured surface over the map.
On Bio and California Housing, broad high-cutoff regions overlap the larger values of $\widehat q^{\mathrm{test}}_{0.9,k}$ in Figure~\ref{fig:maps_regression_predictor_score}, while lower cutoffs appear where this empirical percentile is smaller.
Mondrian $k$-means reflects its learned groups through more abrupt changes.
\gls{lcp} varies smoothly on the three large datasets, while the \gls{rlcp} surface is noisy and interrupted by infinite cells.
On Auto MPG, both kernel cutoffs coincide with the pooled cutoff at every query given this fixed seed, so the row shows pooled calibration at work rather than kernel localization.
Table~\ref{tab:cr_calibrator_tradeoff_regression} extends this struggle over ten seeds and to both small calibration splits: \gls{rlcp} stays identical to \gls{scp} on Auto MPG and gains only $0.32$ points on Concrete at the price of infinite intervals, while \gls{lcp} worsens WCovGap on both datasets.

Figure~\ref{fig:maps_classification_cutoffs} gives the classification counterpart.

\begin{figure}[H]
    \centering
    \includegraphics[width=0.83\textwidth]{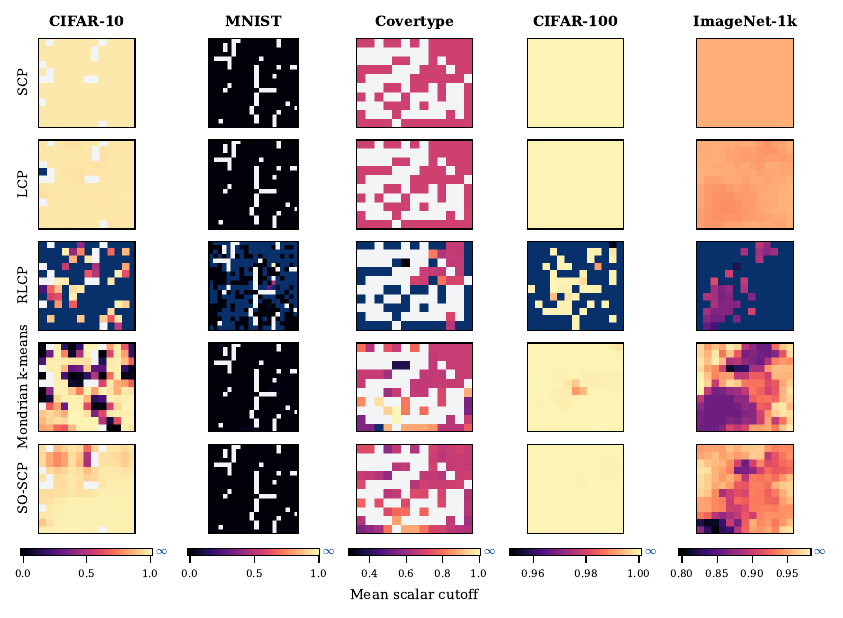}
    \caption{
        Mean scalar conformal cutoff within each classification audit cell.
        Rows are \gls{scp}, \gls{lcp}, \gls{rlcp}, Mondrian $k$-means, and SO-SCP, with one linear scale per dataset shared across rows.
        Classwise and Clustered are omitted because they return label-specific cutoffs rather than one scalar cutoff per query.
        Blue cells contain at least one infinite cutoff, which admits all $C$ labels, while gray cells contain no test observations.
    }
    \label{fig:maps_classification_cutoffs}
\end{figure}

Again, SO-SCP replaces the pooled value with a structured cutoff surface.
On ImageNet-1k, its broad high-cutoff regions overlap the larger values of $\widehat q^{\mathrm{test}}_{0.9,k}$ in Figure~\ref{fig:maps_classification_predictor_score}.
By contrast, \gls{lcp} remains close to the pooled cutoff on most datasets, consistent with its small WCovGap changes in Table~\ref{tab:cr_calibrator_tradeoff_classification}.
The \gls{rlcp} maps are dominated by infinite cutoffs, which occur in $57\%$ to $78\%$ of the occupied cells across the five datasets, while SO-SCP records none.

For the absolute residual score, a cutoff $\hat q(x)$ gives the interval $[\hat\mu(x)-\hat q(x),\hat\mu(x)+\hat q(x)]$, whose width is $2\hat q(x)$.
Figure~\ref{fig:maps_regression_size} maps the resulting interval widths.

\begin{figure}[ht]
    \centering
    \includegraphics[width=\textwidth]{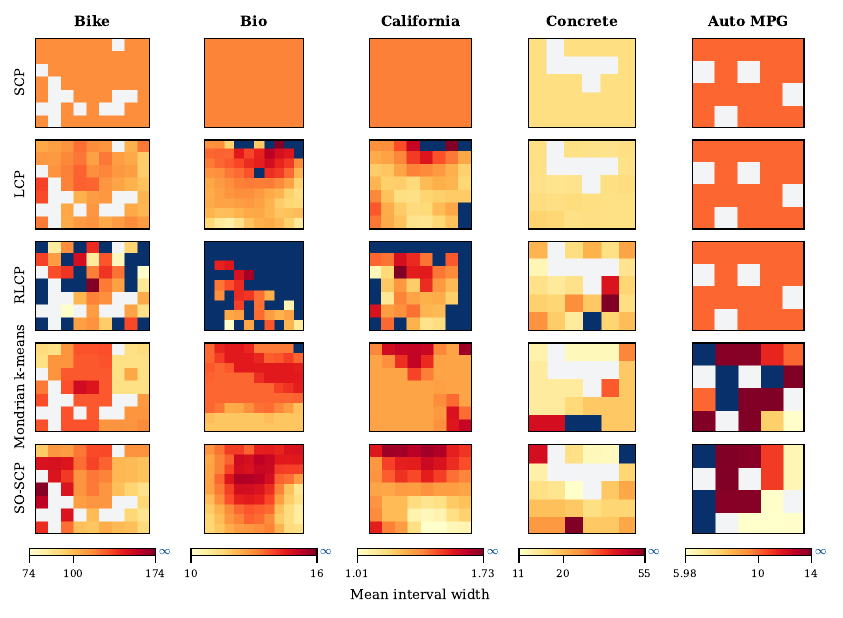}
    \caption{
        Per-cell mean interval width across regression calibrators.
        Rows and columns match Figure~\ref{fig:maps_regression_calibrators}.
        Each cell reports the mean width in the original target units.
        Within each dataset, one logarithmic YlOrRd scale is shared across rows, with raw widths shown on the colorbar.
        Blue cells contain at least one infinite interval, so their mean width is infinite.
        Gray cells contain no test observations.
    }
    \label{fig:maps_regression_size}
\end{figure}

For scalar-cutoff methods, each width map is the corresponding cutoff map doubled, wherever it is finite.
SO-SCP keeps every interval finite on Bike Sharing, Bio, and California Housing.
In this fixed seed representation, its mean width stays within $0.9\%$ of \gls{scp} on each of the three, so the Bio coverage redistribution reported above costs almost nothing, $12.98$ against $12.95$ target units.
Over ten seeds, Table~\ref{tab:cr_calibrator_tradeoff_regression} gives the same picture, $12.79$ against $12.74$ on Bio and width changes within $1.8\%$ on the three datasets.
By contrast, \gls{rlcp} returns at least one unbounded interval in $18$ of $50$ occupied Bike Sharing cells, $64$ of $90$ on Bio, and $21$ of $56$ on California Housing, and \gls{lcp} does so in $7$ Bio cells and $5$ California Housing cells.
The blue cells are not scattered at random: on California Housing, $19$ of \gls{rlcp}'s $21$ infinite cells lie among the $26$ border cells of the map, against $2$ of the $30$ interior cells, and all five \gls{lcp} infinite cells are border cells too.
A kernel cutoff turns infinite when too little calibration weight lies near the query, so the blue border marks the queries around which these kernels find too few calibration points.
SO-SCP has no such cells there because it retrieves by cell membership rather than by distance to the query: its fixed neighborhood pools hundreds of calibration scores even in border cells.
The one SO-SCP blue cell on Concrete and the three on Auto MPG are the sub-nine-score buffers identified in Figure~\ref{fig:cr_buffers_regression}, compared with two and four cells for Mondrian $k$-means.
Training-routed Regime~3 removes these infinite cutoffs in the calibration-size study of Figure~\ref{fig:cr_regime3}.

The prediction set is $C_\alpha(x;\hat q(x))=\{c: s(x,c)\leq\hat q(x)\}$ for any classification score, with $s(x,c)=1-\hat p_c(x)$ for \gls{lac}.
Classwise and Clustered instead use a cutoff that depends on the candidate label.
In either case, set size depends on the cutoff and the complete class-probability vector.
Figure~\ref{fig:maps_classification_size} shows the resulting number of included labels.

\begin{figure}[ht]
    \centering
    \includegraphics[width=0.95\textwidth]{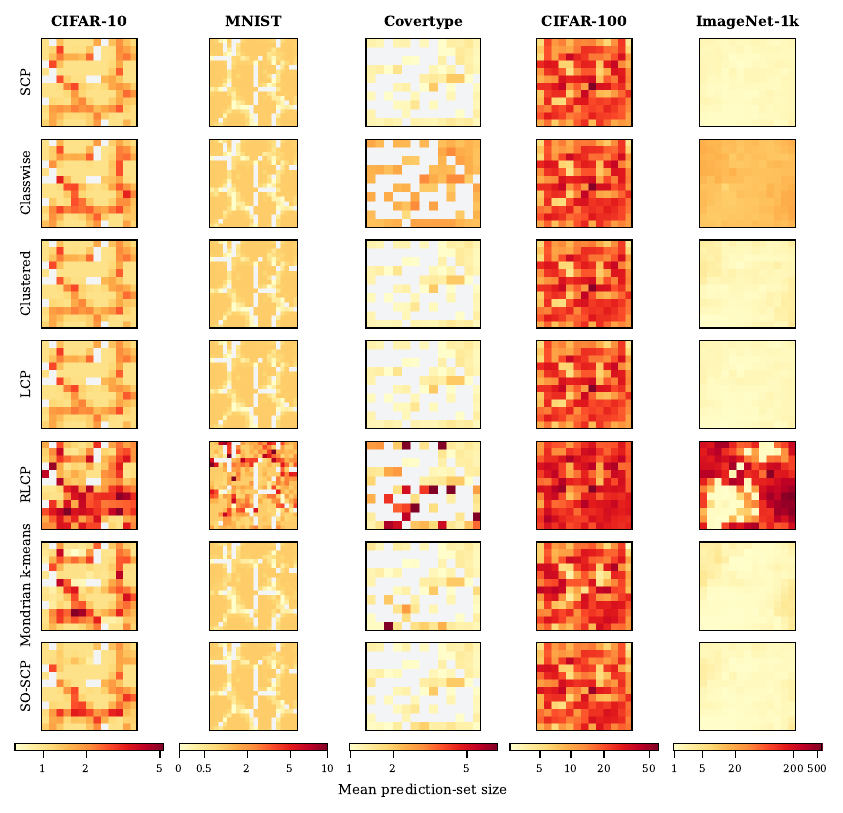}
    \caption{
        Per-cell mean prediction-set size across classification calibrators.
        Rows and columns match Figure~\ref{fig:maps_classification_calibrators}.
        Each cell reports the mean number of labels.
        Within each dataset, one scale is shared across rows, with raw counts shown on the colorbar.
        Gray cells contain no test observations.
    }
    \label{fig:maps_classification_size}
\end{figure}

An infinite cutoff admits every label and returns the full but finite set of all $C$ labels.
Consequently, the blue cells of the cutoff maps reappear here as enlarged sets rather than as blue cells.
For \gls{rlcp} on ImageNet-1k, infinite cutoffs produce the full $1{,}000$-label set for $2{,}771$ of $20{,}000$ queries and raise the mean set size to $141.21$ labels.
Classwise, absent from the cutoff maps because it sets one cutoff per label, is enlarged everywhere on ImageNet-1k: every cell mean lies between $7.3$ and $18.9$ labels, against at most $2.2$ for \gls{scp}.
Regarding the \gls{scp} and SO-SCP means, they are $1.5$ and $1.55$ labels, respectively.
Given this fixed seed representation, SO-SCP records no infinite classification cutoff, and its dataset-mean set size differs from \gls{scp} by at most $0.18$ label across the five datasets.
The ten-seed comparison in Table~\ref{tab:cr_calibrator_tradeoff_classification} gives the same scale, with SO-SCP set-size increases between $0.8\%$ and $7.4\%$ across the five classification datasets.

Taken together, the maps distinguish low output cost from a meaningful change in local calibration.
\gls{lcp} often preserves output size because its cutoff stays near the pooled value, while the larger \gls{rlcp} changes frequently produce unbounded intervals or full-label sets.
SO-SCP varies the cutoff while retaining the output scale of \gls{scp} on the well-supported datasets.
Its only infinite-cutoff cells under Regime~2 are the sparse buffers identified from occupancy alone and removed by Regime~3.
Next, the \gls{ks} diagnostics examine how closely each retrieved score distribution represents its central cell.

\clearpage
\subsubsection*{Empirical \glsentryshort{ks} bridge diagnostics}
\label{app:ks_diagnostics}

Regime~2 gives exact coverage conditional on the test point falling in the same retrieved set used for calibration.
Its central-cell statement follows from Lemma~\ref{lem:bridge} and pays the population discrepancy $\varepsilon_k(A)$ between cell $k$ and its retrieved neighborhood $A$.
This term is not observable, so we estimate the corresponding score-distribution discrepancies after calibration.
These estimates are diagnostic only.
They do not enter the cutoff calculation or replace the population term in the guarantee.

For every cell $k$, set $\widehat n_k\coloneqq|\mathcal I_k|$.
When $\widehat n_k\geq1$, define
\[
    \widehat F^{(k)}(t)
    \coloneqq
    \frac{1}{\widehat n_k}
    \sum_{i\in\mathcal I_k}\mathds{1}\{s_i\leq t\}.
\]
For the recorded Regime~2 set $A=A(k)$, define
\[
    \widehat d_{\mathrm{KS}}(k,j)
    \coloneqq
    \sup_{t\in\mathbb R}
    \left|\widehat F^{(k)}(t)-\widehat F^{(j)}(t)\right|,
    \qquad
    \widehat n_A
    \coloneqq
    \sum_{j\in A}\widehat n_j,
\]
and let
\[
    \widehat F^A(t)
    \coloneqq
    \sum_{j\in A:\widehat n_j>0}
    \frac{\widehat n_j}{\widehat n_A}\widehat F^{(j)}(t)
\]
be the empirical \gls{cdf} of the pooled calibration buffer.
We compare the pairwise bridge estimate with the empirical \gls{ks} distance between the central cell and its pooled buffer,
\begin{align}
    \widehat\varepsilon_k(A)
     & \coloneqq
    \sum_{j\in A:\widehat n_j>0}
    \frac{\widehat n_j}{\widehat n_A}
    \widehat d_{\mathrm{KS}}(k,j),
    \\
    \widehat d_{\mathrm{mix}}(k,A)
     & \coloneqq
    \sup_{t\in\mathbb R}
    \left|\widehat F^{(k)}(t)-\widehat F^A(t)\right|.
\end{align}
The first quantity mirrors the structure of $\varepsilon_k(A)$.
The second compares the central-cell empirical \gls{cdf} with the empirical \gls{cdf} of all scores in the retrieved buffer.
Since $\widehat F^A$ is the weighted mixture of the cellwise empirical \glspl{cdf}, the triangle inequality gives
$\widehat d_{\mathrm{mix}}(k,A)\leq\widehat\varepsilon_k(A)$. 
Both quantities are sample estimates used as post-hoc diagnostics, but neither is a confidence bound for the corresponding population discrepancy.

Figure~\ref{fig:maps_regression_ks} maps both diagnostics for regression.

\begin{figure}[ht]
    \centering
    \includegraphics[width=0.92\textwidth]{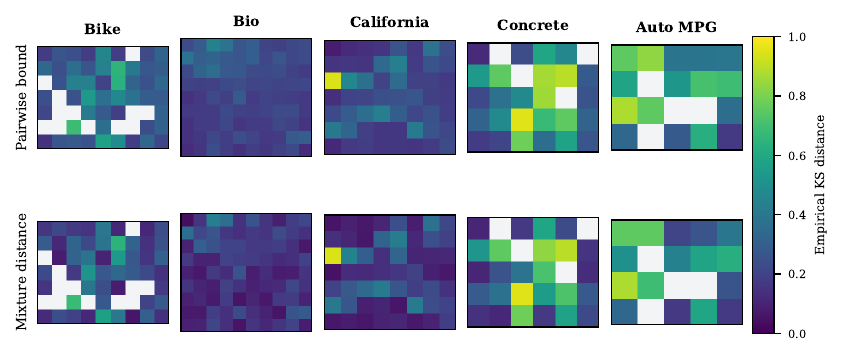}
    \caption{
        Empirical \gls{ks} bridge diagnostics for regression using calibration absolute residual scores.
        The top row gives the weighted pairwise estimate $\widehat\varepsilon_k(A)$.
        The bottom row gives the central-to-buffer distance $\widehat d_{\mathrm{mix}}(k,A)$.
        Both use a scale from $0$ to $1$.
        Gray cells contain no central-cell calibration score.
    }
    \label{fig:maps_regression_ks}
\end{figure}

Bio provides the clearest regression diagnostic because every active cell contains at least $17$ calibration scores.
Its median pairwise estimate is $0.209$, while the median central-to-buffer distance is $0.149$ and remains below $0.248$ in $90\%$ of active cells.
This complements the coverage and width maps above: SO-SCP redistributes coverage across Bio while changing mean width only from $12.95$ to $12.98$ target units, and the empirical central-cell distribution remains close to its pooled buffer across most of the map.
California Housing has a similarly low median central-to-buffer distance of $0.137$.
Its isolated value of $0.945$ comes from a cell with one central calibration score, even though its retrieved buffer contains $984$ scores.
The cutoff is estimated from all $984$ scores, but the cellwise \gls{cdf} used in this diagnostic is based on a single score.
Its distance from the buffer \gls{cdf} is highly sensitive to that observation.
Bike Sharing sits between these cases, with a median distance of $0.235$ and ten of its $51$ active cells holding fewer than five scores.

Concrete and Auto MPG require greater caution.
Their median central-to-buffer distances are $0.330$ and $0.471$, but $14$ of $26$ and $11$ of $16$ active cells contain fewer than five central scores.
With so few scores, the cellwise empirical \gls{cdf} can differ substantially from its population counterpart.
As a result, a large empirical \gls{ks} distance may reflect finite-sample error, a true difference between the central-cell and neighborhood score distributions, or both.
Regime~3 enlarges the retrieved buffers and removes the infinite cutoffs observed above, but it leaves $\widehat n_k$ and $\widehat F^{(k)}$ unchanged.
Consequently, it addresses cutoff finiteness without reducing the uncertainty caused by sparse central-cell estimates.

Figure~\ref{fig:maps_classification_ks} reports the same diagnostics for classification.

\begin{figure}[ht]
    \centering
    \includegraphics[width=0.92\textwidth]{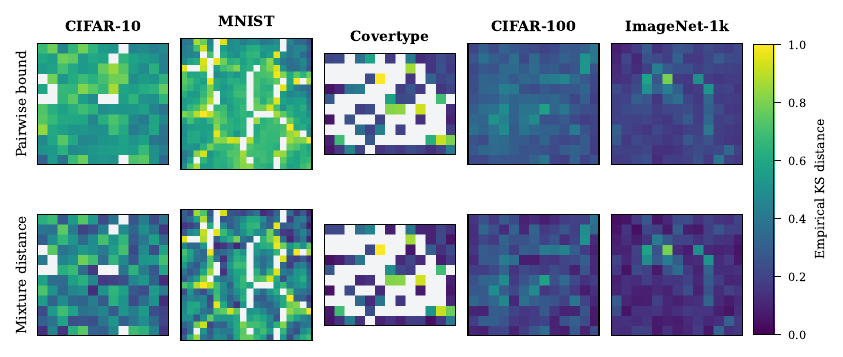}
    \caption{
        Empirical \gls{ks} bridge diagnostics for classification using calibration \gls{lac} scores.
        The top row gives $\widehat\varepsilon_k(A)$ and the bottom row gives $\widehat d_{\mathrm{mix}}(k,A)$.
        Both rows use a scale from $0$ to $1$.
        Gray cells contain no central-cell calibration score.
    }
    \label{fig:maps_classification_ks}
\end{figure}

The \gls{ks} maps complement the per-cell empirical $90$th percentiles in Figure~\ref{fig:maps_classification_predictor_score}, but measure a different property.
The percentile $\widehat q^{\mathrm{test}}_{0.9,k}$ records the empirical $90$th percentile of the observed test scores in cell $k$.
The \gls{ks} distances instead compare the calibration score distribution in the central cell with that of the retrieved buffer used by SO-SCP.
A cell can have large scores and a small \gls{ks} distance when its neighboring cells have similarly distributed scores, and vice versa.

On CIFAR-10 and MNIST, large distances extend across well-populated regions rather than only isolated sparse cells.
Their median central-cell counts are $52$ and $31$, while their median central-to-buffer distances are $0.425$ and $0.475$.
The fixed neighborhoods span cells with different empirical \gls{lac} score distributions in many parts of both maps.
On MNIST, many of the highest distances follow the same connected bands as the large empirical $90$th percentiles and the coverage deficits.
Crossing these distances with the observed SO-SCP coverage quantifies the alignment.
Among cells with calibration and test support, those below the $0.90$ target under SO-SCP have a median distance of $0.607$, compared with $0.410$ for cells at or above the target.
SO-SCP improves coverage along parts of these regions, while the remaining deficits occur where the pooled neighborhood is a poorer match for the central cell.

Covertype, CIFAR-100, and ImageNet-1k show closer central-to-buffer agreement, with median distances of $0.186$, $0.213$, and $0.139$.
CIFAR-100 makes the distinction between score level and neighborhood agreement especially clear.
Its cellwise $90$th-percentile \gls{lac} scores are close to one almost everywhere, indicating that a high local cutoff would be needed to cover $90\%$ of the observed scores in most cells.
However, its median central-to-buffer distance is $0.213$, so the high score levels are shared across neighboring cells rather than confined to a few local regions.
ImageNet-1k has the smallest median distance among the five classification datasets.
For Covertype, the five largest distances occur in cells containing only one to four calibration scores, so those isolated values carry the sparse-cell uncertainty described above.

The largest central-to-buffer distance occurs in a cell with at most four central scores on eight of the ten datasets, so isolated bright cells should always be read with the calibration hit maps.
The two exceptions, Bio and CIFAR-10, place their maxima in cells with $99$ and $63$ scores, so those two values reflect neighborhood mismatch rather than sparse cell noise.

These figures audit SO-SCP's retrieved neighborhoods with quantities available once calibration is done.
The buffer maps count the scores behind each cutoff, which governs finiteness and the sampling variability of the empirical quantile.
The \gls{ks} maps ask a different question, whether the score distribution pooled over that buffer resembles the central cell's, which is the quantity the approximate central-cell statement of Lemma~\ref{lem:bridge} depends on.
Large empirical distances do not invalidate the exact retrieved-set guarantee, and small ones do not establish cell-conditional validity.
The cellwise effects below show how the resulting SO-SCP cutoffs change coverage and output size in these same regions.

\subsubsection*{Cellwise coverage and output changes}

Figure~\ref{fig:cr_maps_effects} isolates the effect of SO calibration by subtracting \gls{scp} from SO-SCP on the same test points and the shared \gls{som} partition.
The coverage and output-size rows locate the changes introduced by the local cutoff.
They also show that SO-SCP redistributes output size across the map instead of enlarging every prediction.

\begin{figure}[ht]
    \centering
    \includegraphics[width=0.87\textwidth]{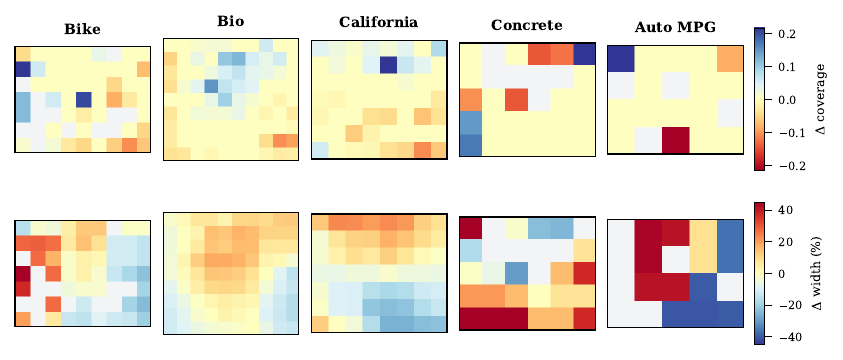}
    \par\smallskip
    \includegraphics[width=0.87\textwidth]{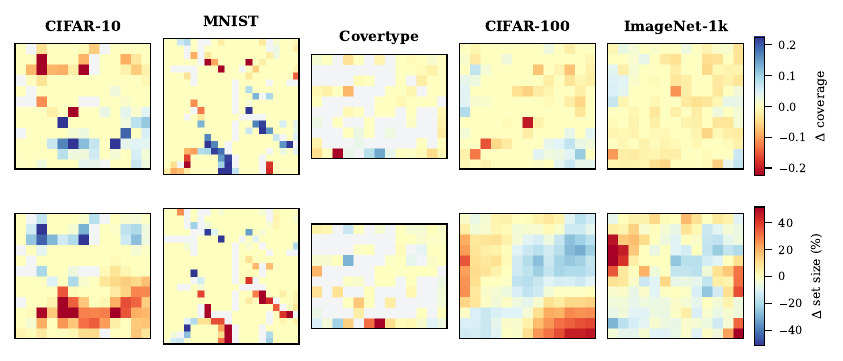}
    \caption{
        Cellwise effects, computed as SO-SCP minus \gls{scp} under Regime~2.
        The upper block reports regression with the absolute residual score, and the lower block reports classification with the \gls{lac} score.
        Within each block, the first row gives the absolute coverage change and the second gives the relative change in mean interval width or prediction-set size.
        Blue denotes higher coverage or a smaller output, red denotes lower coverage or a larger output, and yellow denotes little or no change.
        Gray cells have no test support or an undefined output-size change.
        Regression cells with unbounded intervals are identified in Figure~\ref{fig:maps_regression_size}.
    }
    \label{fig:cr_maps_effects}
\end{figure}

On the three regression datasets with adequate Regime~2 buffers, SO-SCP lowers WCovGap by $0.59$ points on Bio and $0.61$ points on California Housing, while Bike Sharing is unchanged in this fit.
These changes do not rely on a broad increase in interval width.
The corresponding mean-width changes are $+0.25\%$, $-0.87\%$, and $-0.13\%$.
The cutoff and width maps in Figures~\ref{fig:maps_regression_cutoffs} and~\ref{fig:maps_regression_size} show the same spatial adjustment in absolute target units.
Across ten seeds, WCovGap on the shared \gls{som} partition falls by $9.3\%$ on Bike Sharing, $16.2\%$ on Bio, and $6.7\%$ on California Housing.

Concrete and Auto MPG have sparse Regime~2 buffers and at least one unbounded SO-SCP interval in this fit.
Concrete improves by $1.94$ WCovGap points, while Auto MPG increases by $1.00$ point, but neither case has a finite mean SO-SCP width under this protocol, while the headline comparison uses training-routed Regime~3 for these datasets.
Figure~\ref{fig:cr_regime3} shows that this enlargement rule removes infinite cutoffs throughout the reported calibration-size experiment.

SO-SCP lowers WCovGap on four of the five classification datasets in this fit.
The reductions range from $0.32$ points on CIFAR-100 to $0.96$ points on ImageNet-1k, while the increase in dataset-level mean set size never exceeds $0.18$ labels.
The method therefore changes local set sizes without leaving the raw output scale of \gls{scp}, unlike the much larger \gls{rlcp} and Classwise sets visible in Figure~\ref{fig:maps_classification_size}.
CIFAR-10 is the sole exception in this fit, with a $0.18$-point increase.
Across ten seeds, its shared-partition WCovGap instead falls by $2.4\%$, with $6$ of $10$ seeds favoring SO-SCP, and all five classification datasets improve on average.

This pattern persists across seeds and audit partitions.
Across the eight datasets without sparse calibration buffers, WCovGap on the shared \gls{som} partitions falls by $10.5\%$ on average, with $69$ of $80$ seed pairs favoring SO-SCP, at a mean output-size change of $+2.0\%$ (Tables~\ref{tab:cr_transfer_regression} and~\ref{tab:cr_transfer_classification}).
Evaluating the same Regime~2 predictions on external $k$-means partitions also favors SO-SCP in all $30$ dataset-granularity comparisons for $K\in\{10,25,50\}$.
As a result, the improvement is not tied to the displayed fit or to one audit granularity.
The Covertype example below examines one of these spatial adjustments in detail.

\subsubsection*{Worked example on Covertype}

Covertype makes the coverage redistribution in Figure~\ref{fig:cr_maps_effects} explicit.
Figure~\ref{fig:cr_worked_example} compares the two coverage maps and locates the corresponding changes in prediction-set size.

\begin{figure}[ht]
    \centering
    \includegraphics[width=\textwidth]{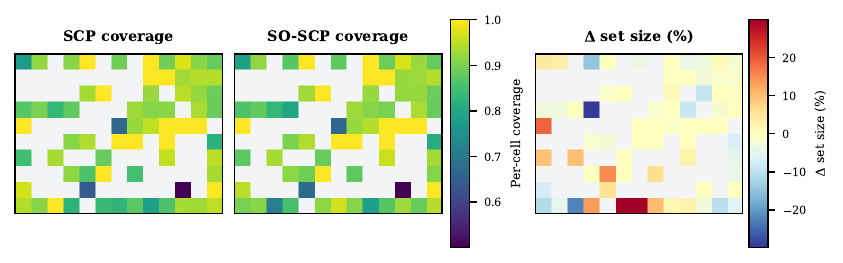}
    \caption{
        Cellwise coverage and prediction-set size on Covertype with the \gls{lac} score and Regime~2.
        The first two panels show \gls{scp} and SO-SCP coverage on a shared scale, with the $0.90$ target.
        The third panel reports the relative change in mean set size from \gls{scp} to SO-SCP.
        Red cells receive larger sets, blue cells receive smaller sets, and yellow cells change little.
        Gray cells contain no test observations or an undefined relative set-size change.
    }
    \label{fig:cr_worked_example}
\end{figure}

WCovGap weights each cell according to its test support, so the same reduction in coverage error has a larger effect in a well-populated cell than in a sparse one.
Table~\ref{tab:cr_worked_example} reports the three largest reductions on each side of the target.

\begin{table}[ht]
    \centering
    \caption{
        Six Covertype cells illustrating the displayed WCovGap reduction.
        Cells are coordinates on the $10\times13$ \gls{som} grid.
        Coordinates are read by row and column from the bottom left, so $(0,0)$ is the bottom-left cell and $(9,12)$ is the top-right cell.
        The first three rows begin below $0.90$ under \gls{scp}, and the final three begin above it.
        Support is the number of test observations in the cell.
        Coverage and mean prediction-set size are reported for both methods, with relative changes from \gls{scp} in percent.
    }
    \label{tab:cr_worked_example}
    \begin{tabular}{lcccccccc}
\toprule
 & \multicolumn{2}{c}{} & \multicolumn{3}{c}{Coverage} & \multicolumn{3}{c}{Mean set size} \\
 & SCP side & Support & SCP & SO-SCP & $\Delta$ (\%) & SCP & SO-SCP & $\Delta$ (\%) \\
\midrule
$(0, 5)$ & $<0.90$ & $232$ & $0.836$ & $0.914$ & $+9.3\%$ & $1.237$ & $1.612$ & $+30.3\%$ \\
$(3, 0)$ & $<0.90$ & $263$ & $0.848$ & $0.875$ & $+3.1\%$ & $1.122$ & $1.228$ & $+9.5\%$ \\
$(2, 7)$ & $<0.90$ & $153$ & $0.850$ & $0.876$ & $+3.1\%$ & $1.105$ & $1.170$ & $+5.9\%$ \\
$(0, 0)$ & $>0.90$ & $244$ & $0.943$ & $0.898$ & $-4.8\%$ & $1.193$ & $1.061$ & $-11.0\%$ \\
$(2, 12)$ & $>0.90$ & $150$ & $0.933$ & $0.907$ & $-2.9\%$ & $1.160$ & $1.113$ & $-4.0\%$ \\
$(0, 10)$ & $>0.90$ & $414$ & $0.935$ & $0.928$ & $-0.8\%$ & $1.309$ & $1.271$ & $-3.0\%$ \\
\bottomrule
\end{tabular}

\end{table}

On the displayed \gls{som} partition, WCovGap falls from $3.53$ to $3.04$ points, a decrease of $0.49$ points.
Of this decrease, $0.18$ points come from reducing undercoverage and $0.31$ from reducing overcoverage.
Consequently, SO-SCP reduces both undercoverage and overcoverage rather than shifting the entire map in one direction.

The selected cells illustrate these two directions of adjustment.
In cell $(0,5)$, coverage rises from $0.836$ to $0.914$ as the mean set size increases from $1.237$ to $1.612$ labels.
In cell $(0,0)$, coverage falls from $0.943$ to $0.898$ while the mean set size decreases from $1.193$ to $1.061$ labels.
Across the full test set, these local changes nearly cancel: marginal coverage moves from $0.902$ to $0.901$, and mean set size from $1.227$ to $1.228$ labels.
The lower WCovGap is not obtained by uniformly enlarging prediction sets or raising coverage everywhere.
SO-SCP changes coverage and set size locally while leaving their dataset-wide averages essentially unchanged.

The same trade-off persists across seeds.
On the external $K=25$ audit partition, SO-SCP reduces WCovGap by $29.2\%$ for a $1.1\%$ increase in mean set size.
Mondrian $k$-means and \gls{rlcp} obtain slightly larger absolute reductions on Covertype, $1.13$ and $1.21$ points compared with $0.84$ for SO-SCP, but increase mean set size by $4.1\%$ and $36.3\%$ instead of $1.1\%$ (Table~\ref{tab:cr_calibrator_tradeoff_classification}).
In the end, SO-SCP captures most of the WCovGap reduction while keeping prediction sets close to the scale of \gls{scp}.


\end{document}